\newtcolorbox{mybox}{
  colback=gray!10,
  colframe=black!80,
  boxrule=0.5pt,
  arc=4pt,
  left=6pt, right=6pt, top=6pt, bottom=6pt,
  enhanced
}
\theoremstyle{plain}
\newtheorem{theorem}{Theorem}[section]
\newtheorem{proposition}[theorem]{Proposition}
\newtheorem{lemma}[theorem]{Lemma}
\theoremstyle{definition}
\newtheorem{definition}[theorem]{Definition}
\newtheorem{Problem}[theorem]{Problem}
\theoremstyle{remark}
\newtheorem{remark}[theorem]{Remark}
\icmltitlerunning{Unifying and Optimizing Data Values for Selection via Sequential Decision-Making}
\begin{document}

\twocolumn[
\icmltitle{Unifying and Optimizing Data Values for Selection via Sequential Decision-Making}

\icmlsetsymbol{equal}{*}

\begin{icmlauthorlist}
\icmlauthor{Hongliang Chi}{rpi}
\icmlauthor{Qiong Wu}{att}
\icmlauthor{Zhengyi Zhou}{att}
\icmlauthor{Jonathan Light}{rpi}
\icmlauthor{Emily Dodwell}{att}
\icmlauthor{Yao Ma}{rpi}
\end{icmlauthorlist}

\icmlaffiliation{rpi}{Rensselaer Polytechnic Institute, Troy, NY, United States}
\icmlaffiliation{att}{AT\&T-Chief Data Office, Bedminster, NJ, United States}

\icmlcorrespondingauthor{Hongliang Chi}{hc962@cornell.edu}

\icmlkeywords{Data Valuation, Data Selection, Sequential Decision-Making}

\vskip 0.3in
]

\printAffiliationsAndNotice{}

\begin{abstract}
Data selection has emerged as a crucial downstream application of data valuation, yet the theoretical foundations for using data values in selection remain underexplored. We reformulate data selection as a sequential decision-making problem where the optimal selection sequence arises from dynamic programming, and data values can be understood as encodings of this optimal sequence. This framework unifies and reinterprets existing methods like Data Shapley through the lens of approximate dynamic programming, revealing them as myopic linear approximations to the sequential problem. We further analyze how selection optimality degrades with utility curvature under submodularity, explaining when and why these approximations fail. To bridge theory and practice, we propose an efficient bipartite graph-based surrogate that preserves submodular structure while enabling scalable greedy selection with provable guarantees. Experiments on classical ML benchmarks and large-scale LLM fine-tuning data selection demonstrate substantial improvements over existing methods. Code is publicly available at \url{https://github.com/frankhlchi/SeqDataVal}.
\end{abstract}

\section{Introduction}\label{sec:introduction}

Data plays a fundamental role in modern machine learning, with recent advances heavily dependent on massive datasets \citep{schmidhuber2015deep, lecun2015deep, hatcher2018survey}. However, not all data contributes equally to model performance, leading to the development of data valuation methods that quantify the contributions of individual training samples. The dominant approaches to data valuation are grounded in cooperative game theory \citep{shapley1953value, banzhaf1964weighted, schmeidler1969nucleolus}. In this formulation, training samples are treated as players in a cooperative game, where the utility function measures the validation performance of models trained on different data subsets. Data values are then derived by aggregating each sample's marginal contributions across various subset combinations, leading to principled methods such as Data Shapley and more \citep{ghorbani2019data, wang2023data, kwon2021beta}.

\begin{figure}[htbp]
    \centering
    \includegraphics[width=0.28\textwidth]{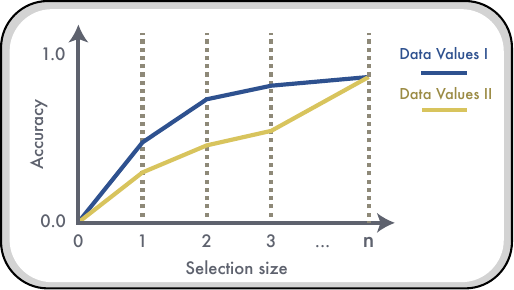}
    \caption{Data selection performance curves with two different data-value assignment methods.}
    \label{fig:selection}
    \vspace{-0.2cm}
\end{figure}

These valuation methods provide essential guidance for crucial data-centric tasks like data selection, where the goal is to identify optimal subsets of training data that maximize model performance \citep{ghorbani2019data, schoch2023data, wang2024rethinking}. The standard protocol evaluates data values through selection curves: data points are ranked by their assigned values, and model performance is measured as more points are incrementally added in descending order of their values. As shown in Figure~\ref{fig:selection}, superior data values (e.g., \textit{Data Values I}) should show both immediate efficiency through steeper initial curves and sustained effectiveness via consistently higher performance across all selection budgets over its peer \textit{Data Values II}.

Despite the clear importance of this selection objective, existing methods have not been explicitly optimized for this criterion. We address this gap by first establishing a fundamental optimization objective that captures the selection performance of data values across all possible subset sizes. This optimization problem can be naturally reformulated as a sequential decision-making process, where each selection decision builds upon previous choices, and the cumulative rewards directly and exactly correspond to selection performance across different subset sizes.

We show that this sequential perspective not only provides theoretical clarity but also naturally leads to an \textit{exact dynamic programming (DP) solution} \citep{bellman1966dynamic} for the optimal selection sequence, from which data values can be derived as sequence encodings. Furthermore, we reveal that existing methods like Data Shapley \citep{ghorbani2019data} and other semi-values \citep{wang2023data, kwon2021beta} can be understood as specific instances of \textit{myopic linear cost (reward) function approximation} in approximate dynamic programming (ADP) \citep{powell2007approximate, bertsekas2024course}. Through both theoretical and empirical analysis, we show that the effectiveness of such approximations is determined by the utility function's curvature. This insight aligns with recent observations \citep{wang2024rethinking} that Data Shapley performs particularly well on heterogeneous datasets. While our framework provides the optimal selection sequence through DP, its exact computation becomes intractable for large datasets. To bridge this gap between theoretical optimality and practical efficiency, we develop a novel approximation scheme based on bipartite graphs that preserves essential theoretical properties while enabling practical computation.

Specifically, our contributions are listed as follows:

\begin{itemize}[leftmargin=*,nosep]
\item We formulate \emph{selection curves} induced by data values as an explicit finite-horizon sequential decision problem, derive an exact DP characterization of the optimal selection sequence (from which data values arise as sequence encodings), and use this lens to reinterpret popular semi-value methods (Shapley, Banzhaf, Beta Shapley) as myopic approximate dynamic programming solutions.
\item We provide new theoretical insights by analyzing existing game-theoretic data valuation methods as myopic linear approximations under our framework. 
\item We develop an efficient approximation scheme using bipartite surrogate utility models that preserve submodular structure, yielding provable guarantees under sufficient sandwich conditions while enabling practical computation. 
\item Through comprehensive experiments, we identify significant performance gaps in existing data valuation methods compared to optimal selection strategies, and demonstrate that our approximation scheme substantially closes this gap while maintaining computational efficiency.
\end{itemize}

\section{Background and Preliminaries}\label{sec:preliminary}

\subsection{Assumptions and Notation}\label{sec:assumptions}
We summarize here the standing assumptions and notation used throughout the paper.

\paragraph{Datasets and utility functions.}
We consider a finite dataset $\mathcal{D} = \{1,\dots,n\}$ with $n = |\mathcal{D}|$.  
A (set) utility function is a map $U : 2^{\mathcal{D}} \to \mathbb{R}$, and we adopt the normalization $U(\emptyset) = 0$.
Whenever we discuss curvature or submodularity, we additionally assume $U(\{i\}) > 0$ for all $i \in \mathcal{D}$.  
For a subset $S \subseteq \mathcal{D}$ and element $i \notin S$, we use the shorthand $\Delta_i U(S) \triangleq U(S \cup \{i\}) - U(S)$ for the marginal contribution of $i$ to $S$.  
We say $U$ is \emph{monotone} if $U(A) \leq U(B)$ for all $A \subseteq B \subseteq \mathcal{D}$, and \emph{submodular} if $\Delta_i U(A) \geq \Delta_i U(B)$ for all $A \subseteq B \subseteq \mathcal{D}$ and $i \in \mathcal{D}\setminus B$.

\paragraph{Sequential selection and permutations.}
A selection order over $\mathcal{D}$ is represented by a permutation $\pi$ of $\{1,\dots,n\}$.  
We denote by $S_k^\pi \triangleq \{\pi(1),\dots,\pi(k)\}$ the size-$k$ prefix induced by $\pi$.  
Given a value function $v:\mathcal{D}\to\mathbb{R}$, we write $\pi_v$ for the permutation that sorts points in non-increasing order of $v$, and write $S_k^v \triangleq S_k^{\pi_v}$ for the size-$k$ subset selected by ranking according to $v$. When the underlying $v$ is clear from context, we simply write $S_k$.

\paragraph{Data values and marginal-contribution weights.}
Game-theoretic data values assign to each point $i\in\mathcal{D}$ a score
$v(i) = \sum_{S \subseteq \mathcal{D}\setminus\{i\}} \alpha_i(S)\,\Delta_i U(S)$,
where $\alpha_i(S) \geq 0$ are \emph{marginal-contribution weights} (which may depend on the element $i$ being valued), with normalization $\sum_{S} \alpha_i(S) = 1$.  
In semi-value based methods, these weights do not depend on $i$ and depend only on the subset size, i.e., $\alpha_i(S) = \alpha_{|S|}$, and satisfy the normalization $\sum_{k=0}^{n-1} \binom{n-1}{k} \alpha_k = 1$.

\paragraph{Curvature.}
For a normalized, monotone submodular function $U$ with $U(\{i\}) > 0$ for all $i$, the curvature $c \in [0,1]$ is defined as
$c \triangleq 1 - \min_{i \in \mathcal{D}} \frac{\Delta_i U(\mathcal{D}\setminus\{i\})}{\Delta_i U(\emptyset)}$.
Intuitively, $c = 0$ corresponds to linear (modular) utilities, while $c = 1$ corresponds to maximal diminishing returns.

\begin{remark}[Singleton-Marginal Equivalence]\label{rem:singleton}
Under our normalization $U(\emptyset) = 0$, the singleton utility and marginal contribution at empty set coincide: $U(\{i\}) = U(\{i\}) - U(\emptyset) = \Delta_i U(\emptyset)$.
This equivalence is used throughout our analysis, particularly in the curvature-related bounds of Section~\ref{sec:submodular}. Consequently, the curvature definition can be equivalently written as $c = 1 - \min_{i \in \mathcal{D}} \frac{\Delta_i U(\mathcal{D}\setminus\{i\})}{U(\{i\})}$.
\end{remark}

\subsection{Data Values and Data Selection}
\begin{definition}[Score-based Data Values]
Given a dataset $\mathcal{D} = \{x_1,...,x_n\}$ and a utility function $U: 2^\mathcal{D} \rightarrow \mathbb{R}$ that measures the performance of any subset, the goal of data valuation is to learn a value-assignment function $v: \mathcal{D} \rightarrow \mathbb{R}$ that assigns scores to individual data points based on their contribution to the overall utility. 
\end{definition}

The predominant data values leverage solutions from the game theory, known as \textit{Game-theoretic Data Values}. Please refer to Sections \ref{sec:related_dv} and \ref{sec:related_sam} for a broader review of other methods.

\begin{definition}[Game-theoretic Data Values] \label{def:game_values}
Given a dataset $\mathcal{D}$ and a utility function $U$ that measures model performance on a held-out validation set when trained on different subsets, game-theoretic data values are derived by treating data points as players in a cooperative game, where for each point $x_i$, its value $v(i)$ is computed as a weighted combination of its marginal contributions across subsets:
\[ v(i) = \sum_{S \subseteq \mathcal{D} \setminus \{i\}} \alpha_i(S)[U(S \cup \{i\}) - U(S)] \]
where $\alpha_i(S) \geq 0$ represents the marginal-contribution weight assigned to subset $S$ (which may depend on the element $i$ being valued), with normalization $\sum_{S \subseteq \mathcal{D} \setminus \{i\}} \alpha_i(S) = 1$.

An important special case is when the weights do not depend on $i$ and depend only on subset size, i.e., $\alpha_i(S) = \alpha_{|S|}$ for all $i$. Such values are called \emph{semi-values} \citep{kwon2021beta} when they additionally satisfy $\sum_{k=0}^{n-1} \binom{n-1}{k} \alpha_k = 1$. Data Shapley is a semi-value with $\alpha_{|S|}=\frac{|S|!\,(n-|S|-1)!}{n!}$, equivalently $\alpha_k=\frac{1}{n\binom{n-1}{k}}$.
Other common semi-values include:

\begin{itemize}[leftmargin=*,nosep]
    \item \textbf{Data Banzhaf}: $\alpha_k = \frac{1}{2^{n-1}}$ (uniform over all subsets)
    \item \textbf{Beta Shapley}: $\alpha_k^{(a,b)} = \frac{B(k+a,\, n-1-k+b)}{B(a,b)}$ for parameters $a,b > 0$
\end{itemize}
Another commonly used data value is the Leave-one-out (LOO) value $v_{\text{LOO}}(i) = U(\mathcal{D}) - U(\mathcal{D}\setminus\{i\})$, which corresponds to the degenerate size-based weighting $\alpha_{n-1}=1$ and $\alpha_k=0$ for $k<n-1$. (Note that on the domain $S\subseteq\mathcal{D}\setminus\{i\}$, the unique size-$(n-1)$ subset is $\mathcal{D}\setminus\{i\}$ itself, so this size-only weighting reproduces the LOO formula.) Thus LOO fits the semi-value form, but unlike Shapley, Banzhaf, and the smooth Beta-Shapley family considered in Theorem~\ref{thm:regression_AB}, LOO does not arise from the population regression characterizations of that theorem; it nonetheless satisfies the conditions for Theorem~\ref{thm:submodular}.
\end{definition}

\begin{remark}[Efficiency Property]\label{rem:efficiency}
The Shapley value satisfies the \emph{efficiency} axiom: $\sum_{i \in \mathcal{D}} v_{\text{Shap}}(i) = U(\mathcal{D}) - U(\emptyset)$. However, Banzhaf and general Beta Shapley values do \emph{not} satisfy efficiency for arbitrary utility functions. This distinction has important implications for their regression characterizations (see Theorem~\ref{thm:regression_AB}).
\end{remark}
 
Data values provide principled approaches for quantifying the importance of training samples, with data selection emerging as a crucial downstream application. The goal of data selection is to identify an optimal subset of training data that maximizes model performance. Recent work \citep{wang2024rethinking} has formulated data selection with a fixed size constraint $S^*_k = \text{argmax}_{S \subseteq \mathcal{D}, |S|=k} U(S)$, where a subset $S$ of size $k$ is selected to maximize the utility function $U$. 

The standard protocol for data selection using data values is:
\begin{definition}[Data Values for Data Selection]\label{def:selection}
Given a dataset $\mathcal{D}$ and a utility function $U: 2^\mathcal{D} \rightarrow \mathbb{R}$, a value function $v: \mathcal{D} \rightarrow \mathbb{R}$ induces a \textit{selection strategy through value ranking}: let $\pi_v$ denote the permutation that sorts samples in descending order of their values, i.e., $v(\pi_v(1)) \geq v(\pi_v(2)) \geq ... \geq v(\pi_v(n))$. The value-based selection strategy is defined as: \( S_k^v = \{\pi_v(1), ..., \pi_v(k)\} \)
\end{definition}

\subsection{Sequential Decision-Making and Markov Decision Processes}\label{sec:sequential}
Sequential decision-making problems can be systematically modeled through Markov Decision Processes (MDPs) \citep{howard1960dynamic}. An MDP is defined by a tuple $\mathcal{M}=(\mathcal{S}, \mathcal{A}, P, r)$, where $\mathcal{S}$ represents the finite state space, $\mathcal{A}$ denotes the set of possible actions, $P(s'|s,a)$ specifies the probability of transitioning to state $s'$ when taking action $a$ in state $s$, and $r: \mathcal{S} \times \mathcal{A} \rightarrow \mathbb{R}$ is the reward function.
\subsection{Approximate Dynamic Programming}
DP faces computational challenges due to the curse of dimensionality when state and action spaces grow large. Approximate Dynamic Programming (ADP) \cite{lee2004approximate, powell2007approximate, bertsekas2024course} introduces approximation techniques to address these limitations. The first key strategy in ADP involves parametric function approximation to estimate both value functions and reward functions. Instead of maintaining exact values for each state-action pair $(s,a)$, ADP employs parametric approximations like $\hat{V}(s;\theta)$ for value functions and $\hat{r}(s,a;\theta)$ for finite-horizon undiscounted reward functions, where $\theta$ represents learnable parameters in linear models \cite{powell2009you} or neural networks \cite{bertsekas1996neuro}. The second strategy involves simplified decision rules at each state that consider only immediate or limited-horizon future rewards, rather than the full backward induction required by exact dynamic programming. These approximations enable ADP to handle high-dimensional MDPs by trading off exact optimality for computational tractability while maintaining solution quality through careful approximation design. For a comprehensive discussion of ADP methods, we refer readers to Appendix~\ref{sec:related_sdm}. Among various ADP approaches to solving sequential decision-making problems, myopic reward function approximation \cite{powell2016perspectives, rempel2021review} represents the most computationally efficient strategy by combining both approximation strategies: it uses parameterized reward approximation $\hat{r}(s,a;\theta)$ and adopts the simplest decision rule $\sigma_{\text{myopic}} = \arg\max_{a \in \mathcal{A}} \hat{r}(s,a;\theta)$. This solution focuses solely on maximizing an approximated immediate reward for the nearest time period, ignoring how current decisions might impact future states.

\subsection{Submodular Optimization and Optimization from Samples}
Submodular optimization \citep{dughmi2009submodular, krause2014submodular, bilmes2022submodularity} aims to maximize set functions that exhibit diminishing returns. The most closely related domain to our work, sequential submodular optimization \citep{asadpour2023sequential, tang2024non}, focuses on optimizing over a sequence of different submodular functions, particularly targeting applications like purchase probability functions in recommendation systems. We employ submodular functions as a theoretical lens to analyze data values behavior, similar to approaches in optimization from samples literature \citep{balkanski2016power, balkanski2017minimizing, balkanski2017limitations}. For comprehensive reviews, we refer readers to Appendix~\ref{sec:related_sub} and Appendix~\ref{sec:related_ops}.

\section{Sequential Data Selection Framework}
Having introduced the preliminaries, we now present our main sequential data selection framework and introduce the sequential selection problem that forms the foundation of our approach.
\begin{Problem}[Sequential Selection Objective]\label{prob:sequential}
Given a set $\mathcal{D}$ of data points and a utility function $U: 2^\mathcal{D} \rightarrow \mathbb{R}$, the sequential selection problem seeks to find an optimal permutation $\pi^*$ that maximizes the cumulative utility across all prefix sizes:
\begin{equation}
\pi^* = \underset{\pi}{\mathrm{argmax}} \; \mathbb{E}_k\left[ U(S_k^\pi) \right] = \underset{\pi}{\mathrm{argmax}} \; \frac{1}{n}\sum_{k=1}^{n} U(S_k^\pi)
\end{equation}
where $k \sim \text{Uniform}(1,|\mathcal{D}|)$, and $\pi$ is a permutation of dataset representing the selection order.
\end{Problem}

\subsection{Sequential Data Selection as a Deterministic Markov Decision Process}
The nested constraint $S_{k-1}^\pi \subset S_k^\pi$ reveals the inherent recursive structure of Problem~\ref{prob:sequential}: each selection decision is conditioned on all previous choices, and the marginal contribution of the $k$-th selected sample depends on the composition of the existing subset. This sequential dependency naturally suggests reformulating the problem as a Deterministic Markov Decision Process (DMDP).
Specifically, this is a \emph{finite-horizon, undiscounted, deterministic} MDP with no stochastic transitions:

\begin{itemize}[leftmargin=*,nosep]
  \item \texttt{State} $s_t \subseteq \mathcal D$: the set selected so far, with $s_0=\emptyset$.
  \item \texttt{Action} $a_t \in \mathcal D\setminus s_t$: the next element to add.
  \item \texttt{Transition}: $s_{t+1}=s_t\cup\{a_t\}$ (deterministic).
  \item \texttt{Reward}: $r(s_t,a_t)=U(s_t\cup\{a_t\})$.
\end{itemize}

\noindent\textbf{Remark (Reward design).}
We define the per-step reward as the \emph{prefix utility} $U(s_{t+1})$ rather than the marginal gain $\Delta_{a_t}U(s_t)$.
This choice ensures that the cumulative reward $\sum_{t=0}^{n-1} r(s_t,a_t) = \sum_{k=1}^{n} U(S_k^\pi)$ exactly equals the area-under-the-selection-curve (AUSC) objective in Problem~\ref{prob:sequential}; using marginal gains would instead yield $U(S_n^\pi) = U(\mathcal{D})$, a constant independent of $\pi$.

\noindent For any trajectory corresponding to a permutation $\pi$, the cumulative reward is $\sum_{t=0}^{n-1} r(s_t,a_t)=\sum_{t=0}^{n-1}U(s_{t+1})=\sum_{k=1}^{n}U(S_k^\pi)$, which matches Problem~\ref{prob:sequential} exactly.
The one-step greedy (myopic) policy selects $a_t\in\arg\max_{a\in\mathcal D\setminus s_t} U(s_t\cup\{a\})= \arg\max_{a\in\mathcal D\setminus s_t}\Delta_a U(s_t)$, where $\Delta_a U(s_t)=U(s_t\cup\{a\})-U(s_t)$ is the marginal gain.
For a linear surrogate $\hat U(S)=\hat b+\sum_{i\in S}\theta_i$, we have $\Delta_a \hat U(s)=\theta_a$, so the myopic policy reduces to ranking by $\theta_a$ (i.e., data values).

To solve the sequential data selection problem, which starts from an empty set and sequentially selects data points until all data is selected, we need to analyze what factors determine the selection quality. The solution quality is fundamentally determined by two key factors (see Figure~\ref{fig:framework} for a visual overview): 

\begin{figure*}[t]
    \centering
    \includegraphics[width=0.8\textwidth]{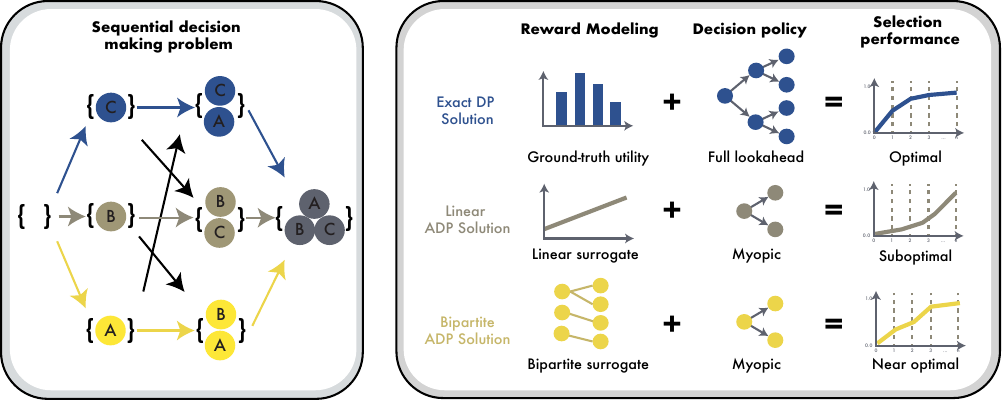}
    \caption{Framework for sequential data selection. Our framework consists of three components: (1) A sequential data decision problem formulating data selection through step-by-step decision-making. (2) Core components of any solution for this sequential problem including reward modeling and decision policies. (3) Selection performance curves showing outcomes from different reward modeling plus decision policy combinations, where exact DP achieves optimal performance.}
    \label{fig:framework}
\end{figure*}

\begin{mybox}
(1) \textbf{Reward Modeling}:  How we model and access the environment's reward signals. This could be the ground-truth utility obtained through actual model training and evaluation, or a surrogate that approximates the true utility function;\\
(2) \textbf{Decision Policy}: How we make decisions based on the reward function we used. The combination of these two components determines the quality of the induced ranking, as their interactions directly impact the selection performance across different subset sizes. 
\end{mybox}
Through this lens of utility modeling and decision-making, we will analyze how different solutions arise from specific choices of these two components.

\subsection{An Exact Solution via Dynamic Programming}\label{sec:exact_dp}
Under the DMDP formulation, our goal is to find an optimal policy that maximizes the cumulative reward starting from any state. Let $V(s)$ denote the optimal value function starting from state $s$, representing the maximum achievable cumulative reward from state $s$ onwards. According to the Bellman optimality equation \citep{bellman1966dynamic}:

\begin{equation} \label{eq:bellman}
V(s) = \begin{cases}
0, & \text{if } |s| = n \\
\begin{aligned}[t]
&\max_{a \in \mathcal{D}\setminus s} \big\{U(s \cup \{a\}) \\
&\quad \quad + V(s \cup \{a\})\big\}
\end{aligned}, & \text{otherwise}
\end{cases}
\end{equation}

where the terminal value is $0$ since no further selections can be made, and the cumulative return is accumulated through the per-step rewards $U(s\cup\{a\})$.

The optimal policy $\sigma^*(s) = {\text{argmax}}_{a \in \mathcal{D}\setminus s} \{U(s\cup\{a\})+V(s \cup \{a\})\}$ outputs the action that maximizes the sum of immediate reward and future value, naturally inducing an optimal selection sequence. This solution represents the ideal case where both components achieve optimality: (1) \textbf{Reward Modeling}: the ground-truth utility function $U$, and (2) \textbf{Decision Policy}: full-lookahead decision policy through immediate reward plus value function maximization as shown in Equation~\eqref{eq:bellman}. 

However, this exact solution faces significant computational challenges as it requires evaluating utilities for exponentially many subsets, motivating the need for efficient approximation methods.
Specifically, the state space has $2^n$ subsets and each state has up to $n$ actions, yielding $O(n \cdot 2^n)$ state-action pairs. If each utility evaluation $U(\cdot)$ is treated as an $O(1)$ oracle call, evaluating the Bellman equation requires $O(n \cdot 2^n)$ time. In practice, however, each utility evaluation may itself be expensive (e.g., requiring model retraining), making the exact DP solution computationally prohibitive.

\subsection{Approximated Solutions via Approximated Dynamic Programming}
The exact solution through dynamic programming, while theoretically optimal, faces significant computational challenges in both key components: obtaining the ground-truth utility $U(s)$ requires model retraining for exponentially many subsets, and computing the optimal value function requires evaluating all possible future states recursively. 

To address these computational challenges, we can leverage Approximate Dynamic Programming (ADP) to simplify both components. For reward modeling, instead of computing the ground-truth utility through actual model training, we can approximate it with parametric surrogate functions. For decision making, rather than considering all future states through value function optimization, we can adopt myopic policies that only consider immediate rewards. As we will show, existing data valuation methods naturally fit into this framework through specific approximation choices.

\begin{remark}[Terminology clarification]
Our use of ``approximate dynamic programming'' here refers specifically to the combination of \emph{approximate reward/utility modeling} (via linear surrogates) and \emph{myopic decision policies} (one-step greedy). This differs from the classical ADP literature \citep{bertsekas2024course}, which typically focuses on value function approximation with multi-step lookahead or rollout policies. Our framework is more precisely described as ``surrogate model + myopic policy,'' but we use the ADP terminology to emphasize the connection to sequential decision-making.
\end{remark}

\section{Analyzing Existing Data Values via ADP}\label{sec:adp_analysis}
Having established the optimal sequential selection framework, we now demonstrate how existing data valuation methods can be unified and analyzed through ADP. This perspective reveals that the ranking sequences induced by data values arise from specific approximation schemes in our sequential decision-making framework.

\begin{theorem}[Regression Views of Common Semi-Values]\label{thm:regression_AB}
Common semi-values used in data valuation coincide with regression coefficients in population regression problems. Specifically:
\begin{itemize}[leftmargin=*,nosep]
\item \textbf{Shapley} values arise from constrained weighted least squares with efficiency constraint $\sum_i\theta_i=U(\mathcal D)$ \citep{lundberg2017unified}.
\item \textbf{Banzhaf} values arise from unconstrained OLS under uniform subset sampling.
\item \textbf{Beta Shapley} values arise from weighted OLS under Beta-distributed sampling, for parameters $a,b>1$ (the boundary case $a=b=1$ reduces to Shapley and is covered by Part A).
\end{itemize}
The key distinction is that Shapley satisfies the efficiency axiom, while Banzhaf and Beta Shapley use unconstrained regression where an intercept absorbs the efficiency slack. Detailed formulations and proofs are provided in Appendix~\ref{app:regression_formulation} and~\ref{app:adp_reg_proof}.
\end{theorem}

This regression perspective reveals that these methods implicitly fit a \emph{linear surrogate} $\hat{U}(S) = \hat{b} + \sum_{i \in S} \theta_i$ to approximate the true utility function, where the coefficients $\theta_i$ correspond to data values. This insight directly connects to the ADP framework:

\begin{theorem}[Data Values as Myopic ADP Policies]\label{thm:adp_solution} 
Let $v:\mathcal D\to\mathbb R$ be any game-theoretic value that admits a linear surrogate $\hat U(S)=\hat b+\sum_{i\in\mathcal D}\hat\theta_i\,\mathbb I_{i\in S}$ with $\hat\theta_i=v(i)$.
Then the myopic policy $\pi_{\mathrm{myopic}}(s)\in\arg\max_{a\in\mathcal D\setminus s}[\hat U(s\cup\{a\})-\hat U(s)]$ generates trajectories that sort elements in non-increasing order of $v(i)$.
Equivalently, ranking by data values corresponds to myopic greedy selection under the linear surrogate.
(Proof in Appendix~\ref{app:adp_proof}; scope and limitations including LOO values discussed in Appendix~\ref{app:regression_remarks}.)
\end{theorem}

\subsection{Optimality Analysis}\label{sec:submodular}
We now analyze when existing game-theoretic data values achieve optimal sequential selection.

\begin{theorem}[Optimality Under Linear Utility]\label{thm:linear_opt}
When the utility function $U(S) = \sum_{i\in S} w_i$ is linear, semi-value based methods (Data Shapley, Beta Shapley, Data Banzhaf) achieve optimal solutions to Problem~\ref{prob:sequential}.
(Proof in Appendix~\ref{app:linear_proof}.)
\end{theorem}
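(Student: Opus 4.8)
The plan is to show that for a linear utility function, the greedy (myopic) ordering induced by semi-values coincides with the optimal ordering for Problem~\ref{prob:sequential}, and that this ordering is in fact optimal simultaneously for every cardinality $k$. First I would observe that when $U(S) = \sum_{i \in S} w_i$, the marginal contribution $U(S \cup \{i\}) - U(S) = w_i$ is independent of the subset $S$. Substituting this into the game-theoretic value formula of Definition~\ref{def:game_values}, and using the semi-value normalization $\sum_{k} w(k) = 1$, I would show that $v(i) = w_i \sum_{S \subseteq \mathcal{D}\setminus\{i\}} w(S) = w_i$ (up to a common positive constant that depends only on the weighting scheme, not on $i$). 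Hence every semi-value assigns to each point a value that is a strictly-monotonic-equivalent of its intrinsic weight $w_i$, so by the remark following Definition~\ref{def:selection} all these methods induce the \emph{same} ranking: the descending order of the $w_i$.

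Next I would establish that this descending-weight ordering is optimal. Because the utility decomposes additively, for any subset of fixed size $k$ we have $U(S_k) = \sum_{i \in S_k} w_i$, which is maximized by choosing the $k$ largest weights; that is, $S^*_k = \{\pi_v(1),\dots,\pi_v(k)\}$ where $\pi_v$ sorts the $w_i$ in descending order. The key point is that these per-$k$ optimizers are \emph{nested}: the optimal set of size $k-1$ is contained in the optimal set of size $k$, precisely because taking the top $k-1$ weights is a prefix of taking the top $k$ weights. This nestedness is exactly the feasibility constraint $S_{k-1} \subset S_k$ in Problem~\ref{prob:sequential}, so a single permutation $\pi_v$ can realize the cardinality-constrained optimum for all $k$ at once.

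Finally I would combine these two facts: since the objective $\mathbb{E}_k[U(S_k)] = \frac{1}{|\mathcal{D}|}\sum_{k=1}^{|\mathcal{D}|} U(S_k)$ is a sum of per-$k$ utilities over the feasible nested sequences, and the descending-weight ordering simultaneously maximizes each summand $U(S_k)$ over all feasible size-$k$ prefixes, it must maximize the sum. Therefore the sequence induced by any semi-value attains the maximum of Problem~\ref{prob:sequential}. I would phrase this as a term-by-term domination argument: for any competing feasible permutation $\pi'$ with sets $S'_k$, we have $U(S_k) \geq U(S'_k)$ for every $k$, and summing preserves the inequality.

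The main obstacle, and the step requiring the most care, is the reduction $v(i) = c\, w_i$ in the first paragraph: I must verify that the constant $c = \sum_{S \subseteq \mathcal{D}\setminus\{i\}} w(S)$ is indeed independent of $i$ for genuine semi-values (where weights depend only on $|S|$), so that the ranking is governed purely by $w_i$; for Banzhaf and Beta Shapley this requires checking the size-based weight structure rather than just invoking the normalization. A secondary subtlety is that ranking is only invariant under \emph{strictly} monotonic transformations, so I should confirm $c > 0$ to rule out a sign flip that would reverse the order; this holds because the semi-value weights are nonnegative and sum to one. Ties among equal $w_i$ are harmless, since any tie-breaking yields equal utility at every $k$.
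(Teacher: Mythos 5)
Your proposal is correct and follows essentially the same route as the paper's proof in Appendix~\ref{app:linear_proof}: it establishes constant marginal contributions under linearity, derives $v(i) = c\,w_i$ with $c>0$ (the paper's Lemma~\ref{lem:semi_equiv}), and concludes by term-by-term domination of $U(S_k)$ over all feasible size-$k$ subsets, summed over $k$. Your added attention to the independence of $c$ from $i$, the strict positivity of $c$, the nestedness of the per-$k$ optima, and tie-breaking are careful refinements of points the paper treats implicitly, not a different argument.
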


While linear utility represents an idealized case, real-world selection typically exhibits diminishing returns captured by submodular functions \citep{huh2022optimal}. We characterize performance degradation through the curvature parameter $c \in [0,1]$, which measures the degree of diminishing returns ($c=0$ for linear, $c=1$ for maximum diminishing returns).

\begin{theorem}[Approximation Guarantee Under Curvature]\label{thm:submodular}
Let $U$ be normalized, monotone, and submodular with curvature $c$.
For any semi-value $v$ (Shapley, Banzhaf, Beta Shapley, LOO), let $S_k^v$ be the top-$k$ elements ranked by $v$.
Then for every $k$:
\[
U(S_k^v)\ \ge\ (1-c)^2\,U(OPT_k^*),
\]
where $OPT_k^*$ is the unconstrained optimal size-$k$ subset.
(Full definitions and proof in Appendix~\ref{app:submodu_proof}.)
\end{theorem}

The $(1-c)^2$ bound degrades quadratically with curvature: when $c=0$ we recover exact optimality, while as $c \to 1$ the guarantee weakens. This explains why data valuation methods struggle on highly redundant datasets where points are nearly substitutable. The bound is worst-case and most useful for \emph{explaining failures} rather than predicting success.

\section{Efficient Approximation via Bipartite Graphs}
\label{sec:bipartite}
Our analysis reveals that exact optimal selection requires exponential computation, while existing methods sacrifice optimality through linear approximations. We now develop a practical middle ground: a bipartite graph-based surrogate that preserves submodular structure while enabling efficient greedy selection with provable guarantees.

\subsection{Bipartite Graph Construction}
We construct a bipartite graph $G = (X_{\text{train}}, X_{\text{valid}}, E)$ between training and validation sets.

\begin{definition}[Coverage-based Utility]\label{def:coverage}
For a bipartite graph $G = (X_{\text{train}}, X_{\text{valid}}, E)$ with threshold $\tau$, the coverage utility of a training subset $S \subseteq X_{\text{train}}$ is $\hat{U}(S) = |\{v \in X_{\text{valid}} : \exists u \in S, (u,v) \in E\}|/|X_{\text{valid}}|$, measuring the fraction of validation points ``covered'' by at least one selected training point.
\end{definition}

\begin{proposition}[Submodularity of Coverage]\label{prop:coverage_submod}
The coverage utility $\hat U(S)$ is normalized, monotone, and submodular. This guarantees that greedy selection achieves a $(1-1/e)$ approximation. The curvature $c \in [0,1]$ depends on data redundancy: $c < 1$ when training points have distinct coverage, while fully redundant nodes can make $c = 1$. (Details in Appendix~\ref{app:coverage_submod}.)
\end{proposition}

\begin{algorithm}[tb]
\caption{Bipartite Graph-based Data Selection}
\label{alg:bipartite}
\begin{algorithmic}[1]
\REQUIRE Training set $X_{\text{train}}$, Validation set $X_{\text{valid}}$, Distance function $d$
\ENSURE Optimal threshold $\tau^*$ and ranking $\pi$
\STATE Compute pairwise distances $D_{ij} = d(x_i^{\text{train}}, x_j^{\text{valid}})$
\STATE Initialize candidate thresholds $\mathcal{T}$ from distance distribution
\FOR{each $\tau \in \mathcal{T}$}
    \STATE Construct edges $E_\tau = \{(i,j) : D_{ij} \leq \tau \land y_i = y_j\}$
    \STATE Estimate prediction error via random subset sampling
\ENDFOR
\STATE Select $\tau^* = \arg\min_\tau$ prediction error
\STATE Run greedy selection on $G_{\tau^*}$ to obtain ranking $\pi$
\STATE \textbf{return} $\tau^*$, $\pi$
\end{algorithmic}
\end{algorithm}

\subsection{Theoretical Guarantees}

Coverage-based surrogates inherit the classical $(1-1/e)$ greedy guarantee for submodular maximization. Under idealized ``sandwich'' conditions where the surrogate uniformly approximates the true utility, this transfers to an end-to-end guarantee of $(1-\epsilon)(1-1/e)$ for budget-$k$ selection (Theorem~\ref{thm:approx_guarantee} in Appendix~\ref{app:bipartite_guarantee}).

In practice, these sufficient conditions may not hold exactly. Our empirical results instead support the practical claim that \textbf{the bipartite coverage surrogate is a significantly better predictor of the true utility than linear baselines}, which in turn leads to substantially improved greedy selection (see utility approximation quality in Table~\ref{tab:utility_approx}).

Algorithm~\ref{alg:bipartite} selects the threshold $\tau^\ast$ that minimizes the mean squared error between the coverage surrogate $\hat U(S)$ and the measured utility $U(S)$ over randomly sampled subsets $S$.
While our bipartite approximation does not achieve the optimal performance of exact DP, our experiments demonstrate that it outperforms existing methods while maintaining computational efficiency.

\section{Data Values for Selection: From Reward Functions to Value Functions}
Our framework reveals a fundamental insight: data values for selection should arise from value functions rather than reward functions. Through the optimal policy derived in Section~\ref{sec:exact_dp}, we can define optimal data values as $v^*(i) = n - t^*(i)$, where $t^*(i)$ represents the optimal selection step for sample $i$. Unlike traditional game-theoretic data values (e.g., Shapley) that aggregate marginal contributions via fixed weighting schemes, $v^*(i)$ is an \emph{encoding of the optimal selection sequence}: the DP solution yields an optimal permutation $\pi^*$, and we simply record each sample's position in this ordering. This value-function perspective naturally captures sequential dependencies that myopic methods miss. The encoding is not unique since any monotonic transform induces the same ranking, and when multiple optimal permutations exist, $t^*(i)$ depends on tie-breaking. While exact computation remains challenging due to exponential state space, this perspective suggests that approximate methods like lookahead policies \citep{bertsekas2024course} offer promising directions beyond myopic selection.

\section{Experiment}
For evaluation, we compare our approach \texttt{Bipartite} with nine baseline data valuation methods including  \texttt{Influence Function} \citep{koh2017understanding}, \texttt{Data Shapley} \citep{ghorbani2019data}, \texttt{Beta Shapley} \citep{kwon2021beta}, \texttt{Data Banzhaf} \citep{wang2023data}, \texttt{AME} \citep{lin2022measuring}, \texttt{DVRL} \citep{yoon2020data}, \texttt{DataOob} \citep{kwon2023data}, \texttt{Leave-One-Out (LOO)} and \texttt{Random}, all implemented in OpenDataVal \citep{jiang2023opendataval} and other public dataset sources. We conduct experiments on eight diverse datasets from OpenML \citep{feurer2021openml}, following the standard data valuation evaluation protocol that generates selection curves by iteratively adding points based on their assigned values. Unless otherwise stated, all classical OpenML experiments are averaged over 20 independent runs with 1000 model retraining steps except for the \texttt{DynamicProgramming} method (Algorithm \ref{alg:optimal_value}), which is used for optimality verification in sequential selection. Detailed experimental settings and dataset descriptions are provided in Appendix \ref{app:exp_details}.

\subsection{RQ1: How Close are Existing Data Values to Optimal Sequential Selection?}\label{sec:rq1}
Following the experimental settings detailed in Appendix \ref{app:rq1_exp}, our quantitative analysis reveals substantial performance gaps between existing methods and the optimal sequential selection method \texttt{DynamicProgramming}, as summarized in Table \ref{tab:dp_gap}.  Across all datasets, existing methods consistently underperform optimal selection by 8.39\% to 23.76\%. The largest gap appears in structured datasets, particularly bbc-embeddings where the best baseline falls short by 23.76\%. For detailed analysis of selection curves and method-specific behaviors across different data budgets, we refer readers to Appendix \ref{app:rq1_detail}.

\begin{table*}[t]
\centering
\caption{Performance comparison between optimal (\texttt{DynamicProgramming}) and existing methods across different datasets. We report optimal and best baseline performance, where $\Delta$ (\%) shows relative performance drop from optimal.}
\label{tab:dp_gap}
\begin{adjustbox}{width=0.9\textwidth}
\begin{tabular}{l|cccccccc}
\toprule
\textbf{Dataset} & \textbf{2dplanes} & \textbf{nomao} & \textbf{bbc-embeddings} & \textbf{MiniBooNE} & \textbf{digits} & \textbf{election} & \textbf{electricity} & \textbf{fried} \\
\midrule
\texttt{Optimal DP} & 
\phantom{0}0.741\textsuperscript{\scriptsize{$\pm$0.082}} & 
\phantom{0}0.842\textsuperscript{\scriptsize{$\pm$0.131}} & 
\phantom{0}0.698\textsuperscript{\scriptsize{$\pm$0.165}} & 
\phantom{0}0.753\textsuperscript{\scriptsize{$\pm$0.068}} & 
\phantom{0}0.177\textsuperscript{\scriptsize{$\pm$0.213}} & 
\phantom{0}0.549\textsuperscript{\scriptsize{$\pm$0.227}} & 
\phantom{0}0.681\textsuperscript{\scriptsize{$\pm$0.059}} & 
\phantom{0}0.736\textsuperscript{\scriptsize{$\pm$0.072}} \\
Best Baseline & \texttt{BetaShap} & \texttt{DataShap} & \texttt{DataShap} & \texttt{DataShap} & \texttt{AME} & \texttt{Influence} & \texttt{BetaShap} & \texttt{Random} \\
$\Delta$ (\%) & -9.62\% & -11.97\% & -23.76\% & -9.29\% & -17.22\% & -17.54\% & -9.66\% & -8.39\% \\
\bottomrule
\end{tabular}
\end{adjustbox}
\end{table*}

\subsection{RQ2: How Does Curvature Impact the Performance of Game-theoretic Data Values?}\label{sec:rq2_curvature} 
To empirically examine our theoretical analysis of utility curvature's impact on data valuation methods, we construct a controlled experimental framework using iterative message-passing. Given a dataset with class labels, we implement an iterative feature aggregation process parameterized by a propagation proportion $p \in [0,1]$. At each iteration, each data point $x_i$ updates its features according to:
$$ x_i^{new} = (1-p)x_i + p\cdot\text{mean}({x_j | j \in \mathcal{N}_i}) $$
where $\mathcal{N}_i$ represents the set of within-class neighbors of point $i$. This mechanism provides control over data point substitutability, which serves as a proxy for the utility function's curvature: at $p=0$ (no message-passing), points maintain their original distinctive features (low substitutability, corresponding to low curvature), while at $p=1$ (full message-passing), points within each class converge through complete feature averaging (high substitutability, corresponding to high curvature). Through systematic variation of $p$, we examine the impact of increasing feature similarity on both the mean performance and selection curves of different valuation methods.

\begin{figure}[h]
    \centering
    \includegraphics[width=0.43\textwidth]{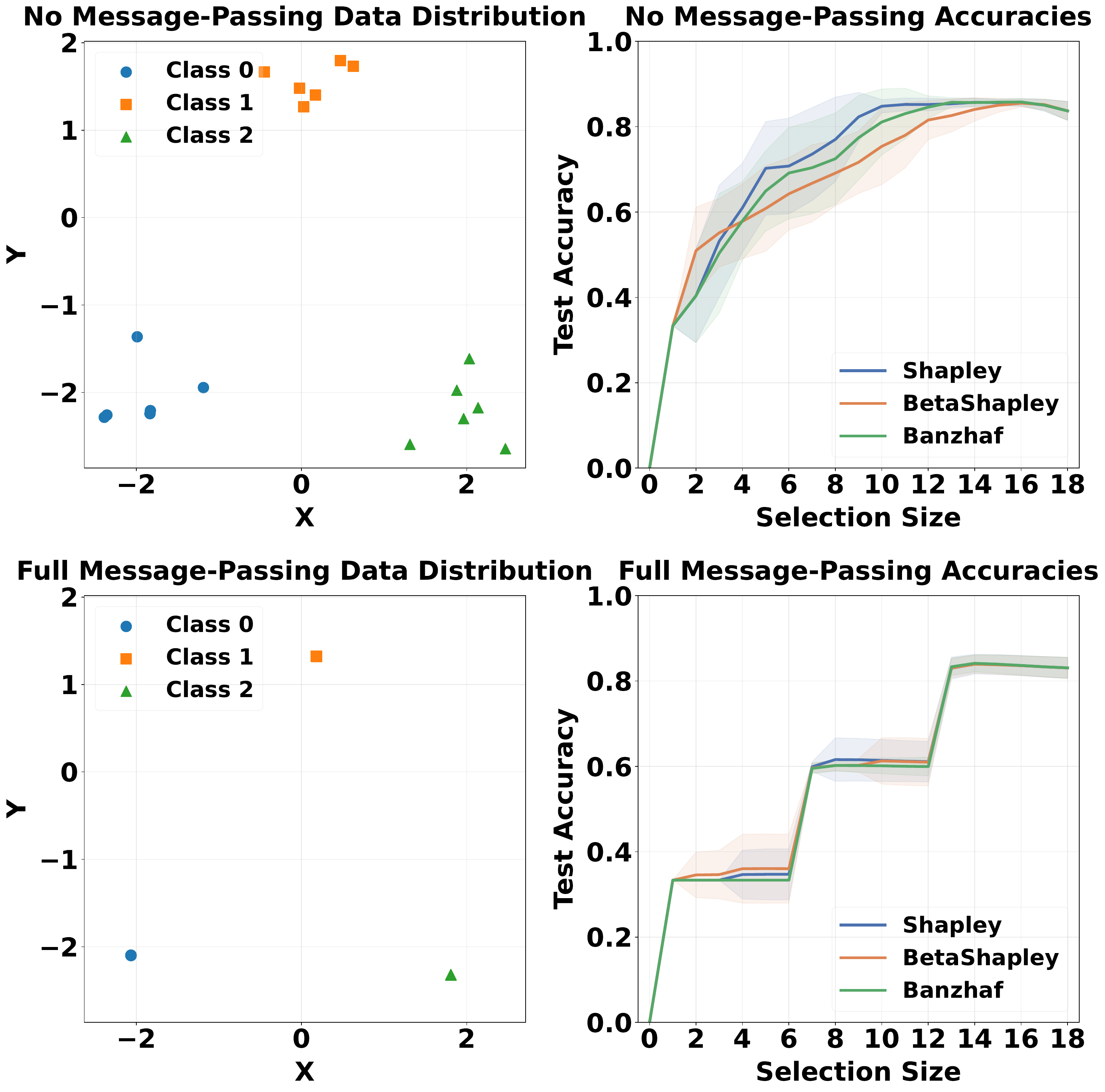}
    \caption{As within-class feature aggregation increases (from top to bottom with propagation proportion from 0.0 to 1.0), points become increasingly substitutable and mean accuracies decrease: Shapley (0.738→0.595), BetaShapley (0.706→0.597), and Banzhaf (0.720→0.590). Using substitutability as a proxy for curvature, this degradation pattern is consistent with the $(1-c)^2$ bound in Theorem \ref{thm:submodular}.}
    \label{fig:dyn_small}
\end{figure}

Our results in Figure~\ref{fig:dyn_small} demonstrate three key findings that are consistent with the theoretical predictions (more detailed six propagation steps with step size 0.2 analysis in Appendix \ref{app:curvature_exp}): (1) under low substitutability ($p=0.0$), all methods \texttt{DataShap}, \texttt{BetaShap}, and \texttt{Banzhaf} achieve strong performance with mean accuracy above 0.70; (2) as substitutability increases through higher propagation proportions (serving as a curvature proxy), we observe systematic performance degradation across all methods; and (3) the convergence in performance between these game-theoretic methods at high substitutability is consistent with our analysis that all these approaches face similar fundamental limitations under strong substitution effects. Note that we do not directly compute the true curvature $c$; rather, we use substitutability as a qualitative proxy that aligns with the theoretical framework.

\begin{figure*}[t]
    \centering
    \vspace{-0.5em}
    \includegraphics[width=0.8\linewidth]{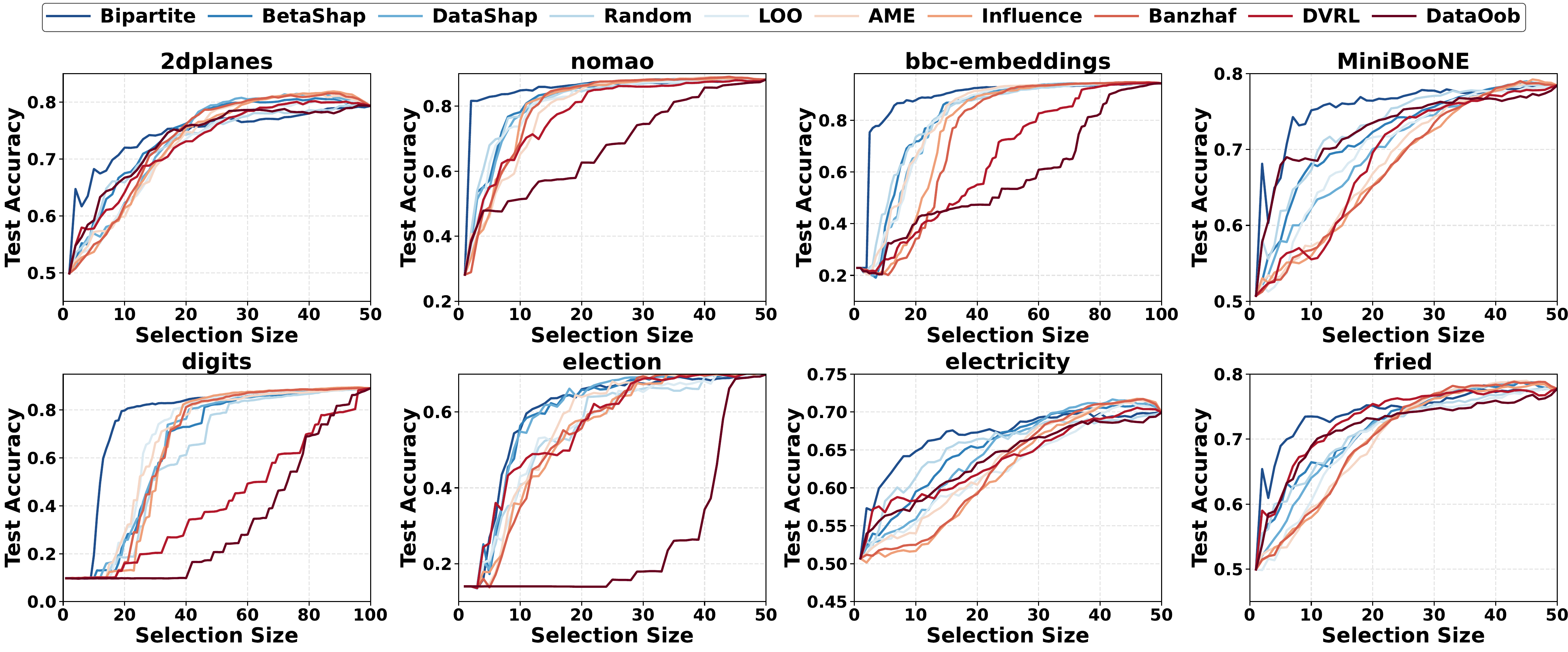}
    \caption{\small Evaluation of our proposed bipartite-based method against baselines on eight datasets.}
    \label{fig:appr}
\end{figure*}

\begin{table*}[t]
\centering
\caption{Performance comparison for large-scale data selection (budget size 500) across 20 independent runs. We report the top two performers for each dataset, with accuracy ± standard deviation. The Average column ranks methods by their per-method average over datasets rather than averaging the preceding winner cells. \textbf{Bold} indicates best performance.}
\label{tab:selection_results_500}
\begin{adjustbox}{width=0.9\textwidth}
\begin{tabular}{l|cccccccc|c}
\toprule
\textbf{Method} & \textbf{2dplanes} & \textbf{nomao} & \textbf{bbc-embeddings} & \textbf{MiniBooNE} & \textbf{digits} & \textbf{election} & \textbf{electricity} & \textbf{fried} & \textbf{Average} \\
\midrule
\texttt{1st Place} & 
\texttt{DataOob} & 
\texttt{DataOob} & 
\texttt{Bipartite} & 
\texttt{Bipartite} & 
\texttt{Bipartite} & 
\texttt{Bipartite}* & 
\texttt{Bipartite}* & 
\texttt{DataOob} & 
\texttt{Bipartite} \\
Performance & 
\textbf{0.825}\textsuperscript{\scriptsize{$\pm$0.039}} & 
\textbf{0.902}\textsuperscript{\scriptsize{$\pm$0.060}} & 
\textbf{0.946}\textsuperscript{\scriptsize{$\pm$0.076}} & 
\textbf{0.804}\textsuperscript{\scriptsize{$\pm$0.033}} & 
\textbf{0.901}\textsuperscript{\scriptsize{$\pm$0.130}} & 
\textbf{0.715}\textsuperscript{\scriptsize{$\pm$0.078}} & 
\textbf{0.722}\textsuperscript{\scriptsize{$\pm$0.032}} & 
\textbf{0.826}\textsuperscript{\scriptsize{$\pm$0.036}} & 
\textbf{0.828} \\
\midrule
\texttt{2nd Place} & 
\texttt{BetaShap} & 
\texttt{Random} & 
\texttt{Random} & 
\texttt{BetaShap} & 
\texttt{Random} & 
\texttt{-} & 
\texttt{-} & 
\texttt{Bipartite} & 
\texttt{BetaShap} \\
Performance & 
0.821\textsuperscript{\scriptsize{$\pm$0.048}} & 
0.899\textsuperscript{\scriptsize{$\pm$0.050}} & 
0.935\textsuperscript{\scriptsize{$\pm$0.113}} & 
0.801\textsuperscript{\scriptsize{$\pm$0.045}} & 
0.875\textsuperscript{\scriptsize{$\pm$0.193}} & 
- & 
- & 
0.820\textsuperscript{\scriptsize{$\pm$0.035}} & 
0.821 \\
\bottomrule
\multicolumn{10}{l}{\scriptsize{*For election: \texttt{LOO}, \texttt{DataShap}, \texttt{BetaShap}, and \texttt{Bipartite} all achieve 0.715. For electricity: \texttt{DataOob} and \texttt{Bipartite} both achieve 0.722.}}
\end{tabular}
\end{adjustbox}
\end{table*}

\subsection{RQ3: How Effective is Bipartite-based Approximation for Data Selection?}\label{sec:rq4_selection}
Following the experimental settings detailed in Appendix \ref{app:rq3_exp}, the selection curves in Figure~\ref{fig:appr} demonstrate the clear advantages of our bipartite-based approach \texttt{Bipartite}. On structured datasets like bbc-embeddings, our method exhibits remarkably steep initial performance improvements, achieving over 60\% accuracy with just 20 samples, ahead of the baselines in this early-selection regime. This pattern of superior early-stage selection is consistently observed across different datasets.

The performance advantage is particularly pronounced on complex datasets with heterogeneous feature distributions. In the digits dataset, our method demonstrates exceptional sample efficiency, reaching 80\% accuracy with just 25 samples, while baseline methods require nearly twice as many samples to achieve comparable performance. Similar patterns emerge in the nomao dataset, where our method shows a clear advantage in the crucial early stages where efficient selection is most valuable, and attains the highest average accuracy over the selection curve. Even on relatively simpler datasets like 2dplanes and fried, our method shows noticeable improvements in selection efficiency. The curves reveal not only faster initial accuracy gains but also more stable progression, with fewer fluctuations compared to baseline methods. The electricity dataset results further emphasize this stability, where our method shows a strong early-stage advantage and attains the highest average accuracy. To quantify these observations, we present detailed average performance metrics for the selection curves of Figure~\ref{fig:appr} in Table~\ref{tab:selection_results}.

\textbf{Large-scale Data Selection Effectiveness.} To further validate the effectiveness on large-scale data selection, we extended our evaluation to larger selection budgets of 500 data points. As shown in Table~\ref{tab:selection_results_500}, our \texttt{Bipartite} method achieves the highest average accuracy (0.828) across all datasets. Notably, it ranks first on three datasets (bbc-embeddings, MiniBooNE, digits) and ties for first on two others (election, electricity). The performance gaps between methods narrow at larger budgets, as expected when selecting a significant portion of the dataset. Yet our method consistently maintains competitive or superior performance. Traditional methods like \texttt{DataShap} and \texttt{BetaShap} perform competitively (averaging 0.820 and 0.821, respectively) compared to learning-based approaches like \texttt{DVRL} (0.758), aligning with our theoretical analysis that game-theoretic methods maintain reliable performance when data substitutability effects diminish at larger selection sizes.

\textbf{Utility Approximation Performance.} To validate our bipartite model's effectiveness as a utility approximation, we compare its approximation quality against standard baselines including linear regression, MLP, and game-theoretic methods. We evaluate the surrogate actually used by each method on the same sampled subsets: for linear and game-theoretic baselines, this is the linear surrogate $\hat U(S)=\hat b+\sum_{i\in S}\theta_i$; for \texttt{Bipartite}, it is the learned coverage surrogate; and for MLP, it is the fitted nonlinear predictor. When a method is characterized by centered-feature regression (e.g., Banzhaf or Beta Shapley in Theorem~\ref{thm:regression_AB}), we use the equivalent uncentered parameterization from Remark~\ref{rem:reparam}, and for fairness fit only the intercept $\hat b$ by least squares on the same sampled subsets (this does not change the induced ranking but is necessary for comparing approximation error). As shown in Table~\ref{tab:utility_approx}, our method achieves significantly lower approximation error (MSE: 0.019) compared to all baselines, with the next best method (linear regression) having more than four times higher error (MSE: 0.089). This validates our theoretical insight that the bipartite coverage structure effectively captures the underlying utility relationships while maintaining computational efficiency.

\begin{table}[h]
\caption{Utility approximation error of each method's surrogate used for selection.
Linear and game-theoretic rows use $\hat U(S)=\hat b+\sum_{i\in S}\theta_i$ with the intercept $\hat b$ fit by least squares; \texttt{Bipartite} uses its coverage surrogate and MLP uses its fitted nonlinear predictor. Lower is better.}
\label{tab:utility_approx}
\centering
\small
\begin{tabular}{lcc|cc}
\toprule
\textbf{Method} & \multicolumn{2}{c|}{\textbf{Test}} & \multicolumn{2}{c}{\textbf{Train}} \\
\cmidrule(lr){2-3} \cmidrule(lr){4-5}
 & \textbf{MAE} & \textbf{MSE} & \textbf{MAE} & \textbf{MSE} \\
\midrule
\texttt{Bipartite}    & \textbf{0.102} & \textbf{0.019} & \textbf{0.104} & \textbf{0.020} \\
Linear                & 0.250          & 0.089          & 0.239          & 0.083          \\
\texttt{Banzhaf}      & 0.303          & 0.128          & 0.298          & 0.121          \\
\texttt{BetaShap}     & 1.119          & 2.293          & 1.118          & 2.325          \\
\texttt{DataShap}     & 1.145          & 2.401          & 1.144          & 2.431          \\
MLP                   & 0.301          & 0.124          & 0.307          & 0.124          \\
\bottomrule
\end{tabular}
\end{table}

\textbf{LLM Fine-tuning Data Selection.} Beyond classical ML benchmarks, we evaluate on large-scale LLM fine-tuning using the DATE-LM benchmark \citep{jiao2026datelm}. We adapt our bipartite approach to this setting as \texttt{BipCov}, which constructs the bipartite graph using RAG-style embeddings between pool instructions and task reference examples (details in Appendix~\ref{app:datelm}). Under the official DATE-LM pipeline (200k instruction pool, select 10k examples, LoRA fine-tune \texttt{Llama-3.1-8B}, evaluate on \texttt{MMLU}/\texttt{GSM8K}/\texttt{BBH}), \texttt{BipCov} achieves 63.89$\pm$0.34\% average accuracy (mean$\pm$std over three seeds), improving over the strong baseline \texttt{RDS+} (62.88$\pm$0.25\%) and \texttt{Random} (62.82$\pm$0.31\%). Unlike similarity-based methods that score each example independently, \texttt{BipCov} first prioritizes reference coverage and then falls back to reference similarity after binary coverage saturates, yielding a coverage-prioritized ordering for instruction selection.

\section{Conclusion}
We establish a theoretical framework unifying data valuation and sequential decision-making, showing that existing game-theoretic methods are myopic linear approximations with fundamental limitations under high utility curvature. To bridge the gap, we develop an effective bipartite graph-based utility approximation and data selection method. 

\section*{Acknowledgements}
This research was supported by the National Science Foundation (NSF) under grant numbers NSF2406647 and NSF2406648. It was also supported by the National Artificial Intelligence Research Resource (NAIRR) Pilot and the Delta advanced computing and data resource, which is supported by the National Science Foundation under award NSF-OAC-2005572. This work was also supported by IBM through the IBM-Rensselaer Future of Computing Research Collaboration.

\section*{Impact Statement}
This paper presents work whose goal is to advance the field of machine learning. There are many potential societal consequences of our work, none of which we feel must be specifically highlighted here.

\bibliography{reference}

\begin{thebibliography}{94}
\providecommand{\natexlab}[1]{#1}
\providecommand{\url}[1]{\texttt{#1}}
\expandafter\ifx\csname urlstyle\endcsname\relax
  \providecommand{\doi}[1]{doi: #1}\else
  \providecommand{\doi}{doi: \begingroup \urlstyle{rm}\Url}\fi

\bibitem[Amiri et~al.(2023)Amiri, Berdoz, and Raskar]{amiri2023fundamentals}
Amiri, M.~M., Berdoz, F., and Raskar, R.
\newblock Fundamentals of task-agnostic data valuation.
\newblock In \emph{Proceedings of the AAAI Conference on Artificial
  Intelligence}, volume~37, pp.\  9226--9234, 2023.

\bibitem[Asadpour et~al.(2023)Asadpour, Niazadeh, Saberi, and
  Shameli]{asadpour2023sequential}
Asadpour, A., Niazadeh, R., Saberi, A., and Shameli, A.
\newblock Sequential submodular maximization and applications to ranking an
  assortment of products.
\newblock \emph{Operations Research}, 71\penalty0 (4):\penalty0 1154--1170,
  2023.

\bibitem[Balkanski \& Singer(2017)Balkanski and
  Singer]{balkanski2017minimizing}
Balkanski, E. and Singer, Y.
\newblock Minimizing a submodular function from samples.
\newblock \emph{Advances in Neural Information Processing Systems}, 30, 2017.

\bibitem[Balkanski et~al.(2016)Balkanski, Rubinstein, and
  Singer]{balkanski2016power}
Balkanski, E., Rubinstein, A., and Singer, Y.
\newblock The power of optimization from samples.
\newblock \emph{Advances in Neural Information Processing Systems}, 29, 2016.

\bibitem[Balkanski et~al.(2017{\natexlab{a}})Balkanski, Rubinstein, and
  Singer]{balkanski2017limitations}
Balkanski, E., Rubinstein, A., and Singer, Y.
\newblock The limitations of optimization from samples.
\newblock In \emph{Proceedings of the 49th annual acm sigact symposium on
  theory of computing}, pp.\  1016--1027, 2017{\natexlab{a}}.

\bibitem[Balkanski et~al.(2017{\natexlab{b}})Balkanski, Syed, and
  Vassilvitskii]{balkanski2017statistical}
Balkanski, E., Syed, U., and Vassilvitskii, S.
\newblock Statistical cost sharing.
\newblock \emph{Advances in Neural Information Processing Systems}, 30,
  2017{\natexlab{b}}.

\bibitem[Banzhaf~III(1965)]{banzhaf1964weighted}
Banzhaf~III, J.~F.
\newblock Weighted voting doesn't work: A mathematical analysis.
\newblock \emph{Rutgers L. Rev.}, 19\penalty0 (2):\penalty0 317--343, 1965.

\bibitem[Bellman(1966)]{bellman1966dynamic}
Bellman, R.
\newblock Dynamic programming.
\newblock \emph{Science}, 153\penalty0 (3731):\penalty0 34--37, 1966.

\bibitem[Bertsekas(2024)]{bertsekas2024course}
Bertsekas, D.
\newblock \emph{A course in reinforcement learning}.
\newblock Athena Scientific, 2024.

\bibitem[Bertsekas \& Tsitsiklis(1996)Bertsekas and
  Tsitsiklis]{bertsekas1996neuro}
Bertsekas, D. and Tsitsiklis, J.~N.
\newblock \emph{Neuro-Dynamic Programming}.
\newblock Athena Scientific, 1996.

\bibitem[Bilmes(2022)]{bilmes2022submodularity}
Bilmes, J.
\newblock Submodularity in machine learning and artificial intelligence.
\newblock \emph{arXiv preprint arXiv:2202.00132}, 2022.

\bibitem[Buchbinder et~al.(2014)Buchbinder, Feldman, Naor, and
  Schwartz]{buchbinder2014submodular}
Buchbinder, N., Feldman, M., Naor, J., and Schwartz, R.
\newblock Submodular maximization with cardinality constraints.
\newblock In \emph{Proceedings of the twenty-fifth annual ACM-SIAM symposium on
  Discrete algorithms}, pp.\  1433--1452. SIAM, 2014.

\bibitem[Buchbinder et~al.(2015)Buchbinder, Feldman, Seffi, and
  Schwartz]{buchbinder2015tight}
Buchbinder, N., Feldman, M., Seffi, J., and Schwartz, R.
\newblock A tight linear time (1/2)-approximation for unconstrained submodular
  maximization.
\newblock \emph{SIAM Journal on Computing}, 44\penalty0 (5):\penalty0
  1384--1402, 2015.

\bibitem[Candillier \& Lemaire(2012)Candillier and
  Lemaire]{candillier2012design}
Candillier, L. and Lemaire, V.
\newblock Design and analysis of the nomao challenge active learning in the
  real-world.
\newblock In \emph{Proceedings of the ALRA: active learning in real-world
  applications, workshop ECML-PKDD}, pp.\  1--15. Citeseer, 2012.

\bibitem[Charnes et~al.(1988)Charnes, Golany, Keane, and
  Rousseau]{charnes1988extremal}
Charnes, A., Golany, B., Keane, M., and Rousseau, J.
\newblock Extremal principle solutions of games in characteristic function
  form: core, chebychev and shapley value generalizations.
\newblock In \emph{Econometrics of planning and efficiency}, pp.\  123--133.
  Springer, 1988.

\bibitem[Chen et~al.(2020)Chen, Sun, Zhang, and Zhang]{chen2020optimization}
Chen, W., Sun, X., Zhang, J., and Zhang, Z.
\newblock Optimization from structured samples for coverage functions.
\newblock In \emph{International Conference on Machine Learning}, pp.\
  1715--1724. PMLR, 2020.

\bibitem[Chi et~al.(2024)Chi, Jin, Aggarwal, and Ma]{chi2024precedence}
Chi, H., Jin, W., Aggarwal, C., and Ma, Y.
\newblock Precedence-constrained winter value for effective graph data
  valuation.
\newblock \emph{arXiv preprint arXiv:2402.01943}, 2024.

\bibitem[Chi et~al.(2025)Chi, Wu, Zhou, and Ma]{chi2025shapley}
Chi, H., Wu, Q., Zhou, Z., and Ma, Y.
\newblock Shapley-guided utility learning for effective graph inference data
  valuation.
\newblock \emph{arXiv preprint arXiv:2503.18195}, 2025.

\bibitem[Cobbe et~al.(2021)Cobbe, Kosaraju, Bavarian, Chen, Jun, Kaiser,
  Plappert, Tworek, Hilton, Nakano, et~al.]{cobbe2021training}
Cobbe, K., Kosaraju, V., Bavarian, M., Chen, M., Jun, H., Kaiser, L., Plappert,
  M., Tworek, J., Hilton, J., Nakano, R., et~al.
\newblock Training verifiers to solve math word problems.
\newblock \emph{arXiv preprint arXiv:2110.14168}, 2021.

\bibitem[Covert et~al.(2024)Covert, Kim, Lee, Zou, and
  Hashimoto]{covert2024stochastic}
Covert, I., Kim, C., Lee, S.-I., Zou, J.~Y., and Hashimoto, T.~B.
\newblock Stochastic amortization: A unified approach to accelerate feature and
  data attribution.
\newblock \emph{Advances in Neural Information Processing Systems},
  37:\penalty0 4374--4423, 2024.

\bibitem[Djolonga \& Krause(2014)Djolonga and Krause]{djolonga2014map}
Djolonga, J. and Krause, A.
\newblock From map to marginals: Variational inference in bayesian submodular
  models.
\newblock \emph{Advances in Neural Information Processing Systems}, 27, 2014.

\bibitem[Dughmi(2009)]{dughmi2009submodular}
Dughmi, S.
\newblock Submodular functions: Extensions, distributions, and algorithms. a
  survey.
\newblock \emph{arXiv preprint arXiv:0912.0322}, 2009.

\bibitem[Engstrom et~al.(2024)Engstrom, Feldmann, and Madry]{engstrom2024dsdm}
Engstrom, L., Feldmann, A., and Madry, A.
\newblock Dsdm: Model-aware dataset selection with datamodels.
\newblock \emph{arXiv preprint arXiv:2401.12926}, 2024.

\bibitem[Feurer et~al.(2021)Feurer, Van~Rijn, Kadra, Gijsbers, Mallik, Ravi,
  M{\"u}ller, Vanschoren, and Hutter]{feurer2021openml}
Feurer, M., Van~Rijn, J.~N., Kadra, A., Gijsbers, P., Mallik, N., Ravi, S.,
  M{\"u}ller, A., Vanschoren, J., and Hutter, F.
\newblock Openml-python: an extensible python api for openml.
\newblock \emph{Journal of Machine Learning Research}, 22\penalty0
  (100):\penalty0 1--5, 2021.

\bibitem[Fujishige(2005)]{fujishige2005submodular}
Fujishige, S.
\newblock \emph{Submodular functions and optimization}, volume~58.
\newblock Elsevier, 2005.

\bibitem[Gama et~al.(2004)Gama, Medas, Castillo, and
  Rodrigues]{gama2004learning}
Gama, J., Medas, P., Castillo, G., and Rodrigues, P.
\newblock Learning with drift detection.
\newblock In \emph{Brazilian symposium on artificial intelligence}, pp.\
  286--295. Springer, 2004.

\bibitem[Garrido-Lucero et~al.(2023)Garrido-Lucero, Heymann, Vono, Loiseau, and
  Perchet]{garrido2023shapley}
Garrido-Lucero, F., Heymann, B., Vono, M., Loiseau, P., and Perchet, V.
\newblock Du-shapley: A shapley value proxy for efficient dataset valuation.
\newblock \emph{arXiv preprint arXiv:2306.02071}, 2023.

\bibitem[Ghorbani \& Zou(2019)Ghorbani and Zou]{ghorbani2019data}
Ghorbani, A. and Zou, J.
\newblock Data shapley: Equitable valuation of data for machine learning.
\newblock In \emph{International conference on machine learning}, pp.\
  2242--2251. PMLR, 2019.

\bibitem[Greene \& Cunningham(2006)Greene and Cunningham]{greene2006practical}
Greene, D. and Cunningham, P.
\newblock Practical solutions to the problem of diagonal dominance in kernel
  document clustering.
\newblock In \emph{Proceedings of the 23rd international conference on Machine
  learning}, pp.\  377--384, 2006.

\bibitem[Gu et~al.(2024)Gu, Dong, Wang, Hao, Dong, Wei, and Huang]{gu2024data}
Gu, Y., Dong, L., Wang, H., Hao, Y., Dong, Q., Wei, F., and Huang, M.
\newblock Data selection via optimal control for language models.
\newblock \emph{arXiv preprint arXiv:2410.07064}, 2024.

\bibitem[Hatcher \& Yu(2018)Hatcher and Yu]{hatcher2018survey}
Hatcher, W.~G. and Yu, W.
\newblock A survey of deep learning: Platforms, applications and emerging
  research trends.
\newblock \emph{IEEE access}, 6:\penalty0 24411--24432, 2018.

\bibitem[Hendrycks et~al.(2021)Hendrycks, Burns, Basart, Zou, Mazeika, Song,
  and Steinhardt]{hendrycks2021mmlu}
Hendrycks, D., Burns, C., Basart, S., Zou, A., Mazeika, M., Song, D., and
  Steinhardt, J.
\newblock Measuring massive multitask language understanding, 2021.
\newblock URL \url{https://arxiv.org/abs/2009.03300}.

\bibitem[Howard(1960)]{howard1960dynamic}
Howard, R.~A.
\newblock \emph{Dynamic Programming and Markov Processes}.
\newblock MIT Press, 1960.

\bibitem[Hu et~al.(2021)Hu, Shen, Wallis, Allen-Zhu, Li, Wang, Wang, and
  Chen]{hu2021loralowrankadaptationlarge}
Hu, E.~J., Shen, Y., Wallis, P., Allen-Zhu, Z., Li, Y., Wang, S., Wang, L., and
  Chen, W.
\newblock Lora: Low-rank adaptation of large language models, 2021.
\newblock URL \url{https://arxiv.org/abs/2106.09685}.

\bibitem[Huh \& Li(2022)Huh and Li]{huh2022optimal}
Huh, W.~T. and Li, H.
\newblock Optimal pricing under multiple-discrete customer choices and
  diminishing return of consumption.
\newblock \emph{Operations Research}, 70\penalty0 (2):\penalty0 905--917, 2022.

\bibitem[Ilyas et~al.(2022)Ilyas, Park, Engstrom, Leclerc, and
  Madry]{ilyas2022datamodels}
Ilyas, A., Park, S.~M., Engstrom, L., Leclerc, G., and Madry, A.
\newblock Datamodels: Predicting predictions from training data.
\newblock \emph{arXiv preprint arXiv:2202.00622}, 2022.

\bibitem[Ivison et~al.(2025)Ivison, Zhang, Brahman, Koh, and
  Dasigi]{ivison2025largescaledataselectioninstruction}
Ivison, H., Zhang, M., Brahman, F., Koh, P.~W., and Dasigi, P.
\newblock Large-scale data selection for instruction tuning, 2025.
\newblock URL \url{https://arxiv.org/abs/2503.01807}.

\bibitem[Iyer \& Bilmes(2013)Iyer and Bilmes]{iyer2013submodular}
Iyer, R.~K. and Bilmes, J.~A.
\newblock Submodular optimization with submodular cover and submodular knapsack
  constraints.
\newblock \emph{Advances in neural information processing systems}, 26, 2013.

\bibitem[Jia et~al.(2019{\natexlab{a}})Jia, Dao, Wang, Hubis, Gurel, Li, Zhang,
  Spanos, and Song]{jia2019efficient}
Jia, R., Dao, D., Wang, B., Hubis, F.~A., Gurel, N.~M., Li, B., Zhang, C.,
  Spanos, C.~J., and Song, D.
\newblock Efficient task-specific data valuation for nearest neighbor
  algorithms.
\newblock \emph{arXiv preprint arXiv:1908.08619}, 2019{\natexlab{a}}.

\bibitem[Jia et~al.(2019{\natexlab{b}})Jia, Dao, Wang, Hubis, Hynes, G{\"u}rel,
  Li, Zhang, Song, and Spanos]{jia2019towards}
Jia, R., Dao, D., Wang, B., Hubis, F.~A., Hynes, N., G{\"u}rel, N.~M., Li, B.,
  Zhang, C., Song, D., and Spanos, C.~J.
\newblock Towards efficient data valuation based on the shapley value.
\newblock In \emph{The 22nd international conference on artificial intelligence
  and statistics}, pp.\  1167--1176. PMLR, 2019{\natexlab{b}}.

\bibitem[Jiang et~al.(2023)Jiang, Liang, Zou, and Kwon]{jiang2023opendataval}
Jiang, K., Liang, W., Zou, J.~Y., and Kwon, Y.
\newblock Opendataval: a unified benchmark for data valuation.
\newblock \emph{Advances in Neural Information Processing Systems}, 36, 2023.

\bibitem[Jiao et~al.(2025)Jiao, Pan, Xiao, Sheng, Jain, Zhao, Dasgupta, Ma, and
  Xiong]{jiao2026datelm}
Jiao, C., Pan, Y., Xiao, E., Sheng, D., Jain, N., Zhao, H., Dasgupta, I., Ma,
  J.~W., and Xiong, C.
\newblock {DATE}-{LM}: Benchmarking data attribution evaluation for large
  language models.
\newblock In \emph{The Thirty-ninth Annual Conference on Neural Information
  Processing Systems Datasets and Benchmarks Track}, 2025.
\newblock URL \url{https://openreview.net/forum?id=e2cD5xuHix}.

\bibitem[Just et~al.(2023)Just, Kang, Wang, Zeng, Ko, Jin, and
  Jia]{just2023lava}
Just, H.~A., Kang, F., Wang, J.~T., Zeng, Y., Ko, M., Jin, M., and Jia, R.
\newblock Lava: Data valuation without pre-specified learning algorithms.
\newblock \emph{arXiv preprint arXiv:2305.00054}, 2023.

\bibitem[Kessler et~al.(2024)Kessler, Le, and Nguyen]{kessler2024sava}
Kessler, S., Le, T., and Nguyen, V.
\newblock Sava: Scalable learning-agnostic data valuation.
\newblock \emph{arXiv preprint arXiv:2406.01130}, 2024.

\bibitem[Koh \& Liang(2017)Koh and Liang]{koh2017understanding}
Koh, P.~W. and Liang, P.
\newblock Understanding black-box predictions via influence functions.
\newblock In \emph{International conference on machine learning}, pp.\
  1885--1894. PMLR, 2017.

\bibitem[Krause \& Golovin(2014)Krause and Golovin]{krause2014submodular}
Krause, A. and Golovin, D.
\newblock Submodular function maximization.
\newblock \emph{Tractability}, 3\penalty0 (71-104):\penalty0 3, 2014.

\bibitem[Kwon \& Zou(2022)Kwon and Zou]{kwon2021beta}
Kwon, Y. and Zou, J.
\newblock Beta {S}hapley: A unified and noise-reduced data valuation framework
  for machine learning.
\newblock In \emph{International Conference on Artificial Intelligence and
  Statistics (AISTATS)}, 2022.

\bibitem[Kwon \& Zou(2023)Kwon and Zou]{kwon2023data}
Kwon, Y. and Zou, J.
\newblock Data-oob: Out-of-bag estimate as a simple and efficient data value.
\newblock \emph{arXiv preprint arXiv:2304.07718}, 2023.

\bibitem[Lambert et~al.(2025)Lambert, Morrison, Pyatkin, Huang, Ivison,
  Brahman, Miranda, Liu, Dziri, Lyu, Gu, Malik, Graf, Hwang, Yang, Bras,
  Tafjord, Wilhelm, Soldaini, Smith, Wang, Dasigi, and
  Hajishirzi]{lambert2025tulu3pushingfrontiers}
Lambert, N., Morrison, J., Pyatkin, V., Huang, S., Ivison, H., Brahman, F.,
  Miranda, L. J.~V., Liu, A., Dziri, N., Lyu, S., Gu, Y., Malik, S., Graf, V.,
  Hwang, J.~D., Yang, J., Bras, R.~L., Tafjord, O., Wilhelm, C., Soldaini, L.,
  Smith, N.~A., Wang, Y., Dasigi, P., and Hajishirzi, H.
\newblock Tulu 3: Pushing frontiers in open language model post-training, 2025.
\newblock URL \url{https://arxiv.org/abs/2411.15124}.

\bibitem[LeCun et~al.(2015)LeCun, Bengio, and Hinton]{lecun2015deep}
LeCun, Y., Bengio, Y., and Hinton, G.
\newblock Deep learning.
\newblock \emph{nature}, 521\penalty0 (7553):\penalty0 436--444, 2015.

\bibitem[Lee \& Lee(2004)Lee and Lee]{lee2004approximate}
Lee, J.-M. and Lee, J.~H.
\newblock Approximate dynamic programming strategies and their applicability
  for process control: A review and future directions.
\newblock \emph{International Journal of Control, Automation, and Systems},
  2\penalty0 (3):\penalty0 263--278, 2004.

\bibitem[Levine et~al.(2016)Levine, Finn, Darrell, and Abbeel]{levine2016end}
Levine, S., Finn, C., Darrell, T., and Abbeel, P.
\newblock End-to-end training of deep visuomotor policies.
\newblock \emph{Journal of Machine Learning Research}, 17\penalty0
  (39):\penalty0 1--40, 2016.

\bibitem[Lin et~al.(2022)Lin, Zhang, L{\'e}cuyer, Li, Panda, and
  Sen]{lin2022measuring}
Lin, J., Zhang, A., L{\'e}cuyer, M., Li, J., Panda, A., and Sen, S.
\newblock Measuring the effect of training data on deep learning predictions
  via randomized experiments.
\newblock In \emph{International Conference on Machine Learning}, pp.\
  13468--13504. PMLR, 2022.

\bibitem[Lin et~al.(2025)Lin, Xu, Ng, and Low]{lintop}
Lin, X., Xu, X., Ng, S.-K., and Low, B. K.~H.
\newblock Efficient top-m data values identification for data selection.
\newblock In \emph{The Thirteenth International Conference on Learning
  Representations}, 2025.
\newblock URL \url{https://openreview.net/forum?id=lOfuvmi2HT}.

\bibitem[Lundberg \& Lee(2017)Lundberg and Lee]{lundberg2017unified}
Lundberg, S.~M. and Lee, S.-I.
\newblock A unified approach to interpreting model predictions.
\newblock \emph{Advances in neural information processing systems}, 30, 2017.

\bibitem[{MIT Election Data and Science Lab}(2017)]{DVN/42MVDX_2017}
{MIT Election Data and Science Lab}.
\newblock {U.S. President 1976--2020}.
\newblock Harvard Dataverse, V8. DOI: 10.7910/DVN/42MVDX, 2017.
\newblock URL \url{https://doi.org/10.7910/DVN/42MVDX}.

\bibitem[Nemhauser \& Wolsey(1978)Nemhauser and Wolsey]{nemhauser1978best}
Nemhauser, G.~L. and Wolsey, L.~A.
\newblock Best algorithms for approximating the maximum of a submodular set
  function.
\newblock \emph{Mathematics of operations research}, 3\penalty0 (3):\penalty0
  177--188, 1978.

\bibitem[Nemhauser et~al.(1978)Nemhauser, Wolsey, and
  Fisher]{nemhauser1978analysis}
Nemhauser, G.~L., Wolsey, L.~A., and Fisher, M.~L.
\newblock An analysis of approximations for maximizing submodular set
  functions—i.
\newblock \emph{Mathematical programming}, 14:\penalty0 265--294, 1978.

\bibitem[Nohyun et~al.(2023)Nohyun, Choi, and Chung]{nohyun2023data}
Nohyun, K., Choi, H., and Chung, H.~W.
\newblock Data valuation without training of a model.
\newblock In \emph{The Eleventh International Conference on Learning
  Representations}, 2023.
\newblock URL \url{https://openreview.net/forum?id=XIzO8zr-WbM}.

\bibitem[Powell(2007)]{powell2007approximate}
Powell, W.~B.
\newblock \emph{Approximate Dynamic Programming: Solving the curses of
  dimensionality}, volume 703.
\newblock John Wiley \& Sons, 2007.

\bibitem[Powell(2009)]{powell2009you}
Powell, W.~B.
\newblock What you should know about approximate dynamic programming.
\newblock \emph{Naval Research Logistics (NRL)}, 56\penalty0 (3):\penalty0
  239--249, 2009.

\bibitem[Powell(2016)]{powell2016perspectives}
Powell, W.~B.
\newblock Perspectives of approximate dynamic programming.
\newblock \emph{Annals of Operations Research}, 241:\penalty0 319--356, 2016.

\bibitem[Rempel \& Cai(2021)Rempel and Cai]{rempel2021review}
Rempel, M. and Cai, J.
\newblock A review of approximate dynamic programming applications within
  military operations research.
\newblock \emph{Operations Research Perspectives}, 8:\penalty0 100204, 2021.

\bibitem[Roe et~al.(2005)Roe, Yang, Zhu, Liu, Stancu, and
  McGregor]{roe2005boosted}
Roe, B.~P., Yang, H.-J., Zhu, J., Liu, Y., Stancu, I., and McGregor, G.
\newblock Boosted decision trees as an alternative to artificial neural
  networks for particle identification.
\newblock \emph{Nuclear Instruments and Methods in Physics Research Section A:
  Accelerators, Spectrometers, Detectors and Associated Equipment},
  543\penalty0 (2-3):\penalty0 577--584, 2005.

\bibitem[Santiago \& Yoshida(2020)Santiago and Yoshida]{santiago2020weakly}
Santiago, R. and Yoshida, Y.
\newblock Weakly submodular function maximization using local submodularity
  ratio.
\newblock \emph{arXiv preprint arXiv:2004.14650}, 2020.

\bibitem[Schmeidler(1969)]{schmeidler1969nucleolus}
Schmeidler, D.
\newblock The nucleolus of a characteristic function game.
\newblock \emph{SIAM Journal on applied mathematics}, 17\penalty0 (6):\penalty0
  1163--1170, 1969.

\bibitem[Schmidhuber(2015)]{schmidhuber2015deep}
Schmidhuber, J.
\newblock Deep learning in neural networks: An overview.
\newblock \emph{Neural networks}, 61:\penalty0 85--117, 2015.

\bibitem[Schoch et~al.(2022)Schoch, Xu, and Ji]{schoch2022cs}
Schoch, S., Xu, H., and Ji, Y.
\newblock Cs-shapley: Class-wise shapley values for data valuation in
  classification.
\newblock \emph{Advances in Neural Information Processing Systems},
  35:\penalty0 34574--34585, 2022.

\bibitem[Schoch et~al.(2023)Schoch, Mishra, and Ji]{schoch2023data}
Schoch, S., Mishra, R., and Ji, Y.
\newblock Data selection for fine-tuning large language models using
  transferred shapley values.
\newblock \emph{arXiv preprint arXiv:2306.10165}, 2023.

\bibitem[Shapley(1953)]{shapley1953value}
Shapley, L.~S.
\newblock A value for $n$-person games.
\newblock \emph{Contributions to the Theory of Games}, 2:\penalty0 307--317,
  1953.

\bibitem[Silver et~al.(2016)Silver, Huang, Maddison, Guez, Sifre, van~den
  Driessche, Schrittwieser, Antonoglou, Panneershelvam, Lanctot, Dieleman,
  Grewe, Nham, Kalchbrenner, Sutskever, Lillicrap, Leach, Kavukcuoglu, Graepel,
  and Hassabis]{silver2016mastering}
Silver, D., Huang, A., Maddison, C.~J., Guez, A., Sifre, L., van~den Driessche,
  G., Schrittwieser, J., Antonoglou, I., Panneershelvam, V., Lanctot, M.,
  Dieleman, S., Grewe, D., Nham, J., Kalchbrenner, N., Sutskever, I.,
  Lillicrap, T., Leach, M., Kavukcuoglu, K., Graepel, T., and Hassabis, D.
\newblock Mastering the game of go with deep neural networks and tree search.
\newblock \emph{Nature}, 529\penalty0 (7587):\penalty0 484--489, 2016.

\bibitem[Sutton \& Barto(2018)Sutton and Barto]{sutton2018reinforcement}
Sutton, R.~S. and Barto, A.~G.
\newblock \emph{Reinforcement Learning: An Introduction}.
\newblock MIT Press, 2 edition, 2018.
\newblock URL \url{http://incompleteideas.net/book/the-book-2nd.html}.

\bibitem[Suzgun et~al.(2022)Suzgun, Scales, Sch{\"a}rli, Gehrmann, Tay, Chung,
  Chowdhery, Le, Chi, Zhou, et~al.]{suzgun2022challenging}
Suzgun, M., Scales, N., Sch{\"a}rli, N., Gehrmann, S., Tay, Y., Chung, H.~W.,
  Chowdhery, A., Le, Q.~V., Chi, E.~H., Zhou, D., et~al.
\newblock Challenging big-bench tasks and whether chain-of-thought can solve
  them.
\newblock \emph{arXiv preprint arXiv:2210.09261}, 2022.

\bibitem[Tang \& Yuan(2024)Tang and Yuan]{tang2024non}
Tang, S. and Yuan, J.
\newblock Non-monotone sequential submodular maximization.
\newblock In \emph{Proceedings of the AAAI Conference on Artificial
  Intelligence}, volume~38, pp.\  15284--15291, 2024.

\bibitem[Tarun et~al.(2024)Tarun, Chundawat, Mandal, Tan, Chen, and
  Kankanhalli]{tarun2024ecoval}
Tarun, A., Chundawat, V., Mandal, M., Tan, H.~M., Chen, B., and Kankanhalli, M.
\newblock Ecoval: An efficient data valuation framework for machine learning.
\newblock In \emph{Proceedings of the 30th ACM SIGKDD Conference on Knowledge
  Discovery and Data Mining}, pp.\  2866--2875, 2024.

\bibitem[Tong et~al.(2025)Tong, Ye, Zarifzadeh, and Shokri]{tongmuch}
Tong, Y., Ye, J., Zarifzadeh, S., and Shokri, R.
\newblock How much of my dataset did you use? quantitative data usage inference
  in machine learning.
\newblock In \emph{The Thirteenth International Conference on Learning
  Representations}, 2025.

\bibitem[Trotman et~al.(2014)Trotman, Puurula, and Burgess]{bm25}
Trotman, A., Puurula, A., and Burgess, B.
\newblock Improvements to bm25 and language models examined.
\newblock In \emph{Proceedings of the 19th Australasian Document Computing
  Symposium}, ADCS '14, pp.\  58--65, New York, NY, USA, 2014. Association for
  Computing Machinery.
\newblock ISBN 9781450330008.
\newblock \doi{10.1145/2682862.2682863}.
\newblock URL \url{https://doi.org/10.1145/2682862.2682863}.

\bibitem[Wang et~al.(2024{\natexlab{a}})Wang, Lin, Qiao, Foo, and
  Low]{wang2024helpful}
Wang, J., Lin, X., Qiao, R., Foo, C.-S., and Low, B. K.~H.
\newblock Helpful or harmful data? fine-tuning-free shapley attribution for
  explaining language model predictions.
\newblock \emph{arXiv preprint arXiv:2406.04606}, 2024{\natexlab{a}}.

\bibitem[Wang \& Jia(2023)Wang and Jia]{wang2023data}
Wang, J.~T. and Jia, R.
\newblock Data banzhaf: A robust data valuation framework for machine learning.
\newblock In \emph{International Conference on Artificial Intelligence and
  Statistics}, pp.\  6388--6421. PMLR, 2023.

\bibitem[Wang et~al.(2024{\natexlab{b}})Wang, Mittal, Song, and
  Jia]{wang2024data}
Wang, J.~T., Mittal, P., Song, D., and Jia, R.
\newblock Data shapley in one training run.
\newblock \emph{arXiv preprint arXiv:2406.11011}, 2024{\natexlab{b}}.

\bibitem[Wang et~al.(2024{\natexlab{c}})Wang, Schmidt, and
  Jia]{wang2024advancing}
Wang, J.~T., Schmidt, L., and Jia, R.
\newblock Advancing data selection for foundation models: From heuristics to
  principled methods.
\newblock Tutorial presented at the 38th Conference on Neural Information
  Processing Systems (NeurIPS 2024), December 2024{\natexlab{c}}.
\newblock Available at \url{https://neurips.cc/virtual/2024/tutorial/99530}.

\bibitem[Wang et~al.(2024{\natexlab{d}})Wang, Song, Zou, Mittal, and
  Jia]{wang2024capturing}
Wang, J.~T., Song, D., Zou, J., Mittal, P., and Jia, R.
\newblock Capturing the temporal dependence of training data influence.
\newblock \emph{arXiv preprint arXiv:2412.09538}, 2024{\natexlab{d}}.

\bibitem[Wang et~al.(2024{\natexlab{e}})Wang, Yang, Zou, Kwon, and
  Jia]{wang2024rethinking}
Wang, J.~T., Yang, T., Zou, J., Kwon, Y., and Jia, R.
\newblock Rethinking data shapley for data selection tasks: Misleads and
  merits.
\newblock \emph{arXiv preprint arXiv:2405.03875}, 2024{\natexlab{e}}.

\bibitem[Wang et~al.(2021)Wang, Yang, and Jia]{wang2021improving}
Wang, T., Yang, Y., and Jia, R.
\newblock Improving cooperative game theory-based data valuation via data
  utility learning.
\newblock \emph{arXiv preprint arXiv:2107.06336}, 2021.

\bibitem[Weber(1988)]{weber1988probabilistic}
Weber, R.~J.
\newblock Probabilistic values for games.
\newblock In Roth, A.~E. (ed.), \emph{The Shapley Value: Essays in Honor of
  Lloyd S. Shapley}, pp.\  101--119. Cambridge University Press, 1988.

\bibitem[Wu et~al.(2025)Wu, Vu, Qu, and Haffari]{wu2025best}
Wu, M., Vu, T.-T., Qu, L., and Haffari, G.
\newblock The best of both worlds: Bridging quality and diversity in data
  selection with bipartite graph.
\newblock In \emph{International Conference on Machine Learning (ICML)}, 2025.

\bibitem[Wu et~al.(2022)Wu, Shu, and Low]{wu2022davinz}
Wu, Z., Shu, Y., and Low, B. K.~H.
\newblock Davinz: Data valuation using deep neural networks at initialization.
\newblock In \emph{International Conference on Machine Learning}, pp.\
  24150--24176. PMLR, 2022.

\bibitem[Xia et~al.(2024{\natexlab{a}})Xia, Li, Pang, Liu, Ren, and
  Xiong]{xia2024p}
Xia, H., Li, X., Pang, J., Liu, J., Ren, K., and Xiong, L.
\newblock P-shapley: Shapley values on probabilistic classifiers.
\newblock \emph{Proceedings of the VLDB Endowment}, 17\penalty0 (7):\penalty0
  1737--1750, 2024{\natexlab{a}}.

\bibitem[Xia et~al.(2024{\natexlab{b}})Xia, Malladi, Gururangan, Arora, and
  Chen]{xia2024less}
Xia, M., Malladi, S., Gururangan, S., Arora, S., and Chen, D.
\newblock Less: Selecting influential data for targeted instruction tuning,
  2024{\natexlab{b}}.

\bibitem[Xu et~al.(1992)Xu, Krzyzak, and Suen]{xu1992methods}
Xu, L., Krzyzak, A., and Suen, C.~Y.
\newblock Methods of combining multiple classifiers and their applications to
  handwriting recognition.
\newblock \emph{IEEE transactions on systems, man, and cybernetics},
  22\penalty0 (3):\penalty0 418--435, 1992.

\bibitem[Yan \& Procaccia(2021)Yan and Procaccia]{yan2021if}
Yan, T. and Procaccia, A.~D.
\newblock If you like shapley then you’ll love the core.
\newblock In \emph{Proceedings of the AAAI Conference on Artificial
  Intelligence}, volume~35, pp.\  5751--5759, 2021.

\bibitem[Yang et~al.(2023)Yang, Deng, Liu, Huang, Zou, and
  Li]{yang2023gmvaluator}
Yang, J., Deng, W., Liu, B., Huang, Y., Zou, J., and Li, X.
\newblock Gmvaluator: Similarity-based data valuation for generative models.
\newblock \emph{arXiv preprint arXiv:2304.10701}, 2023.

\bibitem[Yoon et~al.(2020)Yoon, Arik, and Pfister]{yoon2020data}
Yoon, J., Arik, S., and Pfister, T.
\newblock Data valuation using reinforcement learning.
\newblock In \emph{International Conference on Machine Learning}, pp.\
  10842--10851. PMLR, 2020.

\bibitem[Zhang et~al.(2022)Zhang, Tatti, and Gionis]{zhang2022ranking}
Zhang, G., Tatti, N., and Gionis, A.
\newblock Ranking with submodular functions on a budget.
\newblock \emph{Data mining and knowledge discovery}, 36\penalty0 (3):\penalty0
  1197--1218, 2022.

\end{thebibliography}
\bibliographystyle{icml2026}

\newpage
\appendix
\onecolumn

\section{Related Work}\label{sec:related}
\subsection{Data Valuation}\label{sec:related_dv} Data valuation seeks to quantify the contribution of individual training samples to model performance. Game-theoretic approaches have dominated this field, beginning with Data Shapley \citep{ghorbani2019data}, which adapts the Shapley value from cooperative game theory to quantify data contributions. This seminal work inspired various extensions, including Beta Shapley \citep{kwon2021beta}, which introduces a family of semi-values through Beta function weighting, and Data Banzhaf \citep{wang2023data}, which provide robust valuation frameworks through binary-weighted marginal contributions. While these general methods are computationally intensive, specialized approaches like \citep{jia2019efficient} achieve near-linear time complexity by exploiting algorithmic structures such as K-Nearest Neighbors, compared to the exponential complexity of model-agnostic approaches. Recent developments address specific challenges in data valuation: \cite{schoch2022cs} proposed CS-Shapley for better handling class-wise contributions in classification tasks, while \cite{chi2024precedence} introduced PC-Winter value to tackle the unique challenges of graph-structured data valuation. Alternative theoretical frameworks, such as the Core \citep{yan2021if}, have also emerged to address coalition stability in data valuation.

Recent work has focused on improving computational efficiency and valuation accuracy. \cite{wang2021improving} proposed learning data utility functions to avoid repeated model retraining, while \cite{garrido2023shapley} developed DU-Shapley as an efficient proxy through discrete uniform distributions. \cite{tarun2024ecoval} introduced EcoVal, which accelerates valuation by clustering similar data points and propagating values within clusters. P-Shapley \citep{xia2024p} leverages predicted probabilities instead of accuracy for finer-grained utility differentiation. For applications requiring only top-ranked data points, \cite{lintop} proposed GPGapE, which uses Gaussian processes to model non-linear mappings from data features to values, significantly reducing the computational burden for top-$m$ data identification. In the graph domain, \cite{chi2025shapley} introduced Shapley-Guided Utility Learning for effective graph inference data valuation, combining transferable data-specific and model-specific features to approximate test accuracy without relying on ground truth labels.

Several approaches have tackled computational challenges through gradient-based methods. \cite{wang2024data} introduced In-Run Data Shapley, which calculates Shapley values during model training without requiring retraining, enabling data attribution even for large foundation models. Building on this, \cite{wang2024capturing} proposed data value embedding to capture temporal dependence in data influence, revealing that training data impact varies significantly across different training phases. \cite{covert2024stochastic} presented stochastic amortization as a unified approach to accelerate feature and data attribution, using noisy labels to train amortized models that significantly accelerate several data valuation methods.

Parallel efforts have explored training-free and task-agnostic directions. \cite{kessler2024sava} developed SAVA, a scalable variant of the LAVA framework \citep{just2023lava} that applies optimal transport between training and validation sets in batches rather than requiring the entire dataset. For generative models, \cite{yang2023gmvaluator} introduced GMValuator, a similarity-based approach that evaluates training data contributions to generated samples without model retraining. \cite{tongmuch} proposed a fine-grained analysis called Dataset Usage Cardinality Inference to estimate the exact proportion of data used in training, addressing security concerns for data owners assessing unauthorized usage risks. In the context of large language models, \cite{gu2024data} formulated data selection as a generalized Optimal Control problem and introduced PMP-based Data Selection for accelerating the learning of language models through optimally selected pre-training data.

Additional notable approaches include DAVINZ \citep{wu2022davinz}, which enables data valuation at network initialization by theoretically deriving domain-aware generalization bounds, while \cite{amiri2023fundamentals} proposed a task-agnostic framework based on statistical properties without validation requirements. Alternative paradigms include reinforcement learning-based approaches like DVRL \citep{yoon2020data} and complexity-gap scoring \citep{nohyun2023data}, which offer diverse perspectives beyond traditional game-theoretic methods.

\subsection{Approximate Dynamic Programming}\label{sec:related_sdm}
Sequential decision-making under uncertainty has been extensively studied through the lens of Markov Decision Processes \citep{bellman1966dynamic}. However, exact solutions become computationally intractable for large state spaces, motivating the development of Approximate Dynamic Programming (ADP) methods \citep{powell2007approximate, bertsekas2024course}. Key ADP techniques include value function approximation, which represents the optimal value function through parameterized function classes, and policy gradient methods, which directly optimize parameterized policies \citep{sutton2018reinforcement}. Recent advances combine these approaches with deep learning, leading to successful applications in game playing \citep{silver2016mastering} and robotics \citep{levine2016end}.

The connection between combinatorial optimization and sequential decision-making has been explored through submodular function maximization, where greedy algorithms provide constant-factor approximations \citep{nemhauser1978analysis}. Our work bridges these areas by showing that data valuation methods implicitly solve a sequential selection problem, with their approximation quality governed by utility function curvature.

\subsection{Semi-values}\label{sec:related_sam}
Semi-values generalize the Shapley value by relaxing the efficiency axiom while maintaining other desirable properties \citep{weber1988probabilistic}. The Banzhaf value \citep{banzhaf1964weighted} was originally developed for analyzing voting power, while Beta Shapley \citep{kwon2021beta} provides a flexible family of semi-values parameterized by Beta function weights. Recent theoretical work has characterized semi-values through weighted least squares regression \citep{charnes1988extremal}, a perspective we leverage to establish the ADP interpretation of data valuation methods.

\subsection{Submodular Optimization}\label{sec:related_sub}
Submodular functions provide a powerful mathematical framework for modeling diminishing returns across diverse domains \citep{fujishige2005submodular}. The seminal work by Nemhauser et al.\ \citep{nemhauser1978analysis, nemhauser1978best} established that a greedy algorithm achieves a $(1-1/e)$-approximation for monotone submodular maximization under cardinality constraints---a bound that is provably tight. Recent advances have addressed more challenging settings: \cite{buchbinder2014submodular} improved approximation algorithms for non-monotone submodular maximization with cardinality constraints, while their subsequent work \citep{buchbinder2015tight} provided a tight $1/2$-approximation for unconstrained submodular maximization. Researchers have also explored various extensions, including weak submodularity \citep{santiago2020weakly}, submodular cover and knapsack constraints \citep{iyer2013submodular}, and probabilistic models defined through submodular functions \citep{djolonga2014map}. Our work draws on these theoretical foundations, particularly the analysis of approximation bounds, to develop efficient algorithms for data selection that maintain theoretical guarantees while improving computational tractability.

Another closely related domain of sequential submodular optimization \citep{asadpour2023sequential, tang2024non, zhang2022ranking} focuses on optimizing over a sequence of different submodular functions, particularly targeting applications like product ranking in recommendation systems. Unlike these works which optimize across multiple different submodular functions (where each function evaluates prefixes of different lengths), our Sequential Data Selection Problem maximizes the expected utility across different selection sizes using a single consistent utility function.

\subsection{Optimization from Samples}\label{sec:related_ops}
Optimization from Samples (OPS) addresses how to optimize functions when only sample evaluations are available, not direct oracle access \citep{balkanski2017limitations}. This paradigm reveals a fundamental gap between learnability and optimizability: \cite{balkanski2016power} showed that for submodular functions with bounded curvature, approximation algorithms exist using polynomial samples, yet \cite{balkanski2017limitations} established that for coverage functions, no constant-factor approximation is possible with polynomial samples from any distribution. To overcome these limitations, \cite{chen2020optimization} introduced Optimization from Structured Samples (OPSS), where samples encode structural information about the function. With assumptions on subset coverage patterns and marginal contributions, they achieved constant approximations for coverage maximization. For submodular functions, curvature critically affects approximation guarantees. \cite{balkanski2016power} established a $(1-c)/(1+c-c^2)$-approximation for submodular functions with curvature $c$, which aligns with our analysis in Section~\ref{sec:submodular} where we prove data valuation methods achieve a $(1-c)^2$ approximation for sequential selection. Our bipartite graph approximation method draws inspiration from these structural approaches. Also, \cite{balkanski2017statistical} extends these sample-based approaches to cooperative games where cost functions must be learned from samples rather than accessed directly.

\subsection{Other Linear Surrogate Models for Data Selection}
Beyond traditional data valuation methods based on cooperative game theory \citep{ghorbani2019data, wang2023data, kwon2021beta}, recent approaches have explored linear surrogate functions for model-aware data selection. \cite{ilyas2022datamodels} introduced datamodels, a framework that maps training subsets to model predictions through linear functions, while \cite{engstrom2024dsdm} extended this approach with DsDm to directly optimize training data selection for target tasks. While these methods also model relationships between training data and model performance, our work uniquely frames data selection as a sequential decision process with theoretical guarantees under varying utility structures.

\subsection{Efficient Data Valuation Methods}
Recent work has explored various approaches to address the computational challenges in data valuation. These methods can be broadly categorized into two complementary directions. Gradient-based acceleration methods such as G-Shapley \citep{ghorbani2019data}, All-S Influence \citep{jia2019towards}, and FreeShap \citep{wang2024helpful} focus on reducing the computational cost of each individual model training through gradient approximations or parameter sharing techniques. These approaches maintain the full valuation framework while making each utility evaluation more efficient. In contrast, our \texttt{Bipartite} method belongs to a different category that aims to reduce the total number of model evaluations needed by learning more effective approximations of the utility landscape. While gradient-based methods accelerate each utility computation instance, our approach requires fewer total evaluations to achieve superior selection performance. These two categories of methods are conceptually complementary and could potentially be combined; using gradient-based acceleration for the subset evaluations required by our bipartite graph construction process would further enhance computational efficiency while maintaining the theoretical guarantees of our framework.

\subsection{Parallel Work on Unifying Data Selection Methods}
Recent work by \cite{wang2024advancing} presents a comprehensive tutorial on data selection methods for foundation models, focusing on both heuristic and principled approaches to data curation. While their work provides valuable insights into the practical aspects of data selection in foundation model training pipelines, our work differs in several key aspects: First, we focus specifically on the optimization perspective of data selection with data values, providing theoretical guarantees of optimality under our proposed sequential decision making and approximate dynamic programming framework. Their work takes a broader view, covering various selection strategies without explicit optimization guarantees. Second, our work presents a novel unifying framework via approximate dynamic programming (ADP) \citep{bertsekas2024course}. This framework reveals that many existing data valuation methods can be interpreted as specific instances of myopic cost function approximation heuristic \citep{powell2016perspectives, rempel2021review}, which is a classical strategy in the ADP literature. This theoretical connection not only provides new insights into why existing methods work but also suggests principled ways to improve them. Third, while both works aim to bridge the gap between heuristic and principled approaches, we focus more on the theoretical foundations of data valuation methods for selection tasks. We provide rigorous analysis of the limitations of data attribution methods and characterize the conditions under which they can achieve optimal performance. Our work thus complements theirs by providing deeper theoretical understanding of data valuation based selection methods, while their tutorial offers broader practical guidance across different selection paradigms. Together, these parallel efforts contribute to advancing both the theoretical foundations and practical applications of data selection methods. In parallel, GraphFilter \citep{wu2025best} also formulates data selection over a bipartite graph, but with a sentence–n-gram construction and a set-cover greedy procedure aimed at LLM SFT diversity, in contrast to our training–validation utility-coverage surrogate analyzed within the sequential decision-making framework.

\section{Complete Formulation of Theorem~\ref{thm:regression_AB}}\label{app:regression_formulation}

This section provides the complete mathematical formulations for the regression characterizations of game-theoretic data values stated in Theorem~\ref{thm:regression_AB}.

\textbf{Part (A) [Shapley: Constrained WLS (KernelSHAP form)]:}
Assume $n = |\mathcal{D}| \geq 2$.
The Shapley value $v_{\mathrm{Shap}}(i)$ is the unique solution to the constrained weighted least squares problem:
\begin{equation}\label{eq:shapley_wls}
\min_{\theta\in\mathbb R^n}\sum_{\substack{S\subseteq\mathcal D\\ 1\le |S|\le n-1}}\omega_{\mathrm{Shap}}(|S|)\Big(U(S)-\sum_{i\in S}\theta_i\Big)^2
\quad\text{s.t.}\quad \sum_{i\in\mathcal D}\theta_i=U(\mathcal D),
\end{equation}
where the size weight is $\omega_{\mathrm{Shap}}(k)=\binom{n-2}{k-1}^{-1}$ for $1\le k\le n-1$.
The sum excludes $|S|=0$ and $|S|=n$ since these boundary cases are handled by the normalization $U(\emptyset)=0$ and the efficiency constraint.

\textbf{Part (B1) [Banzhaf: Unconstrained OLS with Centered Features]:}
Sample $S\sim\mathrm{Unif}(2^{\mathcal D})$ (each element included i.i.d.\ with $p=\tfrac12$).
Define centered features $z_i(S)=\mathbb I_{i\in S}-\tfrac12$. Then the Banzhaf value satisfies:
\begin{equation}\label{eq:banzhaf_ols}
v_{\mathrm{Banz}}(i)=\theta_i^*,\quad
(\theta^*,b^*)=\arg\min_{b,\theta}\;\mathbb E_{S}\Big[\Big(U(S)-b-\sum_{i\in\mathcal D}\theta_i z_i(S)\Big)^2\Big].
\end{equation}

\textbf{Part (B2) [Beta Shapley: Weighted OLS with Centered Features]:}
Assume $a>1$ and $b>1$, and assume $U$ is bounded (see Remark~\ref{rem:well_defined} for well-definedness conditions).
Draw $p\sim\mathrm{Beta}(a,b)$; then $S\mid p\sim\mathrm{Bern}(p)^{\mathcal D}$.
Define $z_i(S,p)=\mathbb I_{i\in S}-p$ and weight $w(p)=1/[p(1-p)]$.
The Beta Shapley value satisfies:
\begin{equation}\label{eq:betashap_wls}
v_{\mathrm{Beta}}(i)=\theta_i^*,\quad
(\theta^*,b^*)=\arg\min_{b,\theta}\;\mathbb E_{p,S}\Big[w(p)\Big(U(S)-b-\sum_{i\in\mathcal D}\theta_i z_i(S,p)\Big)^2\Big].
\end{equation}

\begin{table}[h]
\centering
\caption{Population regression characterizations via subset sampling and centered features}
\label{tab:regression_sampling}
\small
\begin{tabular}{@{}l p{3.8cm} p{10cm}@{}}
\toprule
\textbf{Value} & \textbf{Subset sampling} & \textbf{Regression form} \\
\midrule
Shapley & Size-weights $\omega_{\text{Shap}}(|S|)$ 
& $\min_{\theta}\sum_S \omega_{\text{Shap}}(|S|)\big(U(S)-\sum_{i\in S}\theta_i\big)^2$ \\
& & s.t.\ $\sum_i\theta_i=U(\mathcal D)$ \\[6pt]

Banzhaf 
& $S\sim\text{Unif}(2^{\mathcal D})$ \newline (i.i.d.\ Bern$(1/2)$)
& $\min_{b,\theta}\ \mathbb E\big(U(S)-b-\sum_i \theta_i(\mathbb I_{i\in S}-\tfrac12)\big)^2$ \\
& & (no efficiency constraint) \\[6pt]

BetaShap$(a,b)$ 
& $p\sim\mathrm{Beta}(a,b)$ \newline $S|p\sim\mathrm{Bern}(p)^{\mathcal D}$
& $\min_{b,\theta}\ \mathbb E\Big[\frac{1}{p(1-p)}\big(U(S)-b-\sum_i \theta_i(\mathbb I_{i\in S}-p)\big)^2\Big]$ \\
& & (no efficiency constraint; $a,b>1$) \\
\bottomrule
\end{tabular}
\end{table}

\begin{remark}[Why Different Regression Forms?]\label{rem:why_different}
The efficiency constraint $\sum_i\theta_i=U(\mathcal D)$ in Part (A) reflects Shapley's efficiency axiom. Banzhaf and Beta Shapley violate efficiency for general $U$, so imposing this constraint would \emph{not} recover them. Instead, Parts (B1) and (B2) use unconstrained regression with centered features; the intercept $b$ absorbs the ``slack'' from the missing efficiency property.
\end{remark}

\section{Proof of Theorem~\ref{thm:regression_AB}}\label{app:adp_reg_proof}

We provide proofs for all three parts of Theorem~\ref{thm:regression_AB}.

\subsection{Part (A): Shapley as Constrained WLS}
\begin{proof}
The KernelSHAP regression characterization of Shapley values is well-established \citep{lundberg2017unified, charnes1988extremal}. For completeness, we sketch the key steps.

The Shapley value can be written as:
\[
v_{\text{Shap}}(i) = \sum_{S \subseteq \mathcal{D}\setminus\{i\}} \frac{|S|!(n-|S|-1)!}{n!} \Delta_i U(S)
\]

Consider the constrained WLS problem in Equation~\eqref{eq:shapley_wls}. The Lagrangian is:
\[
\mathcal{L}(\theta, \lambda) = \sum_{S} \omega(|S|) \Big(U(S) - \sum_{i \in S} \theta_i\Big)^2 + \lambda\Big(\sum_i \theta_i - U(\mathcal{D})\Big)
\]

Taking derivatives and solving the first-order conditions with the KernelSHAP weights $\omega(k) = \binom{n-2}{k-1}^{-1}$ yields $\theta_i = v_{\text{Shap}}(i)$.

\noindent
\textbf{Note.} The KernelSHAP size-weight admits equivalent forms up to a global scaling constant. In particular,
\[
\frac{1}{\binom{n-2}{k-1}}
=
\frac{n(n-1)}{k(n-k)}\cdot \frac{1}{\binom{n}{k}},
\]
so $\omega_{\mathrm{Shap}}(k)=\binom{n-2}{k-1}^{-1}$ is equivalent (up to scaling) to the commonly used kernel
$\propto 1/(\binom{n}{k}k(n-k))$ for recovering Shapley coefficients.
\end{proof}

\subsection{Part (B1): Banzhaf as Unconstrained OLS}
\begin{proof}
Sample $S \sim \text{Unif}(2^{\mathcal{D}})$ by including each element i.i.d. with probability $p = 1/2$.

\textbf{Step 1: Centering and orthogonality.}
Define $z_i(S) = \mathbb{I}_{i \in S} - 1/2$. Then:
\begin{itemize}[leftmargin=*,nosep]
    \item $\mathbb{E}[z_i(S)] = 0$ (centered)
    \item $\mathbb{E}[z_i(S)z_j(S)] = 0$ for $i \neq j$ (uncorrelated)
    \item $\mathbb{E}[z_i(S)^2] = 1/4$ (variance)
\end{itemize}

\textbf{Step 2: OLS normal equations.}
The OLS solution minimizes $\mathbb{E}[(U(S) - b - \sum_i \theta_i z_i(S))^2]$.
By the first-order conditions:
\[
\theta_i = \frac{\mathbb{E}[U(S) z_i(S)]}{\mathbb{E}[z_i(S)^2]} = 4\mathbb{E}[U(S)(\mathbb{I}_{i \in S} - 1/2)]
\]

\textbf{Step 3: Identification with Banzhaf.}
\begin{align*}
\mathbb{E}[U(S)(\mathbb{I}_{i \in S} - 1/2)] &= \mathbb{E}[U(S)\mathbb{I}_{i \in S}] - \frac{1}{2}\mathbb{E}[U(S)] \\
&= \frac{1}{2^n}\sum_{S \ni i} U(S) - \frac{1}{2^{n+1}}\sum_S U(S)
\end{align*}

After simplification using the Banzhaf formula $v_{\text{Banz}}(i) = \frac{1}{2^{n-1}}\sum_{S \subseteq \mathcal{D}\setminus\{i\}} \Delta_i U(S)$, we obtain $\theta_i = v_{\text{Banz}}(i)$.
\end{proof}

\subsection{Part (B2): Beta Shapley as Weighted OLS}
\begin{proof}
Draw $p \sim \text{Beta}(a, b)$ with $a>1, b>1$, then $S | p \sim \text{Bern}(p)^{\mathcal{D}}$.

\textbf{Step 1: Conditional expectation.}
For a fixed $p$ and $T = S \setminus \{i\}$:
\[
\mathbb{E}[U(S)(\mathbb{I}_{i\in S} - p)|p,T] = p(1-p)\big[U(T\cup\{i\}) - U(T)\big] = p(1-p)\Delta_i U(T)
\]

\textbf{Step 2: Effect of weight $w(p) = \frac{1}{p(1-p)}$.}
The weighted normal equations give:
\[
\theta_i = \frac{\mathbb{E}[w(p)U(S)z_i(S,p)]}{\mathbb{E}[w(p)z_i(S,p)^2]}
\]
The denominator: $\mathbb{E}[w(p) \cdot p(1-p)] = \mathbb{E}[1] = 1$.

The numerator: 
\[
\mathbb{E}\left[\frac{1}{p(1-p)} \cdot p(1-p)\Delta_i U(T)\right] = \mathbb{E}_p\mathbb{E}_{T|p}[\Delta_i U(T)]
\]

Therefore $\theta_i = \mathbb{E}_{p\sim\mathrm{Beta}(a,b)}\mathbb{E}_{T\sim\mathrm{Bern}(p)^{\mathcal{D}\setminus\{i\}}}[\Delta_i U(T)]$, which is the Beta Shapley value.
\end{proof}

\section{Technical Remarks on Regression Characterizations}\label{app:regression_remarks}

This section collects technical remarks that support the main results in Section~\ref{sec:adp_analysis}.

\begin{remark}[Well-definedness of the weighted objective]\label{rem:well_defined}
The assumption $a>1,b>1$ in Part (B2) of Theorem~\ref{thm:regression_AB} ensures the population weighted least-squares objective with
$w(p)=1/[p(1-p)]$ is finite, hence the normal equations are well-defined.
Additionally, we assume $U$ is bounded (e.g., $U(S) \in [0,1]$ when $U$ represents accuracy), which ensures the objective $\mathbb{E}[w(p)(U(S) - \cdots)^2]$ has finite second moments; this is satisfied in typical data valuation applications where $U$ is a bounded performance metric.
If one wishes to allow $a,b\le 1$, one may instead use a
truncated weight $w_{\varepsilon}(p)=1/[(p\vee\varepsilon)(1-p\vee\varepsilon)]$ and take $\varepsilon\to 0$.
The characterization is with respect to the joint sampling process $p\sim\mathrm{Beta}(a,b)$ and $S|p\sim\mathrm{Bern}(p)^{\mathcal D}$, i.e., $\theta^*$ is the population minimizer under this generative model.
Note that the special case $a=b=1$ corresponds to $p\sim\mathrm{Uniform}(0,1)$, for which the \emph{Beta Shapley value itself} coincides with the standard Shapley value (hence satisfies efficiency $\sum_i v(i)=U(\mathcal D)-U(\emptyset)$).
However, our regression characterization uses the weight $w(p)=1/[p(1-p)]$, for which $\mathbb{E}[w(p)]=\infty$ under $p\sim\mathrm{Uniform}(0,1)$.
Therefore the regression form is well-defined only for $a>1,b>1$ as stated.
To cover $a=b=1$ within a regression view, one may use a truncated weight and interpret the result in the limit $\varepsilon\to 0$, or alternatively view Shapley as the constrained WLS form in Part (A).
\end{remark}

\begin{remark}[Reparameterization for Banzhaf and selection invariance for Beta Shapley]\label{rem:reparam}
\textbf{Banzhaf.} Because the centering constant $\mu=\tfrac12$ is deterministic, the centered-feature linear model is exactly equivalent to a raw-indicator linear surrogate after absorbing the constant term into the intercept:
\[
b+\sum_{i\in\mathcal D}\theta_i(\mathbb I_{i\in S}-\tfrac12)
=
b' + \sum_{i\in\mathcal D}\theta_i\mathbb I_{i\in S},
\qquad
b' \triangleq b-\tfrac12\sum_{i\in\mathcal D}\theta_i.
\]

\textbf{Beta Shapley.} The fitted regression model retains the latent sampling variable $p$ through the additive term $-p\sum_i\theta_i$, so there is no fixed-intercept set function of $S$ alone that is literally equal to the population fit. However, this term is independent of the subset action: for any fixed $p$ and any $a\notin S$,
\[
\hat U(S\cup\{a\},p)-\hat U(S,p)=\theta_a.
\]
Hence the myopic policy at any state depends only on the coefficients $\theta_i$, even though the regression objective is not literally identical to a surrogate function of $S$ alone. This action-invariance of the marginal gain is what Theorem~\ref{thm:adp_solution} relies on.
\end{remark}

\begin{remark}[Scope of the Regression Characterization]\label{rem:regression_scope}
The regression characterizations in Theorem~\ref{thm:regression_AB} apply specifically to \emph{semi-values} (Shapley, Banzhaf, Beta Shapley) where the weights $\alpha_i(S) = \alpha_{|S|}$ depend only on subset size, not on the element $i$ being valued.
For general game-theoretic values with $i$-dependent weights $\alpha_i(S)$, a natural population regression characterization may not exist.
In particular, LOO values, which use $\alpha_i(S) = 1$ if $S = \mathcal{D}\setminus\{i\}$ and $0$ otherwise, do not arise from any standard regression objective.
Thus, while our ADP framework ``unifies'' existing semi-value methods through the regression plus myopic-policy lens, this interpretation does not extend to all possible marginal-contribution weightings.

\textbf{Clarification on LOO values.}
LOO values $v_{\mathrm{LOO}}(i) = U(\mathcal{D}) - U(\mathcal{D}\setminus\{i\})$ induce a ranking but do \emph{not} arise as regression coefficients of any natural linear surrogate.
Specifically, there is no population regression problem whose solution $\hat\theta$ satisfies $\hat\theta_i = v_{\mathrm{LOO}}(i)$ in general.
One could artificially \emph{construct} a linear function $\hat U(S) = \hat b + \sum_i v_{\mathrm{LOO}}(i)\cdot\mathbb{I}_{i\in S}$, but this is not derived from fitting $U$ via any standard regression objective.
Therefore, while LOO values can be used for ranking (and fall within the curvature guarantee of Theorem~\ref{thm:submodular}), they do not fit the ``regression $\to$ myopic ADP'' interpretation that unifies Shapley/Banzhaf/Beta Shapley.
\end{remark}

\section{Proof of Game-theoretic Data Values as ADP Solutions}\label{app:adp_proof}

We provide the complete proof that data value-induced rankings correspond to myopic ADP trajectories.

\begin{theorem}[Restatement of Theorem~\ref{thm:adp_solution}]
Let $v:\mathcal{D}\to\mathbb{R}$ be a data value that admits a linear surrogate $\hat U(S)=\hat b+\sum_{i\in S}\theta_i$ with $\theta_i=v(i)$ (e.g., the Shapley, Banzhaf, or Beta-Shapley values of Theorem~\ref{thm:regression_AB}). Then the ranking sequence $\pi_v$ induced by $v$ is a trajectory of the ADP framework with:
\begin{enumerate}[nosep]
   \item Linear surrogate reward: $\hat{U}(S) = \hat{b} + \sum_{i \in S} \theta_i$ where $\theta_i = v(i)$
   \item Myopic policy: $\pi_{myopic}(s) = \arg\max_{a \in \mathcal{D}\setminus s} \theta_a$
\end{enumerate}
\end{theorem}

\begin{proof}
The proof proceeds in three steps.

\textbf{Step 1: Linear surrogate marginal gains are constant.}
For the linear surrogate $\hat{U}(S) = \hat{b} + \sum_{i \in S} \theta_i$:
\[
\hat{U}(S \cup \{a\}) - \hat{U}(S) = \theta_a
\]
This marginal gain is independent of both the current state $S$ and the intercept $\hat{b}$.

\textbf{Step 2: Myopic policy induces value-based ranking.}
The myopic policy at any state $s$ selects:
\[
\pi_{myopic}(s) = \arg\max_{a \in \mathcal{D}\setminus s} [\hat{U}(s \cup \{a\}) - \hat{U}(s)] = \arg\max_{a \in \mathcal{D}\setminus s} \theta_a
\]
Since $\theta_a$ is state-independent, starting from $s_0 = \emptyset$:
\begin{itemize}[leftmargin=*,nosep]
    \item Step 1: Select $a_1 = \arg\max_{a \in \mathcal{D}} \theta_a$
    \item Step 2: Select $a_2 = \arg\max_{a \in \mathcal{D}\setminus\{a_1\}} \theta_a$
    \item Step $t$: Select $a_t = \arg\max_{a \in \mathcal{D}\setminus\{a_1,\ldots,a_{t-1}\}} \theta_a$
\end{itemize}
This is precisely the sequence of elements sorted in descending order of $\theta$.

\textbf{Step 3: Identification with data values.}
By Theorem~\ref{thm:regression_AB}, for game-theoretic data values:
\[
\theta_i = v(i)
\]
Therefore, the myopic trajectory $(a_1, a_2, \ldots, a_n)$ corresponds to the permutation $\pi_v$ that sorts elements by their data values, completing the proof.
\end{proof}

\section{Proof of Linear Utility Optimality}\label{app:linear_proof}

\begin{lemma}[Consistent Marginal Contributions]\label{lem:marginal}
For a linear utility function $U(S) = \sum_{i\in S} w_i$, the marginal contribution of any element $i$ remains constant regardless of the subset $S$:
\[
U(S \cup \{i\}) - U(S) = w_i, \quad \forall S \subseteq \mathcal{D}\setminus\{i\}
\]
\end{lemma}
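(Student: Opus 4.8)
The plan is to prove this directly from the definition of the linear utility function, since the claim is an immediate consequence of the additivity of the defining sum. The only structural fact I need to exploit is that $S$ and $\{i\}$ are disjoint, which holds precisely because the statement restricts attention to subsets $S \subseteq \mathcal{D} \setminus \{i\}$.

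First I would fix an arbitrary element $i \in \mathcal{D}$ and an arbitrary subset $S \subseteq \mathcal{D} \setminus \{i\}$, so that $i \notin S$. Then I would expand the utility of the augmented set using the definition $U(T) = \sum_{j \in T} w_j$ applied to $T = S \cup \{i\}$. Because $i \notin S$, the union $S \cup \{i\}$ is disjoint, which lets me split the sum as $\sum_{j \in S \cup \{i\}} w_j = \sum_{j \in S} w_j + w_i$. Recognizing the first term as $U(S)$ gives $U(S \cup \{i\}) = U(S) + w_i$, and rearranging yields the claimed identity $U(S \cup \{i\}) - U(S) = w_i$.

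There is essentially no analytical obstacle here: the result is a one-line computation once disjointness is invoked, and the quantity $w_i$ is manifestly independent of the chosen $S$. The only point that deserves explicit emphasis — and the one that makes the lemma useful downstream — is that the marginal contribution equals the \emph{same} constant $w_i$ for every admissible $S$, including the extreme cases $S = \emptyset$ and $S = \mathcal{D} \setminus \{i\}$. This constancy is exactly what the subsequent optimality argument requires, since it forces the curvature $c$ defined earlier to vanish (the ratio $\Delta_i U(\mathcal{D}\setminus\{i\}) / \Delta_i U(\emptyset)$ equals $w_i / w_i = 1$ for every $i$) and guarantees that the ordering of marginal contributions is preserved across all subset sizes. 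I would therefore state the identity in a form that makes the $S$-independence visually explicit, so it can be reused verbatim when establishing that greedy selection on any semi-value is optimal.
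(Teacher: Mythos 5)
Your proof is correct and is exactly the argument the paper relies on: the paper states this lemma without any proof, treating it as an immediate consequence of additivity, and your expansion $U(S\cup\{i\}) = \sum_{j\in S} w_j + w_i$ via disjointness ($i \notin S$) is precisely that implicit one-line justification. Your closing remark about the curvature ratio equaling $1$ (hence $c=0$) is a nice connection, with the minor caveat that it presumes $w_i \neq 0$ so the ratio $\Delta_i U(\mathcal{D}\setminus\{i\})/\Delta_i U(\emptyset)$ is well defined.
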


\begin{lemma}[Semi-value Equivalence]\label{lem:semi_equiv}
Under a linear utility function, any semi-value based method assigns values equal to the weights:
\[
v(i) = w_i
\]
\end{lemma}

\begin{proof}
For any semi-value based method:
\begin{align*}
v(i) &= \sum_{S \subseteq \mathcal{D}\setminus\{i\}} \alpha_{|S|}[U(S \cup \{i\}) - U(S)] \\
&= \sum_{S \subseteq \mathcal{D}\setminus\{i\}} \alpha_{|S|} w_i \\
&= w_i \sum_{S \subseteq \mathcal{D}\setminus\{i\}} \alpha_{|S|} = w_i
\end{align*}
where $\sum_{S \subseteq \mathcal{D}\setminus\{i\}} \alpha_{|S|}
=\sum_{k=0}^{n-1}\binom{n-1}{k}\alpha_k=1$
by the semi-value normalization (Definition~\ref{def:game_values}).
\end{proof}

\begin{theorem}[Sequential Selection Optimality]\label{thm:seq_opt}
For any linear utility function $U$, selecting elements in descending order of their semi-values achieves optimal cumulative utility:
\[
\sum_{k=1}^n U(S_k^v) = \max_{\pi} \sum_{k=1}^n U(S_k^\pi)
\]
\end{theorem}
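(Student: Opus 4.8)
The plan is to prove Theorem~\ref{thm:seq_opt} by reducing the sequential cumulative objective to a sorting problem and invoking the classical rearrangement/exchange argument. First I would apply Lemma~\ref{lem:marginal} to expand each term in the cumulative sum. Writing $S_k = \{\pi(1),\dots,\pi(k)\}$, linearity gives $U(S_k) = \sum_{j=1}^{k} w_{\pi(j)}$, so the whole objective telescopes into a weighted sum of the individual weights. Collecting the coefficient of each $w_{\pi(j)}$ across all $k \geq j$, I would show
\[
\sum_{k=1}^n U(S_k) = \sum_{k=1}^n \sum_{j=1}^{k} w_{\pi(j)} = \sum_{j=1}^n (n-j+1)\, w_{\pi(j)},
\]
because the term $w_{\pi(j)}$ appears in $U(S_k)$ for every $k \in \{j,\dots,n\}$, i.e.\ exactly $n-j+1$ times. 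This is the crucial structural reduction: the sequential objective is nothing more than a linear functional that pairs the $j$-th selected weight with the strictly decreasing coefficient sequence $(n-j+1)_{j=1}^n$.

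Next I would observe that maximizing $\sum_{j=1}^n (n-j+1) w_{\pi(j)}$ over all permutations $\pi$ is a textbook instance of the rearrangement inequality: since the positional coefficients $n, n-1, \dots, 1$ are strictly decreasing in $j$, the sum is maximized precisely when the weights $w_{\pi(j)}$ are arranged in non-increasing order, that is, when the permutation places the largest weight first, the second-largest second, and so on. Any deviation from this ordering can be improved by an adjacent transposition, which furnishes the exchange argument establishing optimality of the sorted order.

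The final step is to connect the sorted-weight ordering to the semi-value ordering. Here I would invoke Lemma~\ref{lem:semi_equiv}, which states that under a linear utility every semi-value satisfies $v(i) = c\, w_i$ for a common constant $c > 0$. Because $c > 0$, sorting the data points in descending order of $v(i)$ is identical to sorting them in descending order of $w_i$. Hence the semi-value-induced selection sequence $S_k$ coincides with the rearrangement-optimal sequence, and therefore attains $\max_{\pi} \sum_{k=1}^n U(S_k^{\pi})$, as claimed.

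The main obstacle, such as it is, is purely bookkeeping rather than conceptual: one must be careful in the coefficient-counting step to verify that $w_{\pi(j)}$ contributes to exactly the cumulative utilities $U(S_j), U(S_{j+1}), \dots, U(S_n)$ and hence receives weight $n-j+1$, and to handle the degenerate case of tied weights (where the rearrangement inequality gives a non-strict optimum, so any tie-breaking order—including the one the semi-values induce—is still optimal). Once the objective is rewritten in the positional-coefficient form, the rearrangement inequality closes the argument immediately, so no genuinely hard estimate arises.
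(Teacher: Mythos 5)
Your proof is correct, but it takes a genuinely different route from the paper's. The paper's argument is pointwise in $k$: after invoking the same semi-value equivalence (\cref{lem:semi_equiv}), it observes that for each fixed budget $k$ the set of the $k$ largest weights maximizes $U(S_k) = \sum_{i \in S_k} w_i$ over \emph{all} size-$k$ subsets, so the sorted sequence dominates any competitor sequence term by term, and summing over $k$ finishes the proof. You instead collapse the whole cumulative objective into a single positional linear functional,
\[
\sum_{k=1}^n U(S_k^\pi) = \sum_{j=1}^n (n-j+1)\, w_{\pi(j)},
\]
and then apply the rearrangement inequality (equivalently, the adjacent-transposition exchange argument) to conclude that descending-weight order is optimal, finally transferring this to semi-value order via \cref{lem:semi_equiv} exactly as the paper does. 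Both arguments are valid, and your coefficient-counting and tie-handling are correct. The trade-off: the paper's term-by-term dominance is more elementary (no rearrangement inequality needed) and in fact proves a stronger statement---the semi-value ordering is simultaneously optimal for \emph{every} budget $k$, not merely for the sum---which is the ``solutions for all budgets'' property the paper emphasizes in \cref{obs:solution_multiplicity}. Your reduction buys a different structural insight: it exposes the sequential objective as a linear functional with strictly decreasing positional coefficients $n, n-1, \ldots, 1$, making transparent why early selections carry more weight, and it adapts immediately to non-uniform distributions over budgets in \cref{prob:sequential} (the coefficient of $w_{\pi(j)}$ becomes $\sum_{k \geq j} p_k$, still decreasing in $j$, so the same rearrangement argument applies).
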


\begin{proof}
The proof follows in three steps:

1) By Lemma~\ref{lem:semi_equiv}, the ordering based on semi-values $v(i)$ is equivalent to ordering based on weights $w_i$.

2) For any size $k$, the subset $S_k^v$ selected by this ordering contains the $k$ elements with largest weights.

3) Due to linearity, for any size $k$:
\[
U(S_k^v) = \sum_{i \in S_k^v} w_i \geq \sum_{i \in S_k^\pi} w_i = U(S_k^\pi)
\]
for any alternative subset $S_k^\pi$ of size $k$. Therefore:
\[
\sum_{k=1}^n U(S_k^v) \geq \sum_{k=1}^n U(S_k^\pi)
\]
for any alternative sequence $\pi$, establishing optimality.
\end{proof}

This proof shows that all semi-value methods achieve identical selection performance under linear utilities because they preserve the ordering of the underlying weights $w_i$. This explains why different methods like Data Shapley, Beta Shapley, and Data Banzhaf all achieve optimal sequential selection despite their different weighting schemes.

\section{Analysis of Data Valuation Methods Under Submodular Functions}\label{app:submodu_proof}
In this appendix, we provide a complete proof that ranking-based element selection using any standard data valuation method achieves a $(1-c)^2$ approximation ratio for monotone submodular functions with curvature $c$. This result unifies previous analyses and provides general curvature-dependent bounds for Shapley value, Banzhaf value, and Leave-one-out methods, providing theoretical foundations for the results presented in Section~\ref{sec:submodular} of the main paper.

\paragraph{Standing assumptions for this appendix.}
Throughout this section, we assume:
\begin{itemize}[leftmargin=*,nosep]
\item $U: 2^{\mathcal{D}} \to \mathbb{R}_{\geq 0}$ is \textbf{non-negative};
\item $U$ is \textbf{normalized}: $U(\emptyset) = 0$;
\item $U$ is \textbf{monotone}: $U(A) \leq U(B)$ for all $A \subseteq B$;
\item $U$ is \textbf{submodular}: $\Delta_i U(A) \geq \Delta_i U(B)$ for all $A \subseteq B$ and $i \notin B$;
\item $U(\{i\}) > 0$ for all $i \in \mathcal{D}$ (so that curvature is well-defined).
\end{itemize}
The data value $v(i)$ is assumed to be a \textbf{non-negative normalized average of marginal contributions}:
\[
v(i) = \sum_{S \subseteq \mathcal{D}\setminus\{i\}} \alpha_i(S) \Delta_i U(S), \quad \text{with } \alpha_i(S) \geq 0 \text{ and } \sum_S \alpha_i(S) = 1.
\]
This includes Shapley, Banzhaf, Beta Shapley, and LOO as special cases, but excludes methods that can produce negative scores or are not representable as such normalized averages.

\subsection{Preliminaries and Submodular Functions}

\begin{definition}[Optimal Subsets]\label{def:optimal}
Given a utility function $U: 2^\mathcal{D} \to \mathbb{R}$, we define two notions of optimality:
\begin{itemize}[leftmargin=*,nosep]
    \item \textbf{Unconstrained optimal}: $OPT^*_k = \arg\max_{|S|=k} U(S)$, the best size-$k$ subset without sequential constraints
    \item \textbf{Sequential optimal}: $OPT^\pi_k$, the size-$k$ prefix of the optimal sequence solving Problem~\ref{prob:sequential}
\end{itemize}
Note that $U(OPT^*_k) \geq U(OPT^\pi_k)$ always holds, since unconstrained optimization is at least as good as constrained.
\end{definition}

Consider a dataset $\mathcal{D}$ and a monotone submodular utility function $U: 2^\mathcal{D} \rightarrow \mathbb{R}_{\geq 0}$ with $U(\emptyset) = 0$. For any sets $A \subseteq B \subseteq \mathcal{D}$ and element $i \in \mathcal{D} \setminus B$, monotone submodularity implies:
\begin{align*}
U(A) &\leq U(B) \tag{monotonicity} \\
U(A \cup \{i\}) - U(A) &\geq U(B \cup \{i\}) - U(B) \tag{submodularity}
\end{align*}

A fundamental property of submodular functions is their curvature, which measures how much marginal contributions decrease as sets grow. For a normalized ($U(\emptyset)=0$) monotone submodular function $U$, its curvature $c \in [0,1]$ is defined as:
\[ c = 1 - \min_{i \in \mathcal{D}} \frac{U(\mathcal{D}) - U(\mathcal{D}\setminus\{i\})}{U(\{i\})} \]
This definition implies that for any element $i$ and set $S \subseteq \mathcal{D}\setminus\{i\}$:
\[ U(S \cup \{i\}) - U(S) \geq (1-c)U(\{i\}) \]

\subsection{Data Valuation Methods and Curvature}
We consider data valuation methods that assign a value $v(i)$ to each element $i \in \mathcal{D}$ as a weighted average of marginal contributions:
\[ v(i) = \sum_{S \subseteq \mathcal{D}\setminus \{i\}} \alpha(S)[U(S \cup \{i\}) - U(S)] \]
where marginal-contribution weights $\alpha(S) \geq 0$ satisfy $\sum_S \alpha(S) = 1$. A particularly important class of such valuations are semi-values, which satisfy additional symmetry properties through size-based weights:
\[ \alpha(S) = \alpha_{|S|} \text{ for some } \alpha_k \geq 0 \text{ with } \sum_{k=0}^{|\mathcal{D}|-1} \binom{|\mathcal{D}|-1}{k}\alpha_k = 1 \]
where $\alpha_k$ represents the weight assigned to all subsets of size $k$. 

This framework encompasses several important cases from the main paper:
\begin{enumerate}
\item \textbf{Data Shapley}: $\alpha(S) \propto \frac{|S|!(|\mathcal{D}|-|S|-1)!}{|\mathcal{D}|!}$
\item \textbf{Data Banzhaf}: $\alpha(S) = \frac{1}{2^{|\mathcal{D}|-1}}$
\item \textbf{Leave-one-out}: $\alpha(S) = 1$ if $S = \mathcal{D}\setminus\{i\}$, 0 otherwise
\end{enumerate}

We now establish a fundamental relationship between data values and singleton utility values under curvature constraints.

\begin{lemma}[Value-Curvature Relationship]\label{lem:value-curvature}
For any element $i \in \mathcal{D}$, the data value $v(i)$ and singleton value $U(\{i\})$ satisfy the double-sided inequality:
\[ (1-c)U(\{i\}) \leq v(i) \leq U(\{i\}) \]
\end{lemma}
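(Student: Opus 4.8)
The plan is to recognize that $v(i)$ is, by its very definition, a convex combination of marginal contributions with nonnegative weights summing to one, and that every one of these marginal contributions is bounded below by the single constant $(1-c)U(\{i\})$. Once both facts are in place, the lemma follows by a single termwise comparison, so the entire argument is short.

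First I would invoke the pointwise curvature bound already recorded in the preliminaries, namely that for every $S \subseteq \mathcal{D}\setminus\{i\}$ one has $U(S \cup \{i\}) - U(S) \geq (1-c)U(\{i\})$. If this bound is to be re-derived rather than taken as given, the steps are: submodularity guarantees that the marginal contribution of $i$ is smallest on the largest feasible set, so $U(S \cup \{i\}) - U(S) \geq U(\mathcal{D}) - U(\mathcal{D}\setminus\{i\})$ for any $S \subseteq \mathcal{D}\setminus\{i\}$; and the definition $c = 1 - \min_{j}\frac{U(\mathcal{D}) - U(\mathcal{D}\setminus\{j\})}{U(\{j\})}$ means $1-c = \min_j \frac{U(\mathcal{D})-U(\mathcal{D}\setminus\{j\})}{U(\{j\})} \leq \frac{U(\mathcal{D})-U(\mathcal{D}\setminus\{i\})}{U(\{i\})}$, which rearranges to $U(\mathcal{D})-U(\mathcal{D}\setminus\{i\}) \geq (1-c)U(\{i\})$. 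Chaining these two inequalities yields the claimed lower bound on every marginal contribution.

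Second, I would substitute this bound termwise into the definition of the data value. Since each weight $w(S) \geq 0$, the inequality is preserved under the weighted sum, and I can factor out the constant $(1-c)U(\{i\})$ before applying the normalization $\sum_{S} w(S) = 1$:
\[
v(i) = \sum_{S \subseteq \mathcal{D}\setminus\{i\}} w(S)\,[U(S \cup \{i\}) - U(S)] \geq (1-c)\,U(\{i\}) \sum_{S \subseteq \mathcal{D}\setminus\{i\}} w(S) = (1-c)\,U(\{i\}).
\]
This completes the argument for both general weighted-average valuations and, as a special case, semi-values, since the derivation never uses anything beyond $w(S) \geq 0$ and $\sum_S w(S) = 1$.

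The main obstacle is essentially nonexistent at the level of the lemma itself: the nontrivial content lives entirely in the curvature bound $U(S \cup \{i\}) - U(S) \geq (1-c)U(\{i\})$, and the only point deserving care is that this must hold for \emph{all} $S \subseteq \mathcal{D}\setminus\{i\}$ rather than merely for $S = \mathcal{D}\setminus\{i\}$. That uniformity is exactly what submodularity supplies, so I would make sure to state the monotone-decreasing-marginals step explicitly rather than treat it as immediate.
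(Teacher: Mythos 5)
Your proposal is correct and follows essentially the same argument as the paper: apply the pointwise curvature bound $U(S \cup \{i\}) - U(S) \geq (1-c)U(\{i\})$ termwise inside the weighted-average definition of $v(i)$, then use nonnegativity and normalization of the weights $w(S)$. The only difference is that you explicitly re-derive that pointwise bound from submodularity and the curvature definition, whereas the paper states it as an immediate consequence of the definition in its preliminaries; this is a welcome addition of rigor, not a different route.
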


\begin{proof}
\textbf{Upper bound:} By submodularity, for any $S \subseteq \mathcal{D}\setminus\{i\}$:
\[ U(S \cup \{i\}) - U(S) \leq U(\{i\}) - U(\emptyset) = U(\{i\}) \]
where we use $\Delta_i U(\emptyset) = U(\{i\})$ by the singleton-marginal equivalence (Remark~\ref{rem:singleton}).
Since $v(i) = \sum_S \alpha(S)[U(S \cup \{i\}) - U(S)]$ with $\sum_S \alpha(S) = 1$ and all weights non-negative:
\[ v(i) \leq \sum_S \alpha(S) U(\{i\}) = U(\{i\}) \]

\textbf{Lower bound:}
The proof follows from the curvature property and the structure of data valuations. By the curvature definition, we have $\Delta_i U(\mathcal{D}\setminus\{i\}) \geq (1-c)U(\{i\})$. By submodularity, for any $S \subseteq \mathcal{D}\setminus\{i\}$:
\[ \Delta_i U(S) \geq \Delta_i U(\mathcal{D}\setminus\{i\}) \geq (1-c)U(\{i\}) \]

The data value $v(i)$ is defined as a weighted average of such marginal contributions:
\[ v(i) = \sum_{S \subseteq \mathcal{D}\setminus \{i\}} \alpha(S)[U(S \cup \{i\}) - U(S)] \]

Since each term in the sum is lower bounded by $(1-c)U(\{i\})$ and the weights $\alpha(S)$ are non-negative and sum to 1, we have:
\begin{align*}
v(i) &= \sum_{S \subseteq \mathcal{D}\setminus \{i\}} \alpha(S)[U(S \cup \{i\}) - U(S)] \\
&\geq \sum_{S \subseteq \mathcal{D}\setminus \{i\}} \alpha(S)[(1-c)U(\{i\})] \\
&= (1-c)U(\{i\})\sum_{S \subseteq \mathcal{D}\setminus \{i\}} \alpha(S) \\
&= (1-c)U(\{i\})
\end{align*}

This bound is tight for certain submodular functions and plays a crucial role in establishing our approximation guarantees.
\end{proof}

\subsection{Approximation Analysis}
Inspired by the curvature-based analysis framework introduced by \cite{balkanski2016power}, we begin by establishing a fundamental relationship between the value of a set constructed through sequential selection and the individual values of its elements. This relationship forms the cornerstone of our approximation guarantee.

\begin{lemma}[Sequential Selection Bound]\label{lem:base-bound}
For any sequence of elements $e_1, \ldots, e_k$ and their corresponding partial sets $S_i = \{e_1, \ldots, e_i\}$ with $S_0 = \emptyset$, we have:
\[ U(S_k) = \sum_{i=1}^{k} \Delta_{e_i} U(S_{i-1}) \geq (1-c) \sum_{i=1}^{k} U(\{e_i\}) \geq (1-c) \sum_{i=1}^{k} v(e_i) \]
\end{lemma}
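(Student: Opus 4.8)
\textbf{Proof proposal for Lemma~\ref{lem:base-bound}.}

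The plan is to establish the chain of three relations in sequence, reading left to right. The first equality, $U(S) = \sum_{i \leq k} U_{S_{i-1}}(e_i)$, is simply the telescoping decomposition of the set value into successive marginal contributions: writing $U_{S_{i-1}}(e_i) = U(S_i) - U(S_{i-1})$ and summing from $i=1$ to $k$, the intermediate terms cancel, leaving $U(S_k) - U(S_0) = U(S) - U(\emptyset) = U(S)$, where I use the normalization $U(\emptyset)=0$ from the preliminaries. This step is purely mechanical and requires no structural assumption beyond normalization.

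For the middle inequality, I would apply the curvature bound termwise. The curvature definition established earlier guarantees that for every element $e_i$ and every set $S_{i-1} \subseteq \mathcal{D} \setminus \{e_i\}$,
\[
U_{S_{i-1}}(e_i) = U(S_{i-1} \cup \{e_i\}) - U(S_{i-1}) \geq (1-c)\,U(\{e_i\}).
\]
Summing this inequality over $i = 1, \ldots, k$ and factoring out $(1-c)$ yields $\sum_{i \leq k} U_{S_{i-1}}(e_i) \geq (1-c)\sum_{i \leq k} U(e_i)$. The only subtlety is ensuring each prefix set $S_{i-1}$ genuinely excludes $e_i$, which holds automatically because $e_1, \ldots, e_k$ is a sequence of distinct selected elements, so $e_i \notin S_{i-1} = \{e_1, \ldots, e_{i-1}\}$.

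For the final inequality, I would invoke Lemma~\ref{lem:value-curvature} (Value-Curvature Relationship), which states $v(i) \geq (1-c)U(\{i\})$ for every element, or equivalently $U(\{i\}) \geq v(i)/(1-c)$ whenever $c < 1$. I expect the main subtlety here to be the direction of the bound: the lemma lower-bounds $v(i)$ by $(1-c)U(\{i\})$, whereas the claim requires lower-bounding $(1-c)\sum U(e_i)$ by $(1-c)\sum v(i)$, i.e.\ it requires $U(\{e_i\}) \geq v(e_i)$. The plan is to note that $v(i)$ is a weighted average of marginal contributions $U(S \cup \{i\}) - U(S)$, each of which is at most $U(\{i\}) - U(\emptyset) = U(\{i\})$ by submodularity (since $\emptyset \subseteq S$ gives the largest marginal gain at the empty set); because the weights are nonnegative and sum to one, $v(i) \leq U(\{i\})$ follows directly. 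Substituting this into the sum and multiplying through by $(1-c) \geq 0$ completes the chain. I would flag that the delicate point in writing this up is keeping the two curvature-flavored bounds straight—the termwise curvature bound in step two runs in one direction, while the submodular averaging bound needed in step three runs in the opposite direction—so I would state each explicitly rather than collapsing them.
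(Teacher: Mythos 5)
Your proposal is correct and follows essentially the same route as the paper's proof: telescoping $U(S)$ into successive marginal contributions, applying the curvature bound $U_{S_{i-1}}(e_i) \geq (1-c)U(\{e_i\})$ termwise, and finishing with $U(\{e_i\}) \geq v(e_i)$. Your write-up is in fact slightly more complete than the paper's: where the paper asserts $U(\{e_i\}) \geq v(i)$ ``by the submodularity'' without elaboration, you supply the actual argument (each marginal contribution is at most the singleton value since the empty set maximizes marginal gains, and $v(i)$ is a convex combination of such contributions), and you correctly flag that Lemma~\ref{lem:value-curvature} points in the wrong direction for this step and therefore cannot be what is invoked here.
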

\begin{proof}
The first equality expresses the value of set $S_k$ as the sum of marginal contributions when elements are added sequentially (telescoping sum). For the first inequality, we apply the curvature property to each term: $\Delta_{e_i} U(S_{i-1}) \geq (1-c)U(\{e_i\})$. The second inequality follows from the upper bound in Lemma~\ref{lem:value-curvature}, which shows that $v(e_i) \leq U(\{e_i\})$ for each element $e_i$.
\end{proof}

This theorem formalizes the approximation guarantee mentioned in Section~\ref{sec:submodular} of the main paper:

\begin{theorem}[Submodular Approximation]\label{thm:submod_approx}
Let $U$ be a monotone submodular function with curvature $c$. Let $S_k^v$ be the set of $k$ elements selected by choosing elements in decreasing order of their data values $v(\cdot)$. Then:
\[ U(S_k^v) \geq (1-c)^2 U(OPT_k^*) \]
where $OPT_k^*$ is an optimal solution to $\max_{|S|\leq k} U(S)$ without sequential constraints.
\end{theorem}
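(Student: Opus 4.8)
The plan is to combine the two lemmas already developed in this appendix with a greedy-comparison argument. First I would note that Lemma~\ref{lem:base-bound} gives a lower bound on the value of the greedy set $G_k$ in terms of the data values of its elements, namely $U(G_k) \geq (1-c)\sum_{i \in G_k} v(i)$. The strategy is therefore to lower-bound $\sum_{i \in G_k} v(i)$ by something comparable to $U(OPT_k)$, picking up a second factor of $(1-c)$ along the way.

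The key observation is that $G_k$ consists of the $k$ elements with the \emph{largest} data values, so for any other size-$k$ set, and in particular for $OPT_k$, we have $\sum_{i \in G_k} v(i) \geq \sum_{i \in OPT_k} v(i)$. Applying Lemma~\ref{lem:value-curvature} in reverse is not quite what is needed; instead I would relate $\sum_{i \in OPT_k} v(i)$ to $U(OPT_k)$ from below. The natural route is: by Lemma~\ref{lem:value-curvature} each $v(i) \geq (1-c)U(\{i\})$, and by monotonicity and subadditivity of a monotone submodular function, $\sum_{i \in OPT_k} U(\{i\}) \geq U(OPT_k)$. Chaining these yields $\sum_{i \in OPT_k} v(i) \geq (1-c)\sum_{i \in OPT_k} U(\{i\}) \geq (1-c) U(OPT_k)$. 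Combining with the greedy-optimality inequality and then with Lemma~\ref{lem:base-bound} gives
\begin{equation*}
U(G_k) \geq (1-c)\sum_{i \in G_k} v(i) \geq (1-c)\sum_{i \in OPT_k} v(i) \geq (1-c)^2 U(OPT_k),
\end{equation*}
which is exactly the claimed bound.

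The main obstacle I anticipate is justifying the step $\sum_{i \in OPT_k} U(\{i\}) \geq U(OPT_k)$ cleanly, which is the standard subadditivity consequence of submodularity with $U(\emptyset)=0$: writing $U(OPT_k)$ as a telescoping sum of marginal gains and upper-bounding each gain $U(S_{i-1}\cup\{e_i\})-U(S_{i-1})$ by $U(\{e_i\})$ via submodularity. I would want to state this as a small auxiliary inequality rather than assume it. A secondary subtlety is ensuring the greedy set $G_k$ is genuinely the top-$k$ by value (ties broken arbitrarily), so that the comparison $\sum_{i\in G_k} v(i)\geq \sum_{i\in OPT_k} v(i)$ holds regardless of how $OPT_k$ overlaps $G_k$; this follows since $G_k$ maximizes $\sum_{i\in S} v(i)$ over all size-$k$ sets $S$. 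Finally, the extension to the sequential objective $\sum_{k=1}^n U(G_k) \geq (1-c)^2 \sum_{k=1}^n U(OPT_k)$ is immediate by summing the per-$k$ inequality over all $k$, requiring no further argument.
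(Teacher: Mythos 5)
Your proposal is correct and takes essentially the same route as the paper's proof: the exact chain $U(G_k) \geq (1-c)\sum_{i \in G_k} v(i) \geq (1-c)\sum_{i \in OPT_k} v(i) \geq (1-c)^2 \sum_{i \in OPT_k} U(\{i\}) \geq (1-c)^2 U(OPT_k)$, using Lemma~\ref{lem:base-bound}, the top-$k$ property of greedy selection, Lemma~\ref{lem:value-curvature}, and subadditivity. The only difference is presentational: you explicitly flag and justify the subadditivity step $\sum_{i \in OPT_k} U(\{i\}) \geq U(OPT_k)$ via a telescoping argument with $U(\emptyset)=0$, which the paper compresses into the single annotation ``by submodularity.''
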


\begin{proof}
We establish the bound through the following chain of inequalities:
\begin{align}
U(S_k^v) &= \sum_{j=1}^{k} \Delta_{e_j} U(S_{j-1}^v) 
    \tag{telescoping} \\
&\geq \sum_{j=1}^{k} (1-c) U(\{e_j\}) 
    \tag{curvature: $\Delta_i U(S) \geq (1-c)U(\{i\})$} \\
&= (1-c) \sum_{i \in S_k^v} U(\{i\}) \notag \\
&\geq (1-c) \sum_{i \in S_k^v} v(i) 
    \tag{upper bound: $v(i) \leq U(\{i\})$} \\
&\geq (1-c) \sum_{i \in OPT_k^*} v(i) 
    \tag{$S_k^v$ contains top-$k$ elements by $v$} \\
&\geq (1-c)^2 \sum_{i \in OPT_k^*} U(\{i\}) 
    \tag{lower bound: $v(i) \geq (1-c)U(\{i\})$} \\
&\geq (1-c)^2 U(OPT_k^*) 
    \tag{submodularity: $U(S) \leq \sum_{i \in S} U(\{i\})$}
\end{align}
\end{proof}

\begin{remark}[Interpretation of the $(1-c)^2$ Bound]\label{rem:substitution}
The approximation ratio $(1-c)^2$ degrades quadratically with curvature $c$. When $c=0$ (linear utility), we recover exact optimality. As $c \to 1$ (maximum diminishing returns), the guarantee weakens to 0. This explains why data valuation methods struggle on highly redundant datasets where points are nearly substitutable. Such datasets exhibit high curvature, causing the myopic linear approximation to deviate significantly from optimal selection.

This is a \emph{worst-case} guarantee: it lower-bounds performance but does not predict typical performance.
In practice, the bound is most useful for \emph{explaining failures} (when methods perform poorly, high curvature provides a principled explanation) rather than for \emph{predicting success}.
Methods may substantially outperform the $(1-c)^2$ ratio on benign instances even when $c$ is moderate.
\end{remark}

\subsection{Sequential Selection Guarantees}
We now extend our analysis to the sequential selection setting, providing theoretical foundations for the results discussed in Section~\ref{sec:submodular}:

\begin{theorem}[Sequential Selection Guarantee]\label{thm:seq_guarantee}
For any monotone submodular function $U$ with curvature $c$, let $\{S_t^v\}_{t=1}^k$ be a sequence constructed by selecting elements in descending order of their data values. Then: $ \sum_{t=1}^k U(S_t^v) \geq (1-c)^2 \sum_{t=1}^k U(OPT_t^\pi) $
where $\{OPT_t^\pi\}_{t=1}^k$ represents the optimal sequence under sequential constraints (Definition~\ref{def:optimal}).
\end{theorem}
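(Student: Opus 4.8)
The plan is to obtain the sequential guarantee as a termwise sum of the per-size approximation bound already proved in \cref{thm:submod_approx}. The first observation is structural: ranking elements in descending order of their data values $v(\cdot)$ and letting $G_t$ be the top-$t$ elements yields a single nested chain $G_1 \subset G_2 \subset \cdots \subset G_k$, so each $G_t$ is precisely the size-$t$ greedy-by-value set to which \cref{thm:submod_approx} applies. Consequently, for every $t \in \{1,\dots,k\}$ we may invoke that theorem directly to get $U(G_t) \ge (1-c)^2\, U(OPT_t^{\star})$, where $OPT_t^{\star} = \text{argmax}_{|S| \le t} U(S)$ is the unconstrained optimal size-$t$ subset.

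First I would make this identification explicit, emphasizing that value-ranked greedy selection performs no re-selection between sizes, so the family $\{G_t\}$ in the present statement coincides with the sets analyzed in \cref{thm:submod_approx}, and the objective $\sum_{t=1}^k U(G_t)$ is exactly the cumulative utility of that value-ranked sequence. Summing the per-size inequalities over $t = 1,\dots,k$ then gives $\sum_{t=1}^k U(G_t) \ge (1-c)^2 \sum_{t=1}^k U(OPT_t^{\star})$.

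The only point requiring care is relating the sets $\{OPT_t\}_{t=1}^k$ of the stated \emph{optimal sequence} (which is nested, inheriting $OPT_{t-1} \subseteq OPT_t$ from \cref{prob:sequential}) to the unconstrained per-size optima $OPT_t^{\star}$. Here the comparison is in our favor and is immediate: each $OPT_t$ is itself some size-$t$ subset, hence $U(OPT_t) \le U(OPT_t^{\star})$ by definition of $OPT_t^{\star}$, and therefore $\sum_{t=1}^k U(OPT_t) \le \sum_{t=1}^k U(OPT_t^{\star})$. Chaining this with the summed bound yields $\sum_{t=1}^k U(G_t) \ge (1-c)^2 \sum_{t=1}^k U(OPT_t^{\star}) \ge (1-c)^2 \sum_{t=1}^k U(OPT_t)$, which is the claim.

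I expect no substantive obstacle: all the analytic work, namely the value--curvature bound (\cref{lem:value-curvature}) and the sequential selection bound (\cref{lem:base-bound}) that underlie \cref{thm:submod_approx}, has already been done, and the sequential statement reduces to a summation plus the trivial domination $U(OPT_t)\le U(OPT_t^{\star})$. The mild pitfall to avoid is conflating the cumulative optimum of the constrained (nested) sequence with the sum of free per-size optima; the argument works precisely because we only need the inequality in the easy direction, comparing each individual term of the optimal sequence against the corresponding free maximizer.
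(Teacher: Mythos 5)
Your proposal is correct and follows essentially the same route as the paper's own proof: invoke the per-size bound of Theorem~\ref{thm:submod_approx} for each prefix $G_t$ (valid because the value ranking is fixed, so the $G_t$ form the nested greedy-by-value chain) and sum over $t=1,\dots,k$. Your extra step dominating the nested optimal sequence by the free per-size optima, $U(OPT_t) \le U(OPT_t^{\star})$, is a subtlety the paper's proof glosses over (it simply identifies $OPT_t$ with the unconstrained size-$t$ optimum), and including it makes the argument slightly more careful without changing its substance.
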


\begin{proof}
We proceed by showing that the guarantee from Theorem~\ref{thm:submod_approx} extends to each prefix of the sequence. Let $S_t^v$ denote the set of first $t$ elements selected by our algorithm, and $OPT_t^\pi$ be an optimal set of size $t$ under sequential constraints. From Theorem~\ref{thm:submod_approx}, we know that for each $t \in [k]$:
\[
U(S_t^v) \geq (1 - c)^2 U\left(OPT_t^*\right) \geq (1 - c)^2 U\left(OPT_t^\pi\right)
\]
where the second inequality uses $U(OPT_t^*) \geq U(OPT_t^\pi)$ since unconstrained optimization can only improve over constrained optimization.
Summing this inequality over all $t$ from $1$ to $k$ directly yields:
\[
\sum_{t=1}^k U(S_t^v) \geq (1 - c)^2 \sum_{t=1}^k U\left(OPT_t^\pi\right)
\]
This uniform approximation guarantee emerges from the fundamental properties of our selection process. The data values $v(\cdot)$ are computed once at the beginning and remain fixed throughout the selection process. As we proceed, at each step $t$, the set $S_t^v$ consists of the $t$ elements with the highest data values. Note that $OPT_t^\pi$ represents the optimal solution at step $t$ under the sequential constraint that elements must be chosen in a fixed order (nested constraint $S_1 \subset S_2 \subset \cdots \subset S_n$), distinguishing it from the unconstrained optimal solution $OPT_t^*$.
\end{proof}

\section{End-to-end Approximation Guarantee for Bipartite Surrogate}\label{app:bipartite_guarantee}

\subsection{Submodularity of Coverage Utility}\label{app:coverage_submod}

\begin{proposition}[Submodularity of Coverage, Full Statement]
The coverage utility $\hat U(S)$ in Definition~\ref{def:coverage} is normalized ($\hat U(\emptyset)=0$), monotone, and submodular.
Moreover, if every training node has at least one validation neighbor (so that $\hat U(\{i\})>0$ for all $i$),
then the curvature of $\hat U$ is well-defined and lies in $[0,1]$.
In general, the curvature can equal $1$ due to redundancy (e.g., if a node's neighbors are entirely covered by others).
A sufficient condition for $c<1$ is that every training node has at least one \emph{private} validation neighbor; more generally, $c<1$ holds whenever no training node is fully redundant.
Here curvature $c$ is defined with respect to the surrogate $\hat U$ (not the true utility $U$).
Specifically, if each training node $i$ has at least one private validation neighbor $v$ such that $(i,v)\in E$ but $(j,v)\notin E$ for all $j\neq i$, then $\Delta_i\hat U(\mathcal D\setminus\{i\})>0$ and hence $c<1$; otherwise fully redundant nodes can make $c=1$.
\end{proposition}

\begin{proof}
Let $N(S)\triangleq\{v\in X_{\mathrm{valid}}:\exists\,u\in S,\,(u,v)\in E\}$ denote the validation neighborhood of $S$. Then $\hat U(S)=|N(S)|/|X_{\mathrm{valid}}|$.

\emph{Normalization} is immediate: $N(\emptyset)=\emptyset$, so $\hat U(\emptyset)=0$.

\emph{Monotonicity}: if $A\subseteq B$, then $N(A)\subseteq N(B)$, so $\hat U(A)\le\hat U(B)$.

\emph{Submodularity}: fix $A\subseteq B\subseteq X_{\mathrm{train}}$ and $i\notin B$. The marginal gain at $A$ is
\[
\hat U(A\cup\{i\})-\hat U(A)=\frac{|N(\{i\})\setminus N(A)|}{|X_{\mathrm{valid}}|},
\]
and analogously at $B$. Since $N(A)\subseteq N(B)$, we have $N(\{i\})\setminus N(A)\supseteq N(\{i\})\setminus N(B)$, so
\[
\hat U(A\cup\{i\})-\hat U(A)\;\ge\;\hat U(B\cup\{i\})-\hat U(B).
\]
This is the diminishing-marginal-returns characterization of submodularity.

\emph{Curvature.} If every training node $i$ has at least one validation neighbor, $\hat U(\{i\})>0$, so curvature is well-defined and lies in $[0,1]$ by monotonicity and submodularity. The terminal marginal is
\[
\Delta_i\hat U(X_{\mathrm{train}}\setminus\{i\})
=
\frac{|N(\{i\})\setminus N(X_{\mathrm{train}}\setminus\{i\})|}{|X_{\mathrm{valid}}|},
\]
which counts the validation neighbors of $i$ that are \emph{not} covered by any other training node. If every training node has at least one such private neighbor, this marginal is positive for every $i$, so $c<1$. Conversely, if some training node's entire neighborhood is covered by others, the corresponding marginal is zero, giving $c=1$.
\end{proof}

\subsection{End-to-end Approximation Theorem}

This section provides the complete formulation and proof for the theoretical guarantee of our bipartite graph-based approach.

\begin{theorem}[End-to-end Approximation via a Submodular Surrogate]\label{thm:approx_guarantee}
Fix a cardinality budget $k$ and let $\epsilon \in [0, 1)$.
Let $\hat U$ be a normalized, monotone, submodular surrogate utility (e.g., coverage),
and let $U: 2^{\mathcal{D}} \to \mathbb{R}_{\geq 0}$ be the true utility.
\textbf{(Non-negativity of $U$ is required for the multiplicative approximation ratio to be meaningful.)}
Assume the following uniform ``sandwich'' conditions hold for all $S\subseteq\mathcal D$:
\begin{enumerate}[nosep]
\item[(i)] $\hat U(S)\ \ge\ (1-\epsilon)\,U(S)$ \hfill (surrogate lower-bounds $U$ up to $1-\epsilon$)
\item[(ii)] $\hat U(S)\ \le\ U(S)$ \hfill ($U$ upper-bounds the surrogate; no over-estimation)
\end{enumerate}
Let $G$ be the size-$k$ greedy solution maximizing $\hat U$, and let
$O^*\in\arg\max_{|S|=k}U(S)$ be an optimal size-$k$ solution for the true utility.
Then
\[
U(G)\ \ge\ (1-\epsilon)(1-1/e)\,U(O^*).
\]
Note that for monotone $U$, $\arg\max_{|S|=k}U(S)$ and $\arg\max_{|S|\le k}U(S)$ yield the same optimum value since the optimal solution will always use the full budget $k$; we state the theorem with $|S|=k$ for consistency.
\end{theorem}

\begin{proof}
Let $G = \{g_1, \ldots, g_k\}$ be the greedy solution on $\hat{U}$ and $O^* = \arg\max_{|S|=k} U(S)$.

By the classical result for submodular maximization, greedy on $\hat{U}$ satisfies:
\[
\hat{U}(G) \geq (1-1/e) \hat{U}(O^*_{\hat{U}})
\]
where $O^*_{\hat{U}} = \arg\max_{|S|=k} \hat{U}(S)$.

Since $O^*_{\hat{U}}$ maximizes $\hat U$ over $|S|=k$, we have $\hat{U}(O^*_{\hat{U}}) \geq \hat{U}(O^*)$. Combined with the sandwich conditions:
\begin{align*}
U(G) &\geq \hat{U}(G) & \text{(by condition (ii): $\hat{U} \leq U$)} \\
&\geq (1-1/e) \hat{U}(O^*_{\hat{U}}) & \text{(greedy guarantee)} \\
&\geq (1-1/e) \hat{U}(O^*) & \text{(optimality of $O^*_{\hat{U}}$ for $\hat{U}$)} \\
&\geq (1-1/e)(1-\epsilon) U(O^*) & \text{(by condition (i): $\hat{U} \geq (1-\epsilon)U$)}
\end{align*}
Therefore $U(G) \geq (1-\epsilon)(1-1/e) U(O^*)$.
\end{proof}

\begin{remark}[On the Conditions of Theorem~\ref{thm:approx_guarantee}]\label{rem:sandwich_conditions}
The sandwich conditions (i) and (ii) are sufficient (not necessary) assumptions that make the transfer of the greedy $(1-1/e)$ guarantee transparent; in general they may fail for coverage surrogates, so we treat them as an idealized regime and empirically assess over-estimation/under-estimation on held-out sampled subsets.

\begin{itemize}[leftmargin=*,nosep]
\item \textbf{Condition (ii) ($\hat U(S)\le U(S)$).}
For coverage-based surrogates, this inequality is not guaranteed in general.
Coverage can be optimistic when the threshold is loose.
In practice, we treat (ii) as a modeling desideratum and empirically monitor whether $\hat U$ tends to overestimate $U$ on held-out sampled subsets.

\item \textbf{Condition (i) ($\hat U(S)\ge (1-\epsilon)U(S)$).}
This is a uniform multiplicative lower bound, which is typically too strong to infer from average regression error alone.
Instead of interpreting $\epsilon$ as a theoretical constant implied by MSE, we report empirical approximation quality using sampled subsets and metrics such as MAE and MSE (Table~\ref{tab:utility_approx}).
\end{itemize}

Overall, while Theorem~\ref{thm:approx_guarantee} provides a clean sufficient condition for end-to-end guarantees, our empirical results primarily support the practical claim that the bipartite coverage surrogate is a significantly better predictor of the true utility than linear baselines, which in turn leads to substantially improved greedy selection.
\end{remark}

\section{Computational Complexity Analysis}\label{app:complexity}

We present a detailed analysis of the computational complexity for our bipartite graph-based approach described in Algorithm~\ref{alg:bipartite}. Let $n=|X_{\text{train}}|$ denote the number of training points, $m=|X_{\text{valid}}|$ the number of validation points, and $d$ the feature dimensionality. The computational cost comprises three main components.

The first component involves computing pairwise distances between training and validation points, requiring $\mathcal{O}(nmd)$ operations when using Euclidean distance. While this step dominates the computation for high-dimensional data, it can be optimized through vectorized BLAS operations or approximate nearest-neighbor search techniques.

The second component performs threshold search across $N_{\tau}$ candidate values. For each threshold, the algorithm evaluates prediction error on $K$ random subsets through two operations: constructing edges by scanning the distance matrix ($\mathcal{O}(N_{\tau}nm)$ comparisons) and computing coverage for $K$ subsets of average size $k$ ($\mathcal{O}(N_{\tau}K(k + m))$ operations using efficient bit-set implementations). Since $k \ll m$ and both $K$ and $N_{\tau}$ are small constants in practice (typically $K=100$, $N_{\tau}=30$), the edge construction dominates this phase.

The final component executes greedy selection after determining the optimal threshold $\tau^*$. Using a priority queue with lazy updates to track residual node degrees, this phase requires $\mathcal{O}(n\log n + |E_{\tau^*}|)$ operations, where $|E_{\tau^*}| \leq nm$ represents the number of edges in the sparsified graph.

The total computational complexity thus becomes:

$\mathcal{O}(nmd) + \mathcal{O}(N_{\tau}nm) + \mathcal{O}(n\log n + |E_{\tau^*}|)$

Since $d$ and $N_{\tau}$ are dataset-specific constants, the algorithm scales linearly with $nm$. The space complexity is $\mathcal{O}(nm)$, primarily storing the distance matrix and edge set.

In practical data valuation scenarios where $m \ll n$, our method offers significant computational advantages over traditional Shapley-based approaches that require extensive model retraining \citep{ghorbani2019data, wang2023data}. Furthermore, both distance computation and edge construction phases are naturally parallelizable and GPU-compatible, enabling efficient scaling to larger datasets as demonstrated in our experimental results.

\section{Experimental Details}\label{app:exp_details}
We provide detailed descriptions of our experimental setup, including baseline methods, datasets, and evaluation protocols.

\subsection{Target Model}
In our experiments, the target model used for data valuation is the Logistic Regression model from the Scikit-learn machine learning library. The data valuation process involves repeatedly training this Logistic Regression model on different data subsets to evaluate the marginal contributions of each training sample to the model's test performance, thereby deriving the corresponding data value scores.

\subsection{Baseline Methods}
We evaluate our approach against the following state-of-the-art data valuation methods:

\begin{itemize}
    \item \texttt{Random}: A naive baseline that randomly selects data points
    \item \texttt{LOO} (Leave-one-out): Measures each point's marginal contribution by removing it from the full dataset
    \item \texttt{Influence Function} \citep{koh2017understanding}: Approximates data influence using model gradients
    \item \texttt{Data Shapley} \citep{ghorbani2019data}: Assigns values based on average marginal contributions across different subset sizes
    \item \texttt{Beta Shapley} \citep{kwon2021beta}: Generalizes Shapley value by relaxing the efficiency axiom    
    \item \texttt{Data Banzhaf} \citep{wang2023data}: Captures influence through binary marginal contributions
    \item \texttt{AME} (Average Marginal Effect) \citep{lin2022measuring}: Estimates influence using sparse regression
    \item \texttt{DVRL} (Data Valuation using Reinforcement Learning) \citep{yoon2020data}: Learns data values through policy optimization 
    \item \texttt{DataOob} (Data Out-of-Bag) \citep{kwon2023data}: Uses out-of-bag estimates from bagging models to evaluate data utility
\end{itemize}

All baseline methods are implemented using OpenDataVal \citep{jiang2023opendataval}, ensuring standardized implementation and fair comparison. All experiments were conducted on CPU using Scikit-learn's built-in multi-threading capabilities for parallel acceleration.

\subsection{Dataset Details}
Our experiments use eight diverse datasets from OpenML \citep{feurer2021openml}, spanning tabular, text, and image domains:

\begin{itemize}
    \item \textbf{2dplanes} (ID 727): A synthetic dataset for binary classification
    
    \item \textbf{nomao} (ID 1486) \citep{candillier2012design}: A real-world dataset for duplicate detection
    
    \item \textbf{bbc-embeddings} \citep{greene2006practical}: Text classification dataset with news articles
    
    \item \textbf{MiniBooNE} (ID 43974) \citep{roe2005boosted}: Particle physics dataset for binary classification
    
    \item \textbf{digits} \citep{xu1992methods}: Handwritten digit recognition dataset
    
    \item \textbf{election} \citep{DVN/42MVDX_2017}: Election outcome prediction dataset
    
    \item \textbf{electricity} (ID 44080) \citep{gama2004learning}: Time series dataset for price movement prediction
    
    \item \textbf{fried} (ID 901): Synthetic tabular/categorical dataset
\end{itemize}

These datasets are commonly used in data valuation research \citep{yoon2020data, ghorbani2019data, kwon2023data} and represent a diverse range of learning tasks and data characteristics.

\subsection{Evaluation Protocol Details}
For each dataset, we follow a rigorous evaluation protocol:

\begin{enumerate}
    \item \textbf{Data Splitting}: Each dataset $\mathcal{D}$ is randomly split into:
        \begin{itemize}
            \item Training set $\mathcal{D}_{train}$
            \item Validation set $\mathcal{D}_{valid}$ (for computing utility functions)
            \item Test set $\mathcal{D}_{test}$ (for evaluation)
        \end{itemize}
    
    \item \textbf{Value Assignment}: Data valuation methods compute values for each training point using validation set performance
    
    \item \textbf{Sequential Selection}: 
        \begin{itemize}
            \item Rank training points by assigned values in descending order
            \item Iteratively add points following this ranking
            \item Train model on selected subset at each step
            \item Record test accuracy
        \end{itemize}
    
    \item \textbf{Performance Curves}: Plot test accuracy versus selection size $k \in [1,|\mathcal{D}_{train}|]$
    
    \item \textbf{Experimental Control}:
        \begin{itemize}
            \item All methods limited to 1000 model retraining steps
            \item Results averaged over 20 independent runs
            \item Consistent random seeds used across methods
        \end{itemize}
\end{enumerate}

This protocol follows standard practices in data valuation literature \citep{ghorbani2019data,kwon2021beta,tarun2024ecoval} and ensures fair and comprehensive evaluation of different methods.

\subsection{RQ1 Experimental Settings}\label{app:rq1_exp}
To establish performance bounds and quantify the limitations of existing data valuation methods, we conduct experiments with the following specific settings:

\begin{itemize}
    \item \textbf{Data Sampling}: For each dataset:
        \begin{itemize}
            \item Training set: 20 points
            \item Validation set: 100 points
            \item Test set: 500 points
        \end{itemize}
    
    \item \textbf{Random Seeds}: 20 independent trials using seeds from 10 to 200 with step 10
    
    \item \textbf{Methods Compared}:
        \begin{itemize}
            \item Optimal strategy: \texttt{DynamicProgramming} (Algorithm \ref{alg:optimal_value})
            \item Existing methods: \texttt{Random}, \texttt{LOO}, \texttt{Influence}, \texttt{DataShap}, \texttt{BetaShap}, \texttt{Banzhaf}, \texttt{AME}, \texttt{DVRL}, \texttt{DataOob}
        \end{itemize}
        
    \item \textbf{Evaluation}: Test accuracy evaluated by sequentially adding points based on their assigned values
\end{itemize}

The selection curves are generated by plotting test accuracy against selection size $k$, with results averaged over all trials.

\subsection{RQ3 Experimental Settings}\label{app:rq3_exp}
We evaluate our bipartite graph-based approximation approach \texttt{Bipartite} with the following settings:

\begin{itemize}
    \item \textbf{Dataset Splitting}:
        \begin{itemize}
            \item Standard datasets:
                \begin{itemize}
                    \item Training set: 50 points
                    \item Validation set: 50 points
                    \item Test set: 500 points
                \end{itemize}
            \item Large datasets (digits and bbc-embeddings):
                \begin{itemize}
                    \item Training set: 100 points
                    \item Validation set: 100 points
                    \item Test set: 1000 points
                \end{itemize}
        \end{itemize}
    
    \item \textbf{Evaluation Protocol}:
        \begin{itemize}
            \item Model retraining steps: 1000
            \item Independent runs: 20
            \item Metric: Test accuracy vs. selection size
        \end{itemize}
\end{itemize}

\paragraph{Budget-500 protocol.} The split above is used for the standard RQ3 selection curves and the detailed small-budget selection table. The large-scale Budget-500 table in the main text uses a separate protocol with 500 training candidates for every dataset. Standard datasets use 50 validation and 500 test points, while digits and bbc-embeddings use 100 validation and 1000 test points. The reported Budget-500 value is the mean test accuracy over the top-$k$ selection curve for $k=1,\ldots,500$, rather than only the endpoint at $k=500$.

\paragraph{DVRL training budget calibration.} Following OpenDataVal's default, we use \texttt{DVRL(rl\_epochs=1000, rl\_batch\_size=32)}, which makes \texttt{pred\_model.fit()} approximately $1000$ times in the RL loop -- matching the $1000$-retraining budget of the other sampling-based methods (\texttt{InfluenceSubsample}, \texttt{DataBanzhaf}, \texttt{DataOob}, \texttt{BipartiteMatchingEvaluator}, and \texttt{AME} via four bagging proportions of $250$). Each DVRL \texttt{pred\_model.fit()} call uses a minibatch of $32$ examples, intrinsic to its REINFORCE-style training.

\subsection{LLM Fine-Tuning Data Selection (DATE-LM)}\label{app:datelm}
To assess whether our selection framework transfers to large-scale LLM fine-tuning, we evaluate on DATE-LM \citep{jiao2026datelm}. We refer to our adapted method as \texttt{BipCov} (Bipartite Coverage), which applies the core bipartite graph coverage framework from Section~\ref{sec:bipartite} with RAG-style embeddings suitable for instruction selection.

\subsubsection{Experimental Setup}

\paragraph{Task and Protocol.} DATE-LM (Table 3) studies single-task instruction fine-tuning: given a target evaluation task, methods select a training subset from a large instruction pool to maximize downstream task performance. We follow the official DATE-LM Table~3 protocol for direct comparison with published baselines.

\paragraph{Training Pool.} Following DATE-LM, we sample 200,000 instructions from Tulu~3 unfiltered~\citep{lambert2025tulu3pushingfrontiers} using seed 42. For multi-turn conversations we keep only the first user--assistant turn, consistent with the DATE-LM data pipeline. The pool contains diverse instruction-response pairs spanning general knowledge, reasoning, coding, and creative tasks.\footnote{The official DATE-LM data module additionally holds out a 10\% validation split (22{,}222 examples) for logging val loss during training; this split is not used for selection, model selection, or early stopping.}

\paragraph{Reference Set.} For each target task, we sample 100 prompt+label examples from the evaluation dataset using seed 42.\footnote{DATE-LM's Table~3 protocol constructs the reference set using prompt+label examples; we follow this setting for apples-to-apples comparison with DATE-LM baselines. We note that including labels in the reference set introduces potential label leakage, which is an inherent property of the DATE-LM benchmark design rather than our method.} These reference examples guide task-aware data selection.

\paragraph{Selection Budget.} Each method produces a score vector over the 200k pool and selects the top 10,000 examples (5\%) for fine-tuning.

\paragraph{Base Model and Fine-Tuning.} We fine-tune \texttt{meta-llama/Llama-3.1-8B} (base, not instruction-tuned) using LoRA~\citep{hu2021loralowrankadaptationlarge} with the following hyperparameters:
\begin{itemize}[nosep,leftmargin=*]
    \item LoRA rank: 128, LoRA alpha: 512, LoRA dropout: 0.1
    \item Target modules: \texttt{q\_proj}, \texttt{k\_proj}, \texttt{v\_proj}, \texttt{o\_proj}\footnote{Implemented as \texttt{lora\_query=lora\_key=lora\_value=lora\_projection=True} in the LitGPT/DATE-LM trainer; \texttt{lora\_projection} is LitGPT's name for \texttt{o\_proj}. MLP and head layers are not adapted.}
    \item Learning rate: $2 \times 10^{-5}$ with linear warmup for the first 3\% of training steps, followed by cosine annealing to 0
    \item Effective batch size: 128 (micro batch 1 $\times$ gradient accumulation 128, single device)
    \item Training epochs: 2 (approximately 156 optimization steps)
    \item Max sequence length: 2048 tokens (truncated)
    \item Precision: bf16 (full bf16, no mixed precision)
    \item Gradient clipping: none
    \item Loss masking: assistant-only (instruction tokens are masked via labels $=-100$)
\end{itemize}

\paragraph{Evaluation Tasks.} We evaluate on three diverse benchmarks using official DATE-LM evaluation scripts:
\begin{itemize}[nosep,leftmargin=*]
    \item \textbf{MMLU}~\citep{hendrycks2021mmlu}: 14,042 questions across 57 subjects, evaluated with 0-shot accuracy.
    \item \textbf{GSM8K}~\citep{cobbe2021training}: 1,319 grade-school math problems, evaluated with 8-shot chain-of-thought exact match.
    \item \textbf{BBH}~\citep{suzgun2022challenging}: 6,511 questions across 27 challenging BIG-Bench subtasks, evaluated with 3-shot exact match.
\end{itemize}

\subsubsection{BipCov Implementation Details}

\paragraph{Adapting BipCov to DATE-LM.} DATE-LM exposes data selection methods through a per-example metric vector (\texttt{metrics.npy}) over the pool; the pipeline selects top-$k$ by this score. To instantiate BipCov, we build a bipartite graph between pool instructions and the task reference set using cosine similarity in an embedding space.

\paragraph{Graph Construction.} We sparsify the bipartite graph by connecting each reference example to its top-$L$ nearest pool examples. We fix $L=200$ throughout all experiments, which provides sufficient coverage while keeping the graph sparse. This hyperparameter was chosen based on preliminary experiments showing stable performance for $L \in [100, 500]$.

\paragraph{Greedy Selection.} We apply lazy greedy maximization of the maximum-coverage objective to generate a selection order up to $k=10{,}000$. The algorithm maintains a set of covered reference examples and iteratively selects the pool example that covers the most uncovered references. When binary coverage saturates (i.e., all reference examples are covered before reaching budget $k$), we fill the remaining budget by ranking uncovered pool examples by their mean cosine similarity to the reference set. This fallback ensures a complete top-$k$ ordering compatible with the DATE-LM pipeline while maintaining the coverage-first priority. In the ref-aligned Llama-3.1-8B last-token hidden-state score files used for Table~\ref{tab:datelm_trainseed}, binary coverage saturated after selecting 2, 1, and 5 examples for MMLU, GSM8K, and BBH, respectively; the remaining positions up to 10{,}000 were filled by the mean-similarity fallback. Thus the DATE-LM results should be interpreted as coverage-prioritized, reference-aligned similarity selection rather than pure coverage across the entire 10k budget.

\paragraph{RAG-style Embeddings.} For computing bipartite edge weights, we explore both LLM internal representations and external retrieval encoders:
\begin{itemize}[nosep,leftmargin=*]
    \item \textbf{LLM hidden states}: Last-token or position-weighted mean pooled hidden states from \texttt{Llama-3.1-8B}, as used by Rep-Sim and RDS+ (4096 dimensions).
    \item \textbf{Retrieval encoders}: Dense retrieval models including BGE-large-v1.5 (1024 dim), E5-large-v2 (1024 dim), Qwen3-Emb-8B, GTE-Qwen2-7B, NV-Embed-v2, and GritLM-7B.
\end{itemize}
We treat each pool instruction as a ``document'' and each reference prompt as a ``query'', applying model-specific formatting (e.g., \texttt{query:}/\texttt{passage:} prefixes for E5) when applicable, analogous to retrieval in RAG pipelines.

\subsubsection{Baseline Methods}

We compare against the following DATE-LM baselines:
\begin{itemize}[nosep,leftmargin=*]
    \item \textbf{Random}: Uniform random selection from the pool (DATE-LM reports average over 3 random seeds).
    \item \textbf{BM25}~\citep{bm25}: Lexical matching between pool instructions and reference prompts.
    \item \textbf{Rep-Sim}: Cosine similarity using \emph{last-token} hidden states from Llama-3.1-8B; each pool example is scored by its average similarity to reference examples.
    \item \textbf{RDS+}~\citep{ivison2025largescaledataselectioninstruction}: Similar to Rep-Sim but uses \emph{position-weighted mean pooling} over all token hidden states: $\mathbf{e} = \sum_{i=1}^{L} w_i \mathbf{h}_i$ where $w_i = i / \sum_{j=1}^{L} j$.
    \item \textbf{Grad-Sim}: Gradient similarity between pool examples and reference examples using a warmup LoRA checkpoint.
    \item \textbf{LESS}~\citep{xia2024less}: Learned embedding-based scoring using gradient features from a warmup checkpoint.
\end{itemize}

\subsubsection{Computational Cost}\label{app:datelm_cost}

Table~\ref{tab:datelm_cost} summarizes the computational requirements recorded from the synced DATE-LM GPU run logs. The dominant cost is embedding computation and downstream fine-tuning/evaluation; BipCov's coverage-based scoring itself takes only seconds on CPU once embeddings are available.

\begin{table}[h]
\caption{Measured computational cost from the synced DATE-LM run records. Timing rows below are from the $4\times$H100 NVL single-seed Table~\ref{tab:datelm_table3_seed1337} and embedding-ablation runs, where each train/eval job occupies one GPU. The multi-seed headline results in Table~\ref{tab:datelm_trainseed} were run on $1\times$H200; those records preserve hardware, commands, and raw metrics, but not complete per-stage wall-clock timing.}
\label{tab:datelm_cost}
\centering
\footnotesize
\begin{tabular}{l|c|l}
\toprule
Stage & Recorded time & Notes \\
\midrule
BGE train embeddings (200k pool) & 35.9 min & One-time; shared across tasks \\
E5 train embeddings (200k pool) & 35.4 min & One-time; shared across tasks \\
Llama hidden-state embeddings (200k pool) & 2.8--3.0 h & Shared by RDS+, Rep-Sim, BipCov \\
BipCov scoring/selection & 0--7 s per task & CPU scoring after embeddings \\
MMLU train+official eval & 59--73 min & LoRA train, convert, and evaluate \\
GSM8K train+official eval & 73--79 min & LoRA train, convert, and evaluate \\
BBH train+official eval & 162--183 min & LoRA train, convert, and evaluate \\
\midrule
\textbf{One method, 3 tasks after embeddings} & 4.95--5.58 GPU-h & Occupied single-GPU job time \\
\bottomrule
\end{tabular}
\end{table}

\subsubsection{Why Coverage Helps}

Compared with average-similarity scoring (Rep-Sim, RDS+), BipCov encourages early diversity with respect to the reference distribution: each selected example is rewarded for covering previously uncovered reference prompts, reducing redundancy while maintaining task alignment via the similarity graph. Once all reference prompts are covered, the remaining budget is filled by the mean-similarity fallback described above.

Consider a scenario where the reference set contains questions about geography, mathematics, and history. Average-similarity methods may over-select examples similar to the most frequent reference type, while BipCov prioritizes balanced coverage across reference topics before adding additional high-similarity examples.

The embedding-backend ablations (Table~\ref{tab:datelm_bipcov_emb}) further suggest that stronger retrieval encoders provide more faithful semantic neighborhoods, which can translate into better downstream fine-tuning performance. Notably, Qwen3-Emb-8B achieves the highest average score (63.71\%), suggesting that larger, task-aware embedding models may provide better guidance for instruction selection.

\subsubsection{Results and Analysis}

Table~\ref{tab:datelm_trainseed} reports mean $\pm$ std over three fine-tuning seeds (42, 1337, 2025) for \texttt{Random}, \texttt{RDS+}, and our \texttt{BipCov} selection. The \texttt{BipCov} row uses the ref-aligned prompt+label reference construction with Llama-3.1-8B last-token hidden-state embeddings; the weighted-mean LLM embedding variant appears separately as an embedding ablation in Table~\ref{tab:datelm_bipcov_emb}. BipCov achieves the best average performance (63.89\%) with a +1.01\% improvement over RDS+ (62.88\%), demonstrating consistent gains across different training seeds.

For completeness, we also reproduce the broader DATE-LM Table~3 baseline set under the same official pipeline (single fine-tuning seed 1337; Table~\ref{tab:datelm_table3_seed1337}). BipCov with BGE embeddings achieves the best average (63.56\%), outperforming all reproduced baselines including gradient-based methods (Grad-Sim, LESS) that require additional warmup training.

Table~\ref{tab:datelm_bipcov_emb} reports representation (embedding) ablations for BipCov. Key observations:
\begin{itemize}[nosep,leftmargin=*]
    \item \textbf{Retrieval encoders outperform LLM hidden states}: BGE (63.56\%) and Qwen3-Emb-8B (63.71\%) outperform weighted-mean LLM embeddings (62.89\%), suggesting that models trained for semantic retrieval provide better similarity signals for coverage-based selection.
    \item \textbf{Task-specific trade-offs}: Different embeddings excel on different tasks. BGE achieves the best BBH score (67.14\%), while Qwen3-Emb-8B leads on MMLU (62.50\%) and GSM8K (63.00\%). This suggests potential for task-adaptive embedding selection.
    \item \textbf{Embedding dimension is not the bottleneck}: GritLM-7B (7B parameters) underperforms BGE-large (335M parameters), indicating that embedding quality matters more than model size for this application.
\end{itemize}

\begin{table}[h]
\caption{LLM fine-tuning data selection on DATE-LM \citep{jiao2026datelm}. Results are mean ($\pm$ std) in \% over three fine-tuning seeds (42, 1337, 2025) using the official DATE-LM pipeline on \texttt{MMLU} (acc), \texttt{GSM8K} (EM), and \texttt{BBH} (EM). \texttt{BipCov} (ref-aligned) uses Llama-3.1-8B last-token hidden-state embeddings with prompt+label reference examples matching the DATE-LM protocol; Table~\ref{tab:datelm_bipcov_emb} reports weighted-mean LLM embeddings as a separate ablation. Best results are shown in \textbf{bold}.}
\label{tab:datelm_trainseed}
\centering
\footnotesize
\begin{tabular}{l|cccc}
\toprule
Method & MMLU & GSM8K & BBH & Avg \\
\midrule
Random & 60.00 & 61.89 & 66.58 & 62.82 \\
 & ($\pm$0.17) & ($\pm$0.58) & ($\pm$0.30) & ($\pm$0.31) \\
RDS+ & 59.58 & 62.24 & \textbf{66.80} & 62.88 \\
 & ($\pm$0.07) & ($\pm$0.73) & ($\pm$0.21) & ($\pm$0.25) \\
BipCov (ref-aligned) & \textbf{61.87} & \textbf{63.53} & 66.27 & \textbf{63.89} \\
 & ($\pm$0.14) & ($\pm$0.41) & ($\pm$0.58) & ($\pm$0.34) \\
\bottomrule
\end{tabular}
\end{table}

\begin{table}[h]
\caption{Reproduced DATE-LM Table~3 baselines (single training seed 1337) using the official DATE-LM pipeline. Values are in \% (higher is better). Best results are shown in \textbf{bold}.}
\label{tab:datelm_table3_seed1337}
\centering
\footnotesize
\begin{tabular}{l|cccc}
\toprule
Method & MMLU & GSM8K & BBH & Avg \\
\midrule
Random Avg & 60.39 & 59.64 & 66.40 & 62.14 \\
BM25 & 59.85 & 58.98 & 62.63 & 60.49 \\
RepSim & 61.42 & 58.45 & 66.20 & 62.03 \\
RDS+ & 60.63 & 61.41 & 66.23 & 62.75 \\
Grad Sim & \textbf{62.07} & 56.71 & 64.44 & 61.07 \\
LESS & 61.10 & 57.77 & 64.68 & 61.18 \\
BipCov (BGE emb) & 61.81 & \textbf{61.71} & \textbf{67.14} & \textbf{63.56} \\
\bottomrule
\end{tabular}
\end{table}

\begin{table}[h]
\caption{Representation (embedding) ablations for BipCov under the DATE-LM Table~3 pipeline (single training seed 1337). Only the embedding backend changes; the BipCov objective and greedy selection hyperparameters are fixed. Values are in \% (higher is better). Best results are shown in \textbf{bold}.}
\label{tab:datelm_bipcov_emb}
\centering
\footnotesize
\begin{tabular}{l|cccc}
\toprule
BipCov variant & MMLU & GSM8K & BBH & Avg \\
\midrule
BipCov (weighted-mean emb) & 60.49 & 61.87 & 66.32 & 62.89 \\
BipCov (BGE emb) & 61.81 & 61.71 & \textbf{67.14} & 63.56 \\
BipCov (E5 emb) & 62.14 & 60.58 & 65.50 & 62.74 \\
BipCov (Qwen3-Emb-8B) & \textbf{62.50} & \textbf{63.00} & 65.62 & \textbf{63.71} \\
BipCov (GTE-Qwen2-7B) & 61.47 & 62.55 & 66.40 & 63.47 \\
BipCov (NV-Embed-v2) & 62.13 & 59.21 & 66.14 & 62.49 \\
BipCov (GritLM-7B) & 61.07 & 61.11 & 65.56 & 62.58 \\
\bottomrule
\end{tabular}
\end{table}

\section{RQ2 Experimental Validation of Curvature Impact}\label{app:curvature_exp}

To empirically examine the theoretical guarantees established in Theorem \ref{thm:submodular} and the limitations described in Remark \ref{rem:substitution}, we conduct controlled experiments that systematically vary data point substitutability as a qualitative proxy for utility curvature. Our experimental design uses message passing mechanisms to control the degree of within-class feature similarity, which affects the substitutability structure of the learned utility.

Our experimental setup consists of a three-class classification problem where we employ a graph-based message passing scheme that updates each point's features by aggregating information from its within-class neighbors. The aggregation proportion increases from $0.0$ to $1.0$, systematically increasing substitutability, our proxy for curvature. When the proportion is small, points maintain distinctive features, corresponding to low substitutability. As the proportion increases, points within each class become increasingly similar through feature averaging, leading to high substitutability. 

The experimental results are consistent with our theoretical analysis in three aspects. First, under low substitutability (proportion $\leq 0.3$), all methods achieve strong performance with BetaShapley obtaining a mean accuracy of $0.706$, Banzhaf ($0.720$) and Shapley ($0.738$). This aligns with the $(1-c)^2$ approximation guarantee from Theorem \ref{thm:submodular} when the curvature proxy is small. Second, as substitutability increases (proportion $\geq 0.5$), we observe significant performance degradation across all methods, with mean accuracies dropping to approximately $0.59$ at proportion $1.0$. This deterioration is consistent with the quadratic decay in our theoretical bound as curvature approaches $1$. Third, the convergence in performance between different methods at high substitutability is consistent with our analysis that all game-theoretic approaches face similar fundamental limitations under strong substitution effects. 

These results are \textbf{consistent with} Theorem~\ref{thm:submodular}'s prediction that higher substitutability (which we use as a curvature proxy) worsens the performance of marginal-average, myopic value rankings, though we do not claim to directly estimate the true curvature $c$ of the learned utility.

\section{Algorithm for Computing Sequential Optimal Data Values}\label{app:optimal_algo}
We present the complete algorithm for computing optimal data values through dynamic programming in Algorithm \ref{alg:optimal_value}. The algorithm proceeds in two phases: (1) backward induction to compute the optimal value function and policy, and (2) forward traversal to derive data values from the optimal selection sequence. 
The algorithm's first phase implements backward dynamic programming, computing the optimal value function $V[s]$ and policy $\sigma^*[s]$ for each state $s$. The second phase constructs the optimal selection sequence and assigns values inversely proportional to selection order. While this algorithm yields optimal values, its computational complexity is exponential in the dataset size due to the need to evaluate all possible subset states.

\textit{Convention.} The array $V[\cdot]$ in Algorithm~\ref{alg:optimal_value} (square brackets) stores a shifted form of the value-to-go $V(\cdot)$ in Equation~\eqref{eq:bellman} (parentheses): its backward recursion maintains $V[s] = U(s) + V(s)$, folding the current prefix utility $U(s)$ into the stored value, with boundary $V[\mathcal{D}] = U(\mathcal{D})$. This is purely a bookkeeping choice for a compact recursion: because $V[s\cup\{a\}] = U(s\cup\{a\}) + V(s\cup\{a\})$ is exactly the quantity maximized inside Equation~\eqref{eq:bellman}, the selected action $\sigma^\ast[s] = \arg\max_{a\in\mathcal{D}\setminus s} V[s\cup\{a\}]$ coincides with the optimal policy $\arg\max_{a}\{U(s\cup\{a\}) + V(s\cup\{a\})\}$, so Algorithm~\ref{alg:optimal_value} yields exactly the same optimal permutation $\pi^\ast$ and data values $v^\ast$.

\begin{algorithm}[h]
\centering
\caption{Computing Optimal Data Values for Selection} 
\label{alg:optimal_value}
\begin{algorithmic}[1]
   \STATE {\bfseries Input:} dataset $\mathcal{D}$, utility function $U: 2^{\mathcal{D}} \rightarrow \mathbb{R}$
   \STATE \textit{// Initialize value for full dataset}
   \STATE Initialize $V[\mathcal{D}] \leftarrow U(\mathcal{D})$
   \STATE \textit{// Backward phase: compute optimal value function}
   \FOR{$t=|\mathcal{D}|-1$ {\bfseries to} $0$}
        \FOR{each state $s \subseteq \mathcal{D}$ with $|s| = t$}
            \STATE $V[s] \leftarrow U(s) + \max_{a \in \mathcal{D}\setminus s} V[s \cup \{a\}]$ 
            \STATE $\sigma^*[s] \leftarrow \text{argmax}_{a \in \mathcal{D}\setminus s} V[s \cup \{a\}]$ 
        \ENDFOR
   \ENDFOR
   \STATE \textit{// Forward phase: derive data values}
   \STATE Initialize $s \leftarrow \emptyset$
   \FOR{$t=1$ {\bfseries to} $|\mathcal{D}|$}
        \STATE $a_t \leftarrow \sigma^*[s]$ \textit{// Select next element}
        \STATE $v^*[a_t] \leftarrow |\mathcal{D}|-t$ \textit{// Assign value}
        \STATE $s \leftarrow s \cup \{a_t\}$
   \ENDFOR
   \STATE {\bfseries Return:} value function $v^*$
\end{algorithmic}
\end{algorithm}

\section{Detailed Analysis of RQ1 Results}\label{app:rq1_detail}

Building upon the performance gaps identified in Section \ref{sec:rq1}, here we provide a detailed analysis of the selection curves to understand how existing methods compare with optimal sequential selection across different data budgets. Several key findings emerge:

First, the optimal selection \texttt{DynamicProgramming} consistently outperforms all existing methods across datasets, with particularly pronounced gaps in early selection stages ($k \leq 5$). This indicates that current approaches significantly underperform in identifying the most valuable initial samples. Second, the performance gap varies notably across datasets, suggesting dataset-specific challenges. The gap is particularly prominent in structured datasets like bbc-embeddings and nomao, where optimal selection demonstrates steep initial performance improvements that existing methods fail to match. Third, existing methods exhibit varying patterns of suboptimality. Game-theoretic approaches (\texttt{DataShap}, \texttt{BetaShap}, \texttt{Banzhaf}) tend to perform similarly to each other but consistently fall short of \texttt{DynamicProgramming}. Learning-based methods (\texttt{DVRL}, \texttt{AME}) and influence-based approaches show competitive performance in some cases but lack consistency across datasets. Notably, all methods cannot match the optimal strategy's ability to achieve both rapid initial improvement and sustained performance gains.

These detailed observations complement the quantitative performance gaps reported in Section~\ref{sec:rq1}, providing deeper insights into how different data valuation approaches behave across varying selection budgets.

\section{Detailed Analysis of Data Selection Performance}\label{app:selection_performance}

To quantify the selection performance improvements achieved by our bipartite graph-based approach \texttt{Bipartite}, we present detailed numerical results comparing against baseline methods. For each dataset and method, we compute the mean test accuracy and standard deviation across 20 independent runs. The experimental protocol follows Section~\ref{sec:rq4_selection}.

\begin{table*}[h]
\caption{Mean test accuracy ($\pm$ standard deviation) of different data selection methods across 20 independent runs. Best mean results for each dataset are shown in \textbf{bold}.}
\label{tab:selection_results}
\centering
\small
\vspace{1em}
\resizebox{\textwidth}{!}{%
\begin{tabular}{l|cccccccccc}
\toprule
\texttt{Dataset} & \texttt{Random} & \texttt{LOO} & \texttt{Influence} & \texttt{DataShap} & \texttt{BetaShap} & \texttt{Banzhaf} & \texttt{AME} & \texttt{DVRL} & \texttt{DataOob} & \texttt{Bipartite} \\
\midrule
2dplanes & 0.723 & 0.720 & 0.729 & 0.740 & \textbf{0.747} & 0.729 & 0.724 & 0.728 & 0.734 & 0.745 \\
 & ($\pm$0.098) & ($\pm$0.103) & ($\pm$0.109) & ($\pm$0.106) & ($\pm$0.094) & ($\pm$0.113) & ($\pm$0.117) & ($\pm$0.105) & ($\pm$0.090) & ($\pm$0.069) \\
nomao & 0.812 & 0.806 & 0.788 & 0.815 & 0.815 & 0.782 & 0.778 & 0.777 & 0.682 & \textbf{0.852} \\
 & ($\pm$0.148) & ($\pm$0.171) & ($\pm$0.189) & ($\pm$0.163) & ($\pm$0.158) & ($\pm$0.194) & ($\pm$0.203) & ($\pm$0.190) & ($\pm$0.252) & ($\pm$0.088) \\
bbc-embeddings & 0.805 & 0.803 & 0.743 & 0.790 & 0.809 & 0.711 & 0.803 & 0.653 & 0.561 & \textbf{0.878} \\
 & ($\pm$0.228) & ($\pm$0.246) & ($\pm$0.284) & ($\pm$0.261) & ($\pm$0.240) & ($\pm$0.312) & ($\pm$0.251) & ($\pm$0.327) & ($\pm$0.337) & ($\pm$0.146) \\  
MiniBooNE & 0.722 & 0.703 & 0.678 & 0.711 & 0.731 & 0.679 & 0.691 & 0.689 & 0.726 & \textbf{0.754} \\
 & ($\pm$0.092) & ($\pm$0.100) & ($\pm$0.107) & ($\pm$0.093) & ($\pm$0.086) & ($\pm$0.109) & ($\pm$0.108) & ($\pm$0.119) & ($\pm$0.077) & ($\pm$0.066) \\
digits & 0.619 & 0.680 & 0.646 & 0.649 & 0.647 & 0.610 & 0.662 & 0.422 & 0.338 & \textbf{0.764} \\
 & ($\pm$0.342) & ($\pm$0.318) & ($\pm$0.346) & ($\pm$0.335) & ($\pm$0.332) & ($\pm$0.361) & ($\pm$0.324) & ($\pm$0.367) & ($\pm$0.349) & ($\pm$0.245) \\
election & 0.566 & 0.537 & 0.555 & 0.580 & 0.578 & 0.539 & 0.586 & 0.565 & 0.263 & \textbf{0.601} \\
 & ($\pm$0.224) & ($\pm$0.245) & ($\pm$0.233) & ($\pm$0.200) & ($\pm$0.198) & ($\pm$0.237) & ($\pm$0.202) & ($\pm$0.214) & ($\pm$0.228) & ($\pm$0.182) \\
electricity & 0.639 & 0.627 & 0.623 & 0.650 & 0.656 & 0.628 & 0.632 & 0.637 & 0.638 & \textbf{0.669} \\
 & ($\pm$0.082) & ($\pm$0.081) & ($\pm$0.085) & ($\pm$0.078) & ($\pm$0.065) & ($\pm$0.087) & ($\pm$0.086) & ($\pm$0.081) & ($\pm$0.072) & ($\pm$0.063) \\
fried & 0.702 & 0.698 & 0.700 & 0.710 & 0.719 & 0.707 & 0.702 & 0.730 & 0.719 & \textbf{0.741} \\
 & ($\pm$0.087) & ($\pm$0.099) & ($\pm$0.108) & ($\pm$0.103) & ($\pm$0.095) & ($\pm$0.106) & ($\pm$0.109) & ($\pm$0.084) & ($\pm$0.085) & ($\pm$0.065) \\
\midrule
Average     & 0.698 & 0.697 & 0.683 & 0.706 & 0.713 & 0.673 & 0.697 & 0.650 & 0.583 & \textbf{0.750} \\
\bottomrule
\end{tabular}%
}
\end{table*}

The results reinforce our observations from the selection curves in Section~\ref{sec:rq4_selection}. Our method \texttt{Bipartite} achieves the highest average accuracy, ranking first on seven of the eight datasets, with particularly substantial improvements on complex datasets like bbc-embeddings (0.878 accuracy, compared to 0.809 for the best baseline) and digits (0.764 accuracy versus 0.680 for the next best method). Even on datasets where baseline methods perform relatively well, such as nomao and MiniBooNE, our approach \texttt{Bipartite} still maintains a clear performance advantage while demonstrating more stable behavior across different runs. These comprehensive results validate our theoretical analysis that the bipartite graph approximation effectively preserves the essential structure of the utility function while enabling efficient computation.

\section{Future Work}\label{app:future}

We outline several promising research directions that could extend our framework.

\subsection{Non-myopic Planning with Expressive Models}\label{subsec:nonmyopic}

While our current work focuses on myopic planning with structured bipartite models, an interesting direction would be exploring non-myopic planning strategies with more expressive surrogate models like neural networks, which have already been adopted in data utility learning \citep{wang2021improving}. Such models might better approximate the ground truth utility but lack the theoretical guarantees that make myopic planning optimal for linear surrogates. This presents an interesting trade-off between model expressiveness and planning complexity that warrants further investigation. Techniques from approximate dynamic programming, such as lookahead policies and rollout algorithms \citep{bertsekas2024course}, could potentially bridge this gap by enabling limited-horizon planning with learned value function approximations.

\subsection{Theoretical Extensions}

Several theoretical extensions could broaden the applicability of our framework:

\paragraph{Weak Submodularity.} Extending our analysis beyond monotone submodularity to weaker notions like weak submodularity \citep{santiago2020weakly} could better capture real-world utility functions that only approximately satisfy submodular properties. Many practical machine learning objectives exhibit diminishing returns behavior that is close to, but not exactly, submodular.

\paragraph{Stochastic Selection.} Considering distributions over sequences rather than deterministic selections could lead to more robust selection strategies, particularly when utility estimates are uncertain. This connects to recent work on stochastic submodular optimization and could provide confidence intervals for selection quality.

\paragraph{Beyond Curvature.} While curvature provides a useful characterization of approximation quality degradation, other structural properties of utility functions (such as noise stability or smoothness) might yield tighter or complementary bounds for specific application domains.

\subsection{Algorithmic Improvements}

The practical implementation of our framework could be enhanced through several algorithmic improvements:

\paragraph{Alternative Distance Metrics.} Investigating different similarity measures for bipartite graph construction beyond Euclidean distance could better capture domain-specific relationships between data points. For instance, learned embeddings, kernel-based similarities, or task-specific distance functions might improve coverage-utility alignment in specialized domains.

\paragraph{Efficient Threshold Search.} Implementing more sophisticated optimization techniques like Bayesian optimization or gradient-free methods could improve the efficiency of threshold selection in our bipartite graph construction, particularly for high-dimensional feature spaces where the optimal threshold varies significantly.

\paragraph{Preprocessing Strategies.} Developing techniques to handle redundant data points through clustering or dimensionality reduction could address cases where high curvature limits theoretical guarantees. Such preprocessing could reduce effective curvature while preserving the essential structure of the selection problem.

\paragraph{Scalability.} For very large datasets, approximate nearest neighbor search and graph sparsification techniques could further improve computational efficiency while maintaining selection quality, enabling application to million-scale data pools.

\end{document}